\def\eqref#1{equation~\ref{#1}}
\def\1{\bm{1}}
\def\rvd{{\mathbf{d}}}
\def\rvf{{\mathbf{f}}}
\def\rvg{{\mathbf{g}}}
\def\rvk{{\mathbf{k}}}
\def\rvn{{\mathbf{n}}}
\def\rvv{{\mathbf{v}}}
\def\rvx{{\mathbf{x}}}
\def\rvy{{\mathbf{y}}}
\def\rvz{{\mathbf{z}}}
\def\vmu{{\bm{\mu}}}
\def\vtheta{{\bm{\theta}}}
\def\mI{{\bm{I}}}
\DeclareMathAlphabet{\mathsfit}{\encodingdefault}{\sfdefault}{m}{sl}
\SetMathAlphabet{\mathsfit}{bold}{\encodingdefault}{\sfdefault}{bx}{n}
\def\gJ{{\mathcal{J}}}
\def\gN{{\mathcal{N}}}
\def\gO{{\mathcal{O}}}
\def\sB{{\mathbb{B}}}
\newcommand{\pdata}{p_{\rm{data}}}
\newcommand{\E}{\mathbb{E}}
\newcommand{\R}{\mathbb{R}}
\newcommand{\Var}{\mathrm{Var}}
\DeclareMathOperator*{\argmin}{arg\,min}
\Crefname{figure}{Fig.}{Figs.}
\Crefname{table}{Tab.}{Tabs.}
\Crefname{section}{Sec.}{Secs.}
\Crefname{appendix}{App.}{Apps.}
\Crefname{equation}{Eq.}{Eqs.}
\Crefname{algorithm}{Alg.}{Algs.}
\theoremstyle{definition}
\newtheorem{definition}{Definition}[section]
\newtheorem{theorem}{Theorem}
\newtheorem*{theorem*}{Theorem}
\pgfplotsset{compat=1.17}
\def\addlegendimage{\csname pgfplots@addlegendimage\endcsname}
\definecolor{set21}{RGB}{102.0 194.0 165.0}
\definecolor{set22}{RGB}{252.0 141.0 98.0}
\definecolor{set23}{RGB}{141.0 160.0 203.0}
\definecolor{set24}{RGB}{231.0 138.0 195.0}
\definecolor{set25}{RGB}{166.0 216.0 84.0}
\definecolor{set26}{RGB}{255.0 217.0 47.0}
\definecolor{set27}{RGB}{229.0 196.0 148.0}
\definecolor{set28}{RGB}{179.0 179.0 179.0}
\tikzset{every picture/.style=semithick}
\definecolor{nvgreen}{HTML}{76B900}
\definecolor{myblue}{HTML}{03a9fc}
\definecolor{myred}{HTML}{A03F77}
\newcommand{\TIM}[1]{{{#1}}} 
\newcommand{\TC}[1]{{{#1}}} 
\newcommand{\TO}[1]{}
\title{Differentially Private Diffusion Models}
\author{\name Tim Dockhorn \email timdockhorn@gmail.com \\
      \addr Stability AI \thanks{Work done during internship at NVIDIA previous to employment at Stability AI.}
      \AND
      \name Tianshi Cao \email tianshic@nvidia.com \\
      \addr NVIDIA \\
      University of Toronto \\
      Vector Institute
      \AND
      \name Arash Vahdat \email avahdat@nvidia.com \\
      \addr NVIDIA \\
      \AND
            \name Karsten Kreis \email kkreis@nvidia.com \\
      \addr NVIDIA}
\begin{document}

\maketitle

\begin{abstract}
While modern machine learning models rely on increasingly large training datasets, data is often limited in privacy-sensitive domains. Generative models trained with differential privacy (DP) on sensitive data can sidestep this challenge, providing access to synthetic data instead. We build on the recent success of diffusion models (DMs) and introduce \textit{Differentially Private Diffusion Models} (DPDMs), which enforce privacy using differentially private stochastic gradient descent (DP-SGD). We investigate the DM parameterization and the sampling algorithm, which turn out to be crucial ingredients in DPDMs, and propose \textit{noise multiplicity}, a powerful modification of DP-SGD tailored to the training of DMs. We validate our novel DPDMs on image generation benchmarks and achieve state-of-the-art performance in all experiments. Moreover, on standard benchmarks, classifiers trained on DPDM-generated synthetic data perform on par with task-specific DP-SGD-trained classifiers, which has not been demonstrated before for DP generative models. Project page and code: \url{https://nv-tlabs.github.io/DPDM}.
\end{abstract}

\vspace{-1mm}
\section{Introduction} \label{sec:introduction}
\vspace{-1mm}
Modern deep learning usually requires significant amounts of training data. However, sourcing large datasets in privacy-sensitive domains is often difficult. To circumvent this challenge, generative models trained on sensitive data can provide access to large synthetic data instead, which can be used flexibly to train downstream models. Unfortunately, typical overparameterized neural networks have been shown to provide little to no privacy to the data they have been trained on. For example, an adversary may be able to recover training images of deep classifiers using gradients of the networks~\citep{yin2021see} or reproduce training text sequences from large transformers~\citep{carlini2021extracting}. Generative models
may even overfit directly, generating data indistinguishable from the data they have been trained on.
In fact, overfitting and privacy-leakage of generative models are more relevant than ever, considering recent works that train powerful photo-realistic image generators on large-scale Internet-scraped data~\citep{rombach2021highresolution,ramesh2022dalle2,saharia2022imagen,balaji2022eDiffi}.\looseness=-1

\begin{figure}
\vspace{-3mm}
    \centering
    \includegraphics[width=0.72\textwidth]{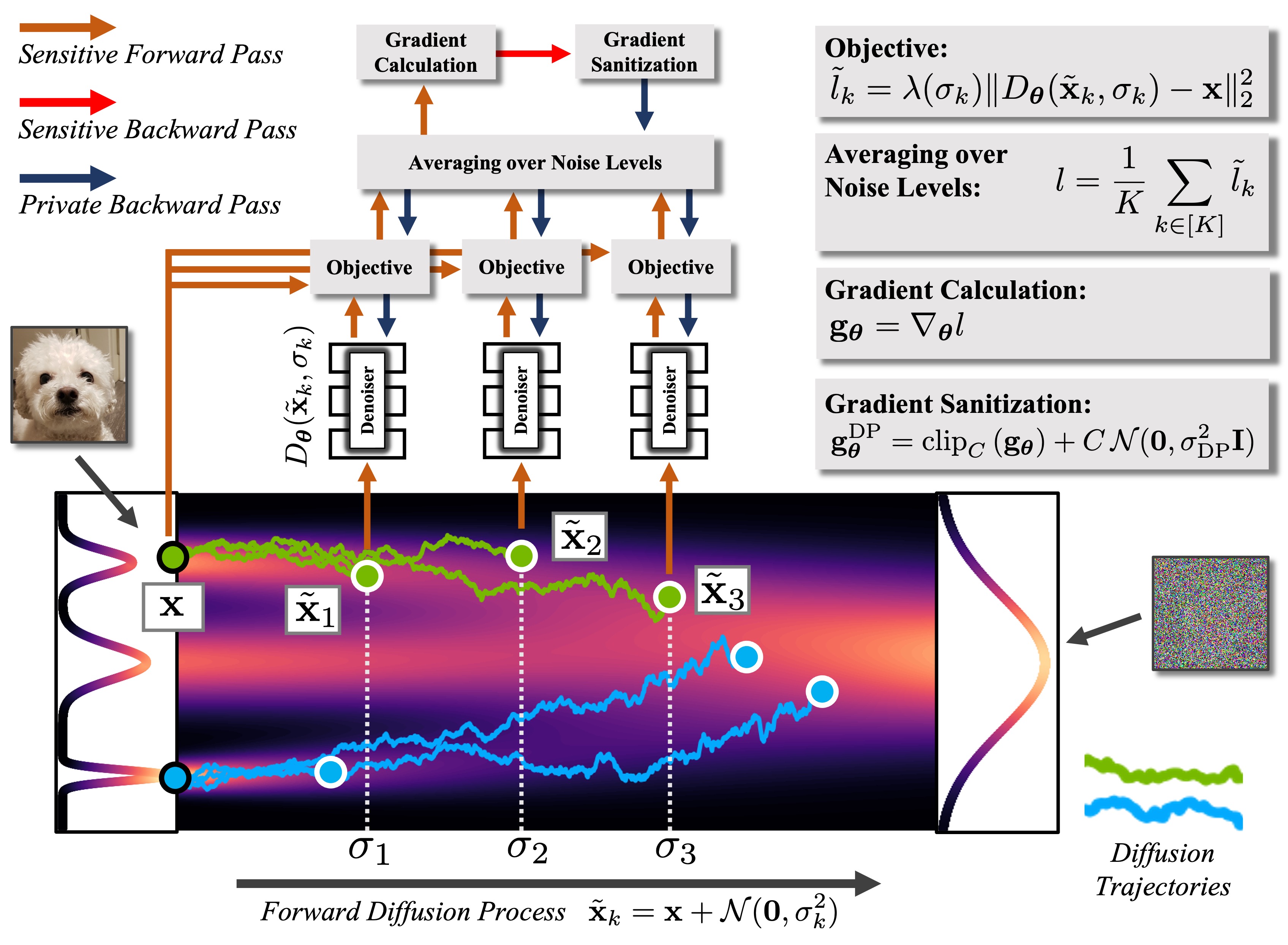}
    \caption{Information flow during training in our \textit{Differentially Private Diffusion Model} (DPDM) for a single training sample in \textbf{\textcolor{nvgreen}{green}} (\textit{i.e.} batchsize $B{=}1$, another sample shown in \textbf{\textcolor{myblue}{blue}}). We rely on DP-SGD to guarantee privacy and use \textit{noise multiplicity}; here, $K{=}3$. 
    The diffusion is visualized for a one-dim. toy distribution (marginal probabilities in \textbf{\textcolor{myred}{purple}}); our main experiments use high-dim. images.
    Note that for brevity in the visualization we dropped the index $i$, which indicates the minibatch element in \Cref{eq:noisy_objective,eq:noise_mult}.\looseness=-1}
    \label{fig:pipeline}
    \vspace{-1mm}
\end{figure}

To protect the privacy of training data, one may train their model using differential privacy (DP). DP is a rigorous privacy framework that applies to statistical queries~\citep{dwork2006calibrating, dwork2014algorithmic}. 
In our case, this query corresponds to the training of a neural network using sensitive data.
Differentially private stochastic gradient descent (DP-SGD)~\citep{abadi2016deep} is the workhorse of DP training of neural networks. It preserves privacy by clipping and noising the parameter gradients during training. This leads to an inevitable trade-off between privacy and utility; for instance, small clipping constants and large noise injection result in very private models that may be of little practical use.\looseness=-1

DP-SGD has, for example, been employed to train generative adversarial networks (GANs)~\citep{frigerio2019differentially, torkzadehmahani2019dp, xie2018differentially}, which are
particularly susceptible to privacy-leakage~\citep{webster2021person}.
However, while GANs in the non-private setting can synthesize photo-realistic images~\citep{brock2018large, karras2020analyzing, karras2020training, karras2021alias}, their application in the private setting is challenging. GANs are difficult to optimize~\citep{arjovsky2017towards, pmlr-v80-mescheder18a} and prone to mode collapse; 
both phenomena
may be amplified during DP-SGD training.\looseness=-1

Recently, Diffusion Models (DMs) have emerged as a powerful class of generative models~\citep{song2020, ho2020, sohl2015}, demonstrating outstanding performance in image synthesis~\citep{ho2021arxiv,nichol21,dhariwal2021diffusion,rombach2021highresolution,ramesh2022dalle2,saharia2022imagen}. In DMs, a diffusion process gradually perturbs the data towards random noise, while a deep neural network learns to denoise. DMs stand out not only by high synthesis quality, but also sample diversity, and a simple and robust training objective. This makes them arguably well suited for training under DP perturbations. Moreover, generation in DMs corresponds to an iterative denoising process, breaking the difficult generation task into many small denoising steps that are individually simpler than the one-shot synthesis task performed by GANs and other traditional methods. In particular, the denoising neural network that is learnt in DMs and applied repeatedly at each synthesis step is less complex and smoother than the generator networks of one-shot methods, as we validate in experiments on toy data \TIM{(synthetically generated mixture of 2D Gaussians)}. Therefore, training of the denoising neural network is arguably less sensitive to gradient clipping and noise injection required for DP.\looseness=-1

Based on these observations, we propose \emph{Differentially Private Diffusion Models} (DPDMs), DMs trained with rigorous DP guarantees based on DP-SGD. We thoroughly study the DM parameterization and sampling algorithm, and tailor them to the DP setting. We find that the stochasticity in DM sampling, which is empirically known to be error-correcting~\citep{karras2022elucidating}, can be particularly helpful in DP-SGD training to obtain satisfactory perceptual quality. We also propose \emph{noise multiplicity}, where a single training data sample is re-used for training at multiple perturbation levels along the diffusion process (see \Cref{fig:pipeline}). This simple yet powerful modification of the DM training objective improves learning at no additional privacy cost. We validate DPDMs on standard DP image generation tasks, and achieve state-of-the-art performance by large margins, both in terms of perceptual quality and performance of downstream classifiers trained on synthetically generated data from our models. 
For example, on MNIST we improve the state-of-the-art FID from 56.2 to 23.4 and downstream classification accuracy from 81.5\% to 95.3\% for the privacy setting DP-$(\varepsilon{=}1, \delta{=}10^{-5})$.
We also find that classifiers trained on DPDM-generated synthetic data perform on par with task-specific DP-classifiers trained on real data, which has not been demonstrated before for DP generative models.\looseness=-1

We make the following contributions: \textbf{(i)} We carefully motivate training DMs with DP-SGD and introduce DPDMs, the first DMs trained under DP guarantees. \textbf{(ii)} We study DPDM parameterization, training setting and sampling in detail, and optimize it for the DP setup. \textbf{(iii)} We propose \textit{noise multiplicity} to efficiently boost DPDM performance. \textbf{(iv)} Experimentally, we significantly surpass the state-of-the-art in DP synthesis on widely-studied image modeling benchmarks. \textbf{(v)} We demonstrate for the first time that
classifiers trained on DPDM-generated data perform on par with task-specific DP-trained discriminative models. This implies a very high utility of the synthetic data generated by DPDMs, delivering on the promise of DP generative models as an effective data sharing medium. Finally, we hope that our work has implications for the literature on DMs, which are now routinely trained on ultra large-scale datasets of diverse origins.\looseness=-1

\vspace{-1mm}
\section{Background} \label{sec:background}
\subsection{Diffusion Models} \label{sec:diffusion_models}
We consider continuous-time DMs~\citep{song2020} and follow the presentation of~\citet{karras2022elucidating}. Let $\pdata(\rvx)$ denote the data distribution and $p(\rvx; \sigma)$ the distribution obtained by adding i.i.d. $\sigma^2$-variance Gaussian noise
to the data distribution.
For sufficiently large $\sigma_\mathrm{max}$,  $p(\rvx; \sigma_\mathrm{max}^2)$ is almost indistinguishable from $\sigma^2_\mathrm{max}$-variance Gaussian noise.
Capitalizing on this observation, DMs sample (high variance) Gaussian noise $\rvx_0 \sim\gN\left(\bm{0}, \sigma_\mathrm{max}^2\right)$ and sequentially denoise $\rvx_0$ into $\rvx_i\sim p(\rvx_i; \sigma_i)$, $i \in [0,...,M]$, with $\sigma_{i} < \sigma_{i-1}$ ($\sigma_0=\sigma_\mathrm{max}$). If $\sigma_M=0$, then $\rvx_0$ is distributed according to the data.\looseness=-1

\textbf{Sampling.} In practice, the sequential denoising is often implemented through the simulation of the \emph{Probability Flow} ordinary differential equation (ODE)~\citep{song2020}\looseness=-1
\begin{align} \label{eq:probability_flow_ode}
    d\rvx = -\dot \sigma(t) \sigma(t) \nabla_\rvx \log p(\rvx; \sigma(t)) \, dt,
\end{align}
where $\nabla_\rvx \log p(\rvx; \sigma)$ is the \emph{score function}~\citep{hyvarinen2005scorematching}. The schedule $\sigma(t): [0, 1] \to \R_+$ is user-specified and  $\dot \sigma(t)$ denotes the time derivative of $\sigma(t)$. Alternatively, we may also sample from a 
stochastic differential equation (SDE)~\citep{song2020,karras2022elucidating}:
\begin{align} \label{eq:diffusion_sde}
    d\rvx = \underbrace{- \dot \sigma(t) \sigma(t) \nabla_\rvx \log p(\rvx; \sigma(t)) \, dt}_{\text{Probability Flow ODE; see~\Cref{eq:probability_flow_ode}}} \; 
    \underbrace{-\beta(t) \sigma^2(t) \nabla_\rvx \log p(\rvx; \sigma(t)) \, dt + \sqrt{2 \beta(t)} \sigma(t)\, d\omega_t}_{\text{Langevin diffusion component}},
\end{align}
where $d\omega_t$ is the standard Wiener process. 
In principle, given initial samples $\rvx_0 \sim\gN\left(\bm{0}, \sigma_\mathrm{max}^2\right)$, simulating either Probability Flow ODE or SDE produces samples from the same distribution.
In practice, though, neither ODE nor SDE can be simulated exactly: Firstly, any numerical solver inevitably introduces discretization errors. Secondly, the score function is only accessible through a model $s_\vtheta(\rvx; \sigma)$ that needs to be learned; replacing the score function with an imperfect model also introduces an error. Empirically, the ODE formulation has been used frequently to develop fast solvers~\citep{song2021denoising, zhang2022fast, lu2022dpm,liu2022pseudo,dockhorn2022genie}, whereas the SDE formulation often leads to higher quality samples (while requiring more steps)~\citep{karras2022elucidating}. One possible explanation for the latter observation is that the Langevin diffusion component in the SDE at any time during the synthesis process actively drives the process towards the desired marginal distribution $p(\rvx; \sigma)$, whereas errors accumulate in the ODE formulation, even when using many synthesis steps.
In fact, it has been shown that as the score model $s_\vtheta$ improves, the performance boost that can be obtained by an SDE solver diminishes~\citep{karras2022elucidating}. 
Finally, note that we are using classifier-free guidance~\citep{ho2021classifierfree} to perform class-conditional sampling in this work. For details on classifier-free guidance and the numerical solvers for~\Cref{eq:probability_flow_ode} and~\Cref{eq:diffusion_sde}, we refer to~\Cref{sec:diffusion_solvers}.

\textbf{Training.} 
DM training reduces to learning the score model $s_\vtheta$. The model can, for example, be parameterized as $\nabla_\rvx \log p(\rvx; \sigma) \approx s_\vtheta = (D_\vtheta(\rvx; \sigma) - \rvx)/ \sigma^2$~\citep{karras2022elucidating}, 
where $D_\vtheta$ is a learnable \emph{denoiser} that, given a noisy data point $\rvx + \rvn$, $\rvx \sim \pdata(\rvx)$, $\rvn \sim \gN\left(\bm{0}, \sigma^2\right)$ and conditioned on the noise level $\sigma$, tries to predict the clean 
$\rvx$. The denoiser $D_\vtheta$ can be trained by minimizing an 
$L_2$-loss\looseness=-1
\begin{align} \label{eq:diffusion_objective}
    \E_{\rvx \sim \pdata(\rvx), (\sigma, \rvn) \sim p(\sigma, \rvn)} \left[\lambda_\sigma \|D_\vtheta(\rvx + \rvn, \sigma) - \rvx \|_2^2 \right],
\end{align}
where $p(\sigma, \rvn) = p(\sigma)\,\gN\left(\rvn; \bm{0}, \sigma^2\right)$ and $\lambda_\sigma \colon \R_+ \to \R_+$ is a weighting function. Previous works 
proposed various denoiser models $D_\vtheta$, noise distributions $p(\sigma)$, and weightings $\lambda_\sigma$. We refer to the triplet $(D_\vtheta, p, \lambda)$ as the DM \textit{config}. 
Here, we consider four such configs: \emph{variance preserving} (VP)~\citep{song2020}, \emph{variance exploding} (VE)~\citep{song2020}, $\rvv$-prediction~\citep{salimans2022progressive}, and EDM~\citet{karras2022elucidating}; \Cref{sec:diffusion_backbones} for details.\looseness=-1

\vspace{-1.5mm}
\subsection{Differential Privacy} \label{sec:differential_privacy}
\vspace{-1.5mm}
DP is a rigorous mathematical definition of privacy applied to statistical queries; in our work the queries correspond to the training of a neural network using sensitive training data.
Informally, training is said to be DP, if, given the trained weights $\vtheta$ of the network, an adversary cannot tell with certainty whether a particular data point was part of the training data. This degree of certainty is controlled by two positive parameters $\varepsilon$ and $\delta$: training becomes more private as $\varepsilon$ and $\delta$ decrease. Note, however, that there is an inherent trade-off between utility and privacy: very private models may be of little to no practical use. To guarantee a sufficient amount of privacy, as a rule of thumb, $\delta$ should not be larger than $1/N$, where $N$ is number of training points $\{\rvx_i\}_{i=1}^N$, and $\varepsilon$ should be a small constant. More formally, we refer to $(\varepsilon, \delta)$-DP defined as follows~\citep{dwork2006calibrating}:
\begin{definition}(Differential Privacy) \label{def:dp}
    A randomized mechanism $\mathcal{M}: \mathcal{D} \to \mathcal{R}$ with domain $\mathcal{D}$ and range 
    $\mathcal{R}$ satisfies $(\varepsilon, \delta)$-DP if for any 
    two datasets
    $d, d' \in \mathcal{D}$ differing by at most one entry, and for any subset of outputs $S \subseteq \mathcal{R}$ it holds that
    \vspace{-0.2cm}
    \begin{align}
        \mathbf{Pr}\left[\mathcal{M}(d) \in S \right] \leq e^{\varepsilon} \mathbf{Pr}\left[\mathcal{M}(d') \in S \right] + \delta.
    \end{align}
\end{definition}

\vspace{-0.2cm}
\textbf{DP-SGD.} 
We require a DP algorithm that trains a neural network using sensitive 
data. The workhorse for this particular task is differentially private stochastic gradient descent (DP-SGD)~\citep{abadi2016deep}. DP-SGD is a modification of SGD for which per-sample-gradients are clipped and noise is added to the clipped gradients; 
the DP-SGD parameter updates are defined as follows
\vspace{-0.1cm}
\begin{align} \label{eq:dpsgd_update}
    \vtheta \leftarrow \vtheta - \frac{\eta}{B} \left( \sum\nolimits_{i \in \sB} \texttt{clip}_C\left(\nabla_\vtheta l_i(\vtheta)\right) + C \rvz \right),
\end{align}
where $\rvz \sim \gN(\bm{0}, \sigma_\mathrm{DP}^2 \mI)$, $\sB$ is a $B$-sized subset of $\{1, \dots, N\}$ drawn uniformly at random, $l_i$ is the loss function for data point $\rvx_i$, $\eta$ is the learning rate, and the clipping function is $\texttt{clip}_C(\rvg) = \min\left\{1, C /\|\rvg\|_2\right\} \rvg$. %
DP-SGD can be adapted to other first-order optimizers, such as Adam~\citep{mcmahan2018general}.

\textbf{Privacy Accounting.}
According to the \emph{Gaussian mechanism}~\citep{dwork2014algorithmic}, a single DP-SGD update (\Cref{eq:dpsgd_update}) satisfies ($\varepsilon, \delta$)-DP if $\sigma^2_\mathrm{DP} > 2 \log(1.25 / \delta) C^2 / \varepsilon^2$. Privacy accounting methods can be used to \emph{compose} the privacy cost of multiple DP-SGD training updates and to determine the variance $\sigma^2_\mathrm{DP}$ needed to satisfy ($\varepsilon, \delta$)-DP for a particular number of DP-SGD updates with clipping constant $C$ and subsampling rate $B/N$.
Also see~\Cref{sec:dp_proof}.\looseness=-1

\section{Differentially Private Diffusion Models}
\label{sec:dp_diffusion}
\vspace{-1mm}
We propose DPDMs, DMs trained with rigorous DP guarantees based on DP-SGD~\citep{abadi2016deep}. \TC{DP-SGD is a well-established method to train DP neural networks and our intention is not to re-invent DP-SGD; instead, the novelty in this work lies in the combination of DMs with DP-SGD, modifications of DP-SGD specifically tailored to the DP training of DMs, as well as the study of design choices and training recipes that greatly influence the performance of DPDMs. Combinations of DP-SGD with GANs have been widely studied~\citep{frigerio2019differentially, torkzadehmahani2019dp, xie2018differentially, bie2022private}, motivating a similar line of research for DMs. To the best of our knowledge, we are the first to explore DP training of DMs.} In~\Cref{sec:motivation}, we discuss the motivation for using DMs for DP generative modeling. In~\Cref{sec:dpsgd_training}, we then discuss training and methodological details as well as DM design choices, and we prove that DPDMs satisfy DP.\looseness=-1

\vspace{-2mm}
\subsection{Motivation} \label{sec:motivation}
\vspace{-1mm}
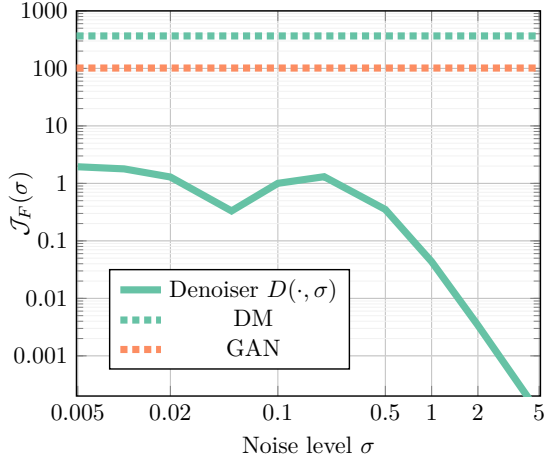
\begin{figure}
\vspace{-3mm}
    \centering
    \begin{tikzpicture}[scale=0.9]
\begin{axis}[, xtick={0.005, 0.02, 0.1, 0.5, 1, 2, 5}, xticklabels={0.005, 0.02, 0.1, 0.5, 1, 2, 5}, ytick={0.001, 0.01, 0.1, 1, 10, 100, 1000}, yticklabels={0.001, 0.01, 0.1, 1, 10, 100, 1000}, ymin=0.0002, ymax=1001, xmin=0.0049, xmax=5.1, xlabel=Noise level $\sigma$, ylabel=$\gJ_F(\sigma)$, grid=both, grid style={line width=.1pt, draw=gray!10}, major grid style={line width=.2pt,draw=gray!50}, every axis plot/.append style={line width=3pt}, every axis plot/.append style={mark size=3pt}, xlabel near ticks, ylabel near ticks, xmode=log, ymode=log, legend style={at={(0.07, 0.33)},anchor=north west}, ylabel shift=-15pt]
\addplot[color=set21] coordinates {
(0.005, 1.9453)
(0.01, 1.7871)
(0.02, 1.2908)
(0.05, 0.3316)
(0.1, 1.0022)
(0.2, 1.3067)
(0.5, 0.3517)
(1.0, 0.0435)
(2.0, 0.0034)
(5.0, 0.0001)
};
\addlegendentry{Denoiser $D(\cdot, \sigma)$}
\addplot[domain=0.0049:5.1, color=set21, dotted] {366.9458};
\addlegendentry{DM}
13.6940
\addplot[domain=0.0049:5.1, color=set22, dotted] {101.4823};
\addlegendentry{GAN}
\end{axis}
\end{tikzpicture}
    \caption{Frobenius norm of the Jacobian $\gJ_F(\sigma)$ of the denoiser $D(\cdot, \sigma)$ and constant Frobenius norms of the Jacobians $\gJ_F$ of the sampling functions defined by the DM and a GAN. \Cref{sec:toy_experiments} for experiment details.
    }
    \label{fig:complexity_input_resnet_small_not_wrapped}
    \vspace{-1mm}
\end{figure}

\textbf{(i) Objective function.}
GANs have so far been the workhorse of DP generative modeling (see~\Cref{sec:related_work}),
even though they are generally difficult to optimize~\citep{arjovsky2017towards, pmlr-v80-mescheder18a} due to their adversarial training and propensity to mode collapse. Both phenomena may be amplified during DP-SGD training. DMs, in contrast, have been shown to produce outputs as good or even better than GANs'~\citep{dhariwal2021diffusion}, while being trained with a very simple regression-like $L_2$-loss (\Cref{eq:diffusion_objective}), which makes them robust and scalable in practice. DMs are therefore arguably also well-suited for DP-SGD-based training and offer better stability under gradient clipping and noising than adversarial training frameworks.

\textbf{(ii) Sequential denoising.} In GANs and most other traditional generative modeling approaches, the generator directly learns the sampling function, i.e., the mapping of latents to synthesized samples, end-to-end.
In contrast, the sampling function in DMs is defined through a sequential denoising process, breaking the difficult generation task into many small denoising steps which are individually less complex than the one-shot synthesis
task performed by, for instance, a GAN generator. The denoiser neural network, the learnable component in DMs that is evaluated once per denoising step, is therefore simpler and smoother than the one-shot generator networks of other methods. We fit both a DM and a GAN to a two-dimensional toy distribution (mixture of Gaussians, see~\Cref{sec:toy_experiments}) and empirically verify that the denoiser $D$ is indeed significantly less complex (quantified by the Frobenius norm of the Jacobian) than the generator learnt by the GAN and also than the end-to-end multi-step synthesis process (Probability Flow ODE) of the DM (see~\Cref{fig:complexity_input_resnet_small_not_wrapped}; we calculate denoiser $\gJ_F(\sigma)$ at varying noise levels $\sigma$).
Generally, more complex functions require larger neural networks and are more difficult to learn. 
\TIM{In DP-SGD training we only have a limited number of training iterations available until the privacy budget is depleted. Consequently, the fact that DMs require less complexity out of their neural networks 
than typical one-shot generation methods, while still being able to represent expressive generative models due to the iterative synthesis process, makes them likely well-suited for DP generative modeling with DP-SGD.}\looseness=-1

\textbf{(iii) Stochastic diffusion model sampling.} As discussed in~\Cref{sec:diffusion_models}, generating samples from DMs with stochastic sampling can perform better than deterministic sampling when the score model is not learned well. Since we replace gradient estimates in DP-SGD training with biased large variance estimators, we cannot expect a perfectly accurate score model. In~\Cref{sec:ablation_studies}, we empirically show that stochastic sampling can in fact boost perceptual synthesis quality in DPDMs as measured by FID.\looseness=-1

\vspace{-2mm}
\subsection{Training Details, Design Choices, Privacy} \label{sec:dpsgd_training}
\vspace{-2mm}
The clipping and noising of the gradient estimates in DP-SGD (\Cref{eq:dpsgd_update}) pose a major challenge for efficient optimization. 
\TIM{Blindly reducing the added noise could be fatal, as it decreases the number of training iterations allowed within a certain ($\varepsilon, \delta$)-DP budget.}
Furthermore, as discussed the $L_2$-norm of the noise added in DP-SGD scales linearly to the number of parameters. Consequently, settings that work well for non-private DMs, such as relatively small batch sizes, a large number of training iterations, and heavily overparameterized models, may not work well for DPDMs. Below, we discuss how we propose to adjust DPDMs for successful DP-SGD training.\looseness=-1

\textbf{Noise multiplicity.} 
Recall that the DM objective in~\Cref{eq:diffusion_objective} involves three expectations. As usual, the expectation with respect to the data distribution $\pdata(\rvx)$ is approximated using mini-batching. %
For non-private DMs, the expectations over $\sigma$ and $\rvn$ are generally approximated using a single Monte Carlo sample $(\sigma_i, \rvn_i) \sim p(\sigma) \gN\left(\bm{0}, \sigma^2\right)$ per data point $\rvx_i$, resulting in the loss for training sample $i$
\begin{align} \label{eq:noisy_objective}
    l_i =\lambda(\sigma_i) \|D_\vtheta(\rvx_i + \rvn_i, \sigma_i) - \rvx_i \|_2^2.
\end{align}
The estimator $l_i$ is very noisy in practice. Non-private DMs counteract this by training for a large number of iterations in combination with an exponential moving average (EMA) of the trainable parameters $\vtheta$~\citep{song2020improved}. When training DMs with DP-SGD, we incur a privacy cost for each iteration, and therefore prefer a small number of iterations. Furthermore, since the per-example gradient clipping as well as the noise injection induce additional variance, we would like our objective function to be less noisy than in the non-DP case. We achieve this by estimating the expectation over $\sigma$ and $\rvn$ using an average over $K$ noise samples, $\{(\sigma_{ik}, \rvn_{ik})\}_{k =1}^K \sim p(\sigma) \gN\left(\bm{0}, \sigma^2\right)$ for each data point $\rvx_i$, replacing the non-private DM objective $l_i$ in~\Cref{eq:noisy_objective} with\looseness=-1
\begin{align} \label{eq:noise_mult}
    \tilde l_i = \frac{1}{K} \sum\nolimits_{k =1}^K\lambda(\sigma_{ik}) \|D_\vtheta(\rvx_i + \rvn_{ik}, \sigma_{ik}) - \rvx_i \|_2^2.
\end{align}
Importantly, we show that this modification comes at \emph{no} additional privacy cost (also see \Cref{sec:dp_proof}). We call this simple yet powerful modification of the DM objective, which is tailored to the DP setup, \emph{noise multiplicity}. 
\TC{
\begin{theorem} \label{th:noise_mult}
    \textit{The variance of the DM objective (\Cref{eq:noise_mult}) decreases with increased noise multiplicity $K$ as $1/K$.}
\end{theorem}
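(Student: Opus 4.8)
The plan is to observe that, for a fixed training point $\rvx_i$, the modified objective $\tilde l_i$ in \Cref{eq:noise_mult} is exactly the empirical mean of $K$ i.i.d.\ copies of the single-sample loss $l_i$ of \Cref{eq:noisy_objective}, and then to invoke the elementary fact that averaging $K$ i.i.d.\ (hence uncorrelated) random variables divides the variance by $K$.

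Concretely, I would first condition on $\rvx_i$ and set $X_{ik} \coloneqq \lambda(\sigma_{ik})\,\|D_\vtheta(\rvx_i + \rvn_{ik},\sigma_{ik}) - \rvx_i\|_2^2$ for $k=1,\dots,K$, so that $\tilde l_i = \frac{1}{K}\sum\nolimits_{k=1}^{K} X_{ik}$ and, in the notation of \Cref{eq:noisy_objective}, $l_i = X_{i1}$. Since the pairs $(\sigma_{ik},\rvn_{ik})$ are by construction drawn i.i.d.\ from $p(\sigma)\,\gN(\bm0,\sigma^2)$, the $X_{ik}$ are i.i.d.; write their common variance (assumed finite) as $v_i \coloneqq \Var[X_{i1}] = \Var[l_i]$. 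Applying the standard identity for the variance of a sum of uncorrelated random variables then gives
\begin{align}
    \Var\!\left[\tilde l_i\right] = \Var\!\left[\frac{1}{K}\sum\nolimits_{k=1}^{K} X_{ik}\right] = \frac{1}{K^2}\sum\nolimits_{k=1}^{K}\Var[X_{ik}] = \frac{v_i}{K},
\end{align}
which is precisely the claimed $1/K$ scaling; at $K=1$ one recovers $\Var[l_i] = v_i$, the variance of the single-Monte-Carlo-sample objective used for non-private DMs, so noise multiplicity shrinks the objective's variance by the factor $K$ (and, by the accounting in \Cref{sec:dp_proof}, at no privacy cost).

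I would close with two clarifying remarks rather than real obstacles, since no step here is delicate. (i) If one prefers to treat $\rvx_i$ as random (e.g.\ sampled into the minibatch) rather than fixed, the law of total variance yields $\Var[\tilde l_i] = \tfrac{1}{K}\,\E_{\rvx_i}[v_i(\rvx_i)] + \Var_{\rvx_i}\!\big(\E[l_i \mid \rvx_i]\big)$; the second term does not depend on $K$ and captures genuine between-sample heterogeneity, while the first term — the contribution of Monte Carlo estimation of the $\sigma$- and $\rvn$-expectations, which is what noise multiplicity targets — decays as $1/K$. (ii) The only assumption used is square-integrability of $X_{i1}$ under $p(\sigma)\,\gN(\bm0,\sigma^2)$, which holds whenever the denoiser $D_\vtheta$ and weighting $\lambda_\sigma$ make the per-sample loss have a finite second moment. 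The main thing to be careful about is therefore just stating the conditioning (and independence of the $K$ draws) explicitly; everything else is a one-line variance computation.
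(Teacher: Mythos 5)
Your proof is correct and takes essentially the same approach as the paper's: both view $\tilde l_i$ as a Monte Carlo average of $K$ i.i.d.\ draws of the single-sample loss and invoke the standard fact that the variance of such an average scales as $1/K$. Your version is merely more explicit about conditioning on $\rvx_i$ and the finite-second-moment assumption, which the paper leaves implicit.
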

}
Proof in~\Cref{sec:variance_reduction_via_noise_multiplicity}. Intuitively, the key is that we first create a relatively accurate low-variance gradient estimate by averaging over multiple noise samples before performing gradient sanitization in the backward pass via clipping and noising. 
This averaging process increases computational cost, but provides better utility at the same privacy budget, which is the main bottleneck in DP generative modeling; see \Cref{sec:noise_multiplicity_cost} for further discussion.
\begin{figure}
\vspace{-3mm}
    \centering
    \includegraphics[width=0.5\textwidth]{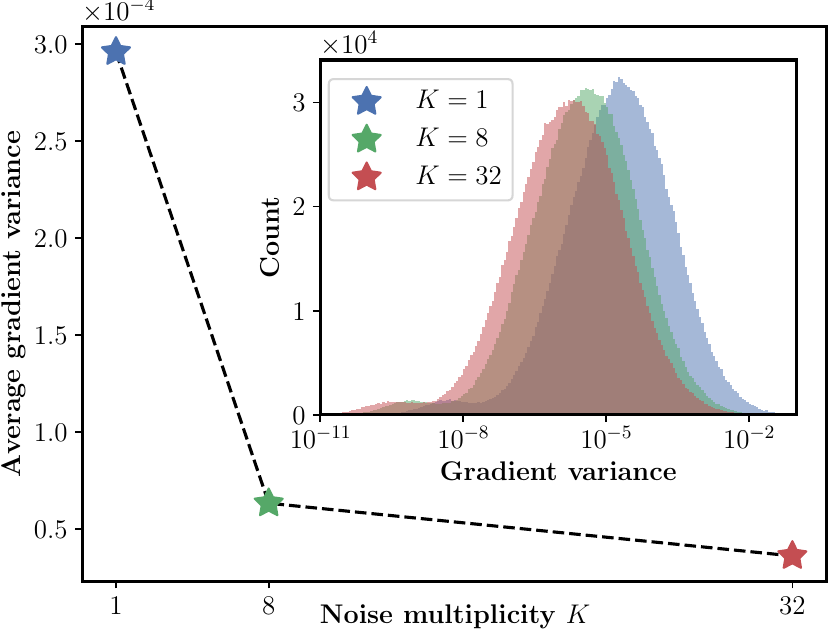}
    \caption{
    Increasing $K$ in \emph{noise multiplicity} leads to significant variance reduction of parameter gradient estimates during training (note logarithmic axis in inset). Enlarged version in~\Cref{fig:variance_reduction}.\looseness=-1
    }
    \vspace{-1mm}
    \label{fig:variance_reduction_small}
\end{figure}
We empirically showcase in~\Cref{sec:ablation_studies} that the variance reduction induced by noise multiplicity is a key factor in training strong DPDMs.
\TC{In~\Cref{fig:variance_reduction_small}, we show that the reduction of variance in the DM objective also empirically leads to lower variance gradient estimates (see~\Cref{sec:variance_reduction_via_noise_multiplicity} for experiment details). The noise multiplicity mechanism is also highlighted in~\Cref{fig:pipeline}: the figure describes the information flow during training for a single training sample (i.e., batch size $B=1$).  
Note that noise multiplicity is loosely inspired by \emph{augmentation multiplicity}~\citep{de2022unlocking}, a technique where multiple augmentations per image are used to train \emph{classifiers} with DP-SGD. In contrast to augmentation multiplicity, our novel noise multiplicity is carefully designed specifically for DPDMs and comes with theoretical proofs on its variance reduction.} \TIM{The reader may find a more detailed discussion on the difference between noise multiplicity and data multiplicity (for DPDMs) in~\Cref{sec:noise_multiplicity_vs_data_aug}.}

\textbf{Neural networks sizes.}
Current DMs are heavily overparameterized: For example, the current state-of-the-art image generation model (in terms of perceptual quality) on CIFAR-10 uses more than 100M parameters, despite the dataset consisting of only 50k training points~\citep{karras2022elucidating}. 
The per-example clipping operation of DP-SGD requires the computation of the loss gradient on each training example $\nabla_\vtheta \tilde l_i$, rather than the minibatch gradient. 
\TIM{In theory, this increases the memory footprint by at least $O(B)$; however, in common DP frameworks, such as Opacus~\citep{yousefpour2021opacus}, which we use, the peak memory requirement is $\mathcal{O}(B^2)$ compared to non-private training (recent methods such as \emph{ghost clipping}~\citep{bu2022scalable} require less memory, but are not widely implemented)}
On top of that, DP-SGD generally already relies on a significantly increased batch size, when compared to non-private training, to improve the privacy-utility trade-off. As a result, we train very small neural networks for DPDMs, when compared to their non-DP counterparts: our models on MNIST/Fashion-MNIST and CelebA have 1.75M and 1.80M parameters, respectively. \TIM{Furthermore, we found smaller models to perform better across our experiments which may be due to the $L_2$-norm of the noise added in our DP-SGD update scaling linearly with the number of parameters. This is in contrast to recent works in supervised DP learning, which show that larger models may perform better than smaller models~\citep{de2022unlocking, li2022large, anil2021large, li2022does}.}

\begin{wrapfigure}{r}{0.35\textwidth}
    \centering
    \includegraphics[width=0.35\textwidth]{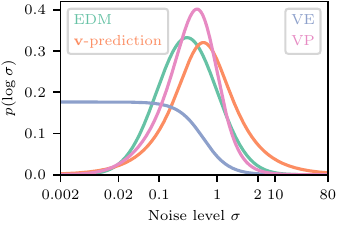}
    \caption{\small Noise level sampling for different DM configs; see~\Cref{sec:diffusion_backbones}. 
    }
    \label{fig:viz_dist}
\end{wrapfigure}
\textbf{Diffusion model config.} In addition to network size, we found the choice of DM config, i.e., %
denoiser parameterization $D_\vtheta$, weighting function $\lambda(\sigma)$, and noise distribution $p(\sigma)$, to be important. In particular the latter is crucial to obtain strong results with DPDMs. In~\Cref{fig:viz_dist}, we visualize the noise distributions of the four configs under consideration. 
We follow~\citet{karras2022elucidating} and plot the distribution $p(\log \sigma)$ over the log-noise level. Especially for high privacy settings (small $\varepsilon$), we found it important to use distributions that give sufficiently much weight to larger $\sigma$, such as the distribution of $\rvv$-prediction~\citep{salimans2022progressive}.
It is known that at large $\sigma$ the DM learns the global, coarse structure of the data, i.e., the low frequency content in the data (images, in our case). Learning global structure reasonably well is crucial to form visually coherent images that 
can also be used to train downstream models. This is relatively easy to achieve in the non-DP setting, due to the heavily smoothed diffused distribution at these high noise level. At high privacy levels, however, even training at such high noise levels can be challenging due to DP-SGD's gradient clipping and noising. We hypothesize that this is why it is beneficial to give relatively more weight to high noise levels when training in the DP setting.
In~\Cref{sec:ablation_studies}, we empirically demonstrate the importance of the right choice of the DM config.\looseness=-1

\setlength{\intextsep}{3pt}
\begin{figure}
\vspace{-3mm}
    \begin{algorithm}[H]
    \small
    \caption{{DPDM Training}
    }
    \begin{algorithmic} \label{algo:dpdm}
    
    \STATE {\bfseries Input:} Private data set $d = \{\rvx_{j}\}_{j=1}^N$, subsampling rate $B/N$, DP noise scale $\sigma_{\mathrm{DP}}$, clipping constant $C$, sampling function \textit{Poisson Sample} (\Cref{algo:poisson}), denoiser $D_\mathrm{\vtheta}$ with initial parameters $\vtheta$, noise distribution $p(\sigma)$, learning rate $\eta$, total steps $T$, noise multiplicity $K$, \textit{Adam}~\citep{kingma2015adam} optimizer
    \STATE {\bfseries Output:} Trained parameters $\vtheta$
    \FOR{$t=1$ {\bfseries to} $T$}
    \STATE $\sB \sim \textit{Poisson Sample}(N, B/N)$
    \FOR{$i \in \sB$}
        \STATE $\left\{(\sigma_{ik}, \mathrm{n}_{ik})\right \}_{k=1}^K \sim p(\sigma)\mathcal{N}(\bm{0},\sigma^2)$
        \STATE $\tilde{l}_{i} = \frac{1}{K} \sum_{k =1}^K\lambda(\sigma_{ik}) \|D_\vtheta(\rvx_i{+}\rvn_{ik}, \sigma_{ik}){-}\rvx_i \|_2^2$
    \ENDFOR
    \STATE $G_{batch} = \frac{1}{B} \sum_{i \in \sB} \texttt{clip}_C\left(\nabla_\vtheta \tilde{l}_i \right)$
    \STATE $\tilde{G}_{batch} = G_{batch} + (C/B) \rvz, \rvz \sim \gN(\bm{0}, \sigma_\mathrm{DP}^2)$
    \STATE $\vtheta = \vtheta - \eta * \textit{Adam}(\tilde{G}_{batch})$
    \ENDFOR
    \end{algorithmic}
    \end{algorithm}
\vspace{-3mm}
\end{figure}
\setlength{\intextsep}{12pt}
\textbf{DP-SGD settings.} Following~\citet{de2022unlocking} we use very large batch sizes: 4096 on MNIST/Fashion-MNIST and 2048 on CelebA. Similar to previous works~\citep{de2022unlocking, kurakin2022toward, li2022large}, we found that small clipping constants $C$ work better than larger clipping norms; in particular, we found $C=1$ to work well across our experiments. Decreasing $C$ even further had little effect; in contrast, increasing $C$ significantly worsened performance. Similar to non-private DMs, we use an EMA of the learnable parameters $\vtheta$. Incidentally, this has recently been reported to also have a positive effect on DP-SGD training of classifiers by~\citet{de2022unlocking}.\looseness=-1

\textbf{Privacy.}
We formulate privacy protection under the Rényi Differential Privacy (RDP)~\citep{mironov2017renyi} framework (see~\Cref{def:rdp}), which can be converted to $(\epsilon, \delta)$-DP.
For an algorithm for DPDM training with noise multiplicity see \Cref{algo:dpdm}. 
For the sake of completeness we also formally prove the DP of DPDMs (DP of releasing sanitized training gradients $\tilde{G}_{batch}$):\looseness=-1
\vspace{-1mm}
\begin{theorem}\label{thm:dp} \textit{For noise magnitude $\sigma_\mathrm{DP}$, releasing $\tilde{G}_{batch}$ in \Cref{algo:dpdm} satisfies $\left( \alpha, \alpha / 2 \sigma_\mathrm{DP}^2 \right)$-RDP.}
\end{theorem}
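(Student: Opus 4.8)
The statement is precisely the Rényi differential privacy (RDP) guarantee of a single Gaussian-mechanism step, so the plan is to (i) isolate the data-dependent quantity that is released, (ii) bound its $\ell_2$-sensitivity by $C/B$, and (iii) apply the known RDP bound for the Gaussian mechanism. Concretely, fix the current parameters $\vtheta$ and condition on all randomness that is independent of the private data: the subsample index set $\sB$ returned by \Cref{algo:poisson} and the per-example noise draws $\{(\sigma_{ik},\rvn_{ik})\}$ used to form $\tilde l_i$ in \Cref{eq:noise_mult}. Under this conditioning, $G_{batch}=\tfrac1B\sum_{i\in\sB}\texttt{clip}_C(\nabla_\vtheta\tilde l_i)$ is a deterministic function of the dataset $d$, and $\tilde G_{batch}=G_{batch}+(C/B)\rvz$ with $\rvz\sim\gN(\bm 0,\sigma_\mathrm{DP}^2\mI)$ is exactly a Gaussian mechanism applied to it; the amplification due to Poisson subsampling is not claimed here and is accounted for separately in \Cref{sec:dp_proof}.

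The sensitivity bound is where the structure of the algorithm enters. Since $\|\texttt{clip}_C(\rvg)\|_2\le C$ for every $\rvg$, we have $\|\texttt{clip}_C(\nabla_\vtheta\tilde l_i)\|_2\le C$ for every $i$ and every value of $\rvx_i$; crucially, noise multiplicity does not affect this, because the $K$-sample average in \Cref{eq:noise_mult} is taken \emph{inside} $\tilde l_i$ and the clip is applied afterwards to the single per-example gradient $\nabla_\vtheta\tilde l_i$. For neighbouring datasets $d,d'$ (add/remove-one relation), the sums $\sum_{i\in\sB}\texttt{clip}_C(\nabla_\vtheta\tilde l_i)$ differ by at most one summand of $\ell_2$-norm at most $C$, so $\|G_{batch}(d)-G_{batch}(d')\|_2\le C/B$, i.e. the $\ell_2$-sensitivity is $\Delta\le C/B$. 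Using the Rényi divergence between Gaussians of common covariance, $D_\alpha\!\left(\gN(\vmu_1,\Sigma)\,\|\,\gN(\vmu_2,\Sigma)\right)=\tfrac{\alpha}{2}(\vmu_1-\vmu_2)^\top\Sigma^{-1}(\vmu_1-\vmu_2)$ \citep{mironov2017renyi}, with $\Sigma=(C/B)^2\sigma_\mathrm{DP}^2\mI$, we get
\begin{align*}
D_\alpha\!\left(\tilde G_{batch}(d)\,\|\,\tilde G_{batch}(d')\right)
\;\le\; \frac{\alpha}{2}\cdot\frac{\|G_{batch}(d)-G_{batch}(d')\|_2^2}{(C/B)^2\sigma_\mathrm{DP}^2}
\;\le\; \frac{\alpha}{2}\cdot\frac{(C/B)^2}{(C/B)^2\sigma_\mathrm{DP}^2}
\;=\; \frac{\alpha}{2\sigma_\mathrm{DP}^2},
\end{align*}
the $C/B$ factors cancelling. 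Since this holds for every realization of the data-independent conditioning variables, it transfers to the unconditional Rényi divergence by (quasi-)convexity of $D_\alpha$, and taking the supremum over neighbouring $d,d'$ shows releasing $\tilde G_{batch}$ satisfies $(\alpha,\alpha/2\sigma_\mathrm{DP}^2)$-RDP; the subsequent \emph{Adam} step and EMA update are data-independent post-processing and preserve the guarantee.

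\textbf{Main obstacle.} There is no deep difficulty — the content is the two bookkeeping facts that per-example clipping caps each contribution at $C$ irrespective of $K$, and that the $C/B$ rescaling appears identically in the injected noise and in the sensitivity so that it cancels. The only point requiring genuine care is the conditioning step: one must verify that the subsampling indices and the multiplicity-noise draws truly do not depend on the data \emph{values}, so that a bound on the conditional Rényi divergence valid for every such realization lifts to the unconditional statement, and then invoke the appropriate convexity property of $D_\alpha$.
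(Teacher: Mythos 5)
Your proposal is correct and follows essentially the same route as the paper's proof in \Cref{sec:dp_proof}: bound the $\ell_2$-sensitivity of $G_{batch}$ by $C/B$ via the add/remove-one neighbouring relation and per-example clipping, observe that the injected noise has standard deviation $(C/B)\sigma_\mathrm{DP}$ so the $C/B$ factors cancel in the Gaussian-mechanism RDP bound, and conclude $(\alpha,\alpha/2\sigma_\mathrm{DP}^2)$-RDP. Your additional care in conditioning on the subsampling indices and the noise-multiplicity draws (and lifting via quasi-convexity of $D_\alpha$) is a sound refinement of a step the paper treats implicitly, but it does not change the argument.
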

\vspace{-3mm}
The proof can be found in \Cref{sec:dp_proof}. Note that the strength of DP protection is independent of the noise multiplicity, as discussed above. 
In practice, we construct mini-batches by \emph{Poisson Sampling} (See \Cref{algo:poisson}) the training dataset for privacy amplification via sub-sampling~\citep{mironov2019r}, and compute the overall privacy cost of training DPDM via RDP composition~\citep{mironov2017renyi}. \TIM{Tighter privacy bounds, such as the one developed in~\citet{gopi2021numerical}, may lead to better results but are not widely implemented (not in Opacus~\citep{yousefpour2021opacus}, the DP-SGD library we use).}

\vspace{-1mm}
\section{Related Work} \label{sec:related_work}
\vspace{-1mm}
In the DP generative learning literature, several works~\citep{xie2018differentially, frigerio2019differentially, torkzadehmahani2019dp, chen2020gs} have explored applying DP-SGD~\citep{abadi2016deep} to GANs, while others~\citep{yoon2018pategan, long2019scalable, wang2021datalens} train GANs under the PATE~\citep{papernot2018scalable} framework, which distills private teacher models (discriminators) into a public student (generator) model. Apart from GANs,~
\citet{acs2018dpvae} train variational autoencoders on DP-sanitized data clusters, and~\citet{cao2021don} 
use the Sinkhorn divergence and DP-SGD.\looseness=-1

DP-MERF~\citep{harder2021dp} was the first work to perform one-shot privatization on the data, followed by non-private learning. It uses differentially private random Fourier features to construct a Maximum Mean Discrepancy loss, which is then minimized by a generative model. PEARL~\citep{liew2022pearl} instead minimizes an empirical characteristic function, also based on Fourier features.
DP-MEPF~\citep{harder2022differentially} extends DP-MERF to the mixed public-private setting with pre-trained feature extractors. While 
these approaches are efficient in the high-privacy/small dataset regime, they are limited in expressivity by the data statistics that can be extracted during one-shot privatization. As a result, the performance of these methods does not scale well in the low-privacy/large dataset regime.\looseness=-1

In our experimental comparisons, we excluded \cite{takagi2020p3gm} and \cite{chen2022dpgen} due to concerns regarding their privacy guarantees. The privacy analysis of \cite{takagi2020p3gm} relies on the Wishart mechanism, which has been retracted due to privacy leakage~\citep{sarwate2017wishart}. \cite{chen2022dpgen} attempt to train a score-based model while guaranteeing differential privacy through a data-dependent randomized response mechanism. In \Cref{sec:app_dpgen_failure}, we prove why their proposed mechanism leaks privacy, and further discuss other sources of privacy leakage. \looseness=-1 %

Our DPDM relies on DP-SGD~\citep{abadi2016deep} to enforce DP guarantees. 
DP-SGD has also been used to train DP classifiers~\citep{dormann2021not,tramer2021differentially,kurakin2022toward}. Recently, \citet{de2022unlocking} demonstrated how to train very large discriminative models with DP-SGD and proposed augmentation multiplicity, which 
is related to our noise multiplicity, as discussed in \Cref{sec:dpsgd_training}. Furthrmore, DP-SGD has been utilized to train and fine-tune large language models~\citep{anil2021large,li2022large,yu2022differentially}, to protect sensitive training data in the medical domain~\citep{ziller2021medical,ziller2021differentially,balelli2022differentially}, and to obscure geo-spatial location information~\citep{zeighami2021neural}.\looseness=-1

Our work 
builds on
DMs and score-based generative models~\citep{sohl2015,song2020,ho2020}. DMs have been used prominently for image synthesis~\citep{ho2021arxiv,nichol21,dhariwal2021diffusion,rombach2021highresolution,ramesh2022dalle2,saharia2022imagen,balaji2022eDiffi} and other image modeling tasks~\citep{meng2021sdedit,saharia2021palette,saharia2021image,li2021srdiff,sasaki2021unitddpm,kawar2022restoration}. They have also found applications in other areas, for instance in audio and speech generation~\citep{chen2021wavegrad,kong2021diffwave,jeong2021difftts}, video generation~\citep{ho2022video,ho2022imagenvideo,singer2023makeavideo,blattmann2023videoldm} and 3D synthesis~\citep{luo2021diffusion,zhou20213d,zeng2022lion,kim2023nfldm}.
Methodologically, DMs have been adapted, for example, for fast sampling~\citep{jolicoeur2021gotta,song2021denoising,salimans2022progressive,dockhorn2022scorebased,xiao2022tackling,watson2022learning,dockhorn2022genie} and maximum likelihood training~\citep{song2021maximum,kingma2021variational,vahdat2021score}. To the best of our knowledge, we are the first to train DMs under differential privacy guarantees.\looseness=-1
\section{Experiments} \label{sec:experiments}
\TC{In this section, we present results of DPDM on standard image synthesis benchmarks. Importantly, note that all models are private \emph{by construction} through training with DP-SGD. The privacy guarantee is given by the $(\varepsilon, \delta)$ parameters of DP-SGD, clearly stated for each experiment below.}

\begin{table}
\caption{\small{Class-conditional DP image generation performance (MNIST \& Fashion-MNIST). For PEARL~\citep{liew2022pearl}, we train models and compute metrics ourselves (\Cref{sec:metrics_baselines_datasets}). All other results taken from the literature. DP-MEPF (\textdagger) uses additional public data for training (only included for completeness).}}
    \label{tab:main_no_public_data}
    \centering
    \scalebox{0.8}{
        \begin{tabular}{l c c c c c c c c c}
            \toprule
            \multirow{3}{*}{Method} & \multirow{3}{*}{DP-$\varepsilon$} & \multicolumn{4}{c}{MNIST} & \multicolumn{4}{c}{Fashion-MNIST} \\
            \cmidrule(l{1em}r{1em}){3-6} \cmidrule(l{1em}r{1em}){7-10}
            & & \multirow{2}{*}{FID} & \multicolumn{3}{c}{Acc (\%)} & \multirow{2}{*}{FID} & \multicolumn{3}{c}{Acc (\%)} \\
            \cmidrule(l{1em}r{1em}){4-6} \cmidrule(l{1em}r{1em}){8-10}
            & & & Log Reg & MLP & CNN & & Log Reg & MLP & CNN \\
            \midrule 
            DPDM (FID) (\emph{ours}) & 0.2 & \textbf{61.9} & 65.3 & 65.8 & 71.9 & \textbf{78.4} & 53.6 & 55.3 & 57.0 \\
            DPDM (Acc) (\emph{ours}) & 0.2 & 104 & \textbf{81.0} & \textbf{81.7} & \textbf{86.3} & 128 & \textbf{70.4} & \textbf{71.3} & \textbf{72.3} \\
            PEARL~\citep{liew2022pearl} & 0.2 & 133 & 76.2 & 77.1 & 77.6 & 160 & 70.0 & 70.8 & 68.0 \\
            \midrule 
            DPDM (FID) (\emph{ours}) & 1 & \textbf{23.4} & 83.8 & 87.0 & 93.4 & \textbf{37.8} & 71.5 & 71.7 & 73.6 \\
            DPDM (Acc) (\emph{ours}) & 1 & 35.5 & \textbf{86.7} & \textbf{91.6} & \textbf{95.3} & 51.4 & \textbf{76.3} & \textbf{76.9} & \textbf{79.4} \\
            PEARL~\citep{liew2022pearl} & 1 & 121 & 76.0 & 79.6 & 78.2 & 109 & 74.4 & 74.0 & 68.3 \\
            DPGANr~\citep{bie2022private} & 1 & 56.2 & - & - & 80.1 & 121.8 & - & - & 68.0 \\
            DP-HP~\citep{vinaroz2022hermite} & 1 & - & - & - & 81.5 & - & - & - & 72.3 \\
            \midrule
            DPDM (FID) (\emph{ours}) & 10 & \textbf{5.01} & 90.5 & 94.6 & 97.3 & \textbf{18.6} & 80.4 & 81.1 & 84.9 \\
            DPDM (Acc) (\emph{ours}) & 10 & 6.65 & \textbf{90.8} & \textbf{94.8} & \textbf{98.1} & 19.1 & \textbf{81.1} & \textbf{83.0} & \textbf{86.2} \\
            PEARL~\citep{liew2022pearl} & 10 & 116 & 76.5 & 78.3 & 78.8 & 102 & 72.6 & 73.2 & 64.9 \\
            DPGANr~\citep{bie2022private} & 10 & 13.0 & - & - & 95.0 & 56.8 & - & - & 74.8 \\
            DP-Sinkhorn~\citep{cao2021don} & 10 & 48.4 & 82.8 & 82.7 & 83.2 & 128.3 & 75.1 & 74.6 & 71.1 \\
            G-PATE~\citep{long2019scalable} & 10 & 150.62 & - & - & 80.92 & 171.90 & - & - & 69.34 \\
            DP-CGAN~\citep{torkzadehmahani2019dp} & 10 & 179.2 & 60 & 60 & 63 & 243.8 & 51 & 50 & 46 \\
            DataLens~\citep{wang2021datalens} & 10 & 173.5 & - & - & 80.66 & 167.7 & - & - & 70.61 \\
            DP-MERF~\citep{harder2021dp} & 10 & 116.3 & 79.4 & 78.3 & 82.1 & 132.6 & 75.5 & 74.5 & 75.4 \\
            GS-WGAN~\citep{chen2020gs} & 10 & 61.3 & 79 & 79 & 80 & 131.3 & 68 & 65 & 65 \\
            \midrule
            \midrule
            DP-MEPF $(\phi_1)$~\citep{harder2022differentially} (\textdagger) & 0.2 & - & 72.1 & 77.1 & - & - & 71.7 & 69.0 & - \\
            DP-MEPF $(\phi_1, \phi_2)$~\citep{harder2022differentially} (\textdagger) & 0.2 & - & 75.8 & 79.9 & - & - & 72.5 & 70.4 & - \\
            \midrule
            DP-MEPF $(\phi_1)$~\citep{harder2022differentially} (\textdagger) & 1 & - & 79.0 & 87.5 & - & - & 76.2 & 75.0 & - \\
            DP-MEPF $(\phi_1, \phi_2)$~\citep{harder2022differentially} (\textdagger) & 1 & - & 82.5 & 89.3 & - & - & 75.4 & 74.7 & - \\
            \midrule
            DP-MEPF $(\phi_1)$~\citep{harder2022differentially} (\textdagger) & 10 & - & 80.8 & 88.8 & - & - & 75.5 & 75.5 & - \\
            DP-MEPF $(\phi_1, \phi_2)$~\citep{harder2022differentially} (\textdagger) & 10 & - & 83.4 & 89.8 & - & - & 75.7 & 76.0 & - \\
            \bottomrule
        \end{tabular}
    }
\end{table}
\begin{figure}
    \centering
    \centering
    \scalebox{0.5}{
    \includegraphics[trim=0 0 0 40,clip,width=\textwidth]{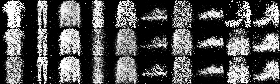}
    }
    \scalebox{0.5}{
    \includegraphics[trim=0 0 0 182,clip,width=\textwidth]{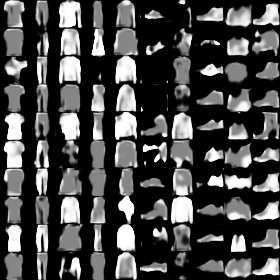}
    }
    \scalebox{0.5}{
    \includegraphics[width=\textwidth]{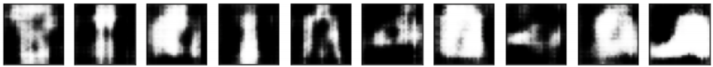}
    }
    \scalebox{0.5}{
    \includegraphics[trim=0 60 0 60,clip,width=\textwidth]{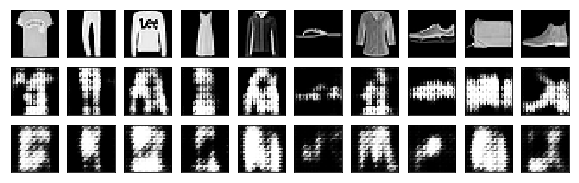}
    }
    \scalebox{0.5}{
    \includegraphics[trim=0 0 0 40,clip,width=\textwidth]{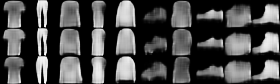}
    }
    \scalebox{0.5}{
    \includegraphics[trim=0 0 0 270px,clip,width=\textwidth]{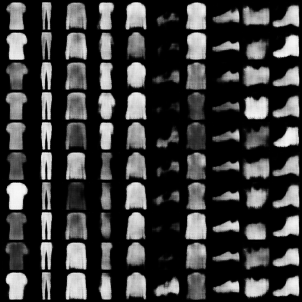}
    }
    \scalebox{0.5}{
    \includegraphics[width=\textwidth]{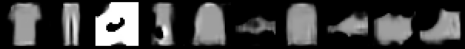}
    }
    \scalebox{0.5}{
    \includegraphics[width=\textwidth]{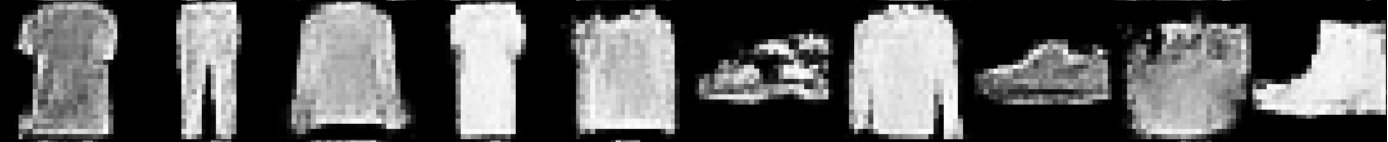}
    }
    \rule[4pt]{0.6\textwidth}{1pt}
    \scalebox{0.5}{
    \includegraphics[width=\textwidth]{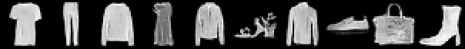}
    }
    \caption{Fashion-MNIST images generated by, from top to bottom, DP-CGAN~\citep{torkzadehmahani2019dp}, DP-MERF~\citep{harder2022differentially}, Datalens~\citep{wang2021datalens}, G-PATE~\citep{long2019scalable}, GS-WGAN~\citep{chen2020gs}, DP-Sinkhorn~\citep{cao2021don}, PEARL~\citep{liew2022pearl}, DPGANr~\citep{bie2022private} (all above bar), and our DPDM (below bar) using the privacy budget $\varepsilon{=}10$. See \Cref{sec:extended_qualitative_results} for more samples.\looseness=-1}
    \label{fig:all_samples}
\end{figure}
\vspace{-0.5mm}
\textbf{Datasets.} We focus on image synthesis and use MNIST~\citep{lecun2010mnist}, Fashion-MNIST~\citep{xiao2017fashion} (28x28), and CelebA~\citep{liu2015deep} (\TO{center-cropped; }downsampled to 32x32). These datasets are standard benchmarks in the DP generative modeling literature. \TC{In~\Cref{sec:add_experiments}, we consider more challenging datasets and provide initial results.}\looseness=-1

\textbf{Architectures.} We implement the neural networks of DPDMs using the DDPM++ architecture~\citep{song2020}.
See~\Cref{sec:model_architecture} for details.\looseness=-1 

\textbf{Evaluation.} We measure sample quality via Fréchet Inception Distance (FID)~\citep{heusel2017gans}. On MNIST and Fashion-MNIST, we also assess utility of class-labeled generated data by training classifiers on synthesized samples and compute class prediction accuracy on real data. As is standard practice, we consider logistic regression (Log Reg), MLP, and CNN classifiers;
see~\Cref{sec:metrics_baselines_datasets} for details.\looseness=-1

\textbf{Sampling.} We sample from DPDM using (stochastic) DDIM~\citep{song2020} and the Churn sampler introduced in~\citep{karras2022elucidating}. See~\Cref{sec:diffusion_solvers} for details.\looseness=-1

\textbf{Privacy implementation:} We implement DPDMs in PyTorch~\citep{paszke2019pytorch} and use Opacus~\citep{yousefpour2021opacus}, a DP-SGD library 
in
PyTorch, for training and privacy accounting. We use $\delta{=}10^{-5}$ for MNIST and Fashion-MNIST, and $\delta{=}10^{-6}$ for CelebA. These values are standard
~\citep{cao2021don} and chosen such that $\delta$ is smaller than the reciprocal of the number of training images. Similar to existing DP generative modeling work, we do not account for the (small) privacy cost of hyperparameter tuning. However, training and sampling is very robust with regards to hyperparameters, which makes DPDMs an ideal candidate for real privacy-critical situations; see~\Cref{sec:dpdm_hyperparameters}.\looseness=-1

\begin{table}
    \centering
    \caption{Class prediction accuracy on real test data. DP-SGD: Classifiers trained directly with DP-SGD and real training data. DPDM: Classifiers trained non-privately on synthesized data from DP-SGD-trained DPDMs \TIM{(using 60,000 samples, following \cite{cao2021don})}.\looseness=-1
        }
    \scalebox{0.80}{
        \begin{tabular}{c c c c c c c c c c c c c}
            \toprule
            \multirow{3}{*}{DP-$\varepsilon$} & \multicolumn{6}{c}{MNIST} & \multicolumn{6}{c}{Fashion-MNIST} \\
            \cmidrule(l{1em}r{1em}){2-7} \cmidrule(l{1em}r{1em}){8-13}
            & \multicolumn{2}{c}{Log Reg} & \multicolumn{2}{c}{MLP} & \multicolumn{2}{c}{CNN} & \multicolumn{2}{c}{Log Reg} & \multicolumn{2}{c}{MLP} & \multicolumn{2}{c}{CNN} \\
            \cmidrule(l{1em}r{1em}){2-3} \cmidrule(l{1em}r{1em}){4-5} \cmidrule(l{1em}r{1em}){6-7} \cmidrule(l{1em}r{1em}){8-9} \cmidrule(l{1em}r{1em}){10-11} \cmidrule(l{1em}r{1em}){12-13}
            & DP-SGD & DPDM & DP-SGD & DPDM & DP-SGD & DPDM & DP-SGD & DPDM & DP-SGD & DPDM & DP-SGD & DPDM \\
            \midrule 
            0.2 & \textbf{83.8} & 81.0 & \textbf{82.0} & 81.7 & 69.9 & \textbf{86.3} & \textbf{74.8} & 70.4 & \textbf{73.9} & 71.3 & 59.5 & \textbf{72.3} \\
            1   & \textbf{89.1} & 86.7 & 89.6 & \textbf{91.6} & 88.2 & \textbf{95.3} & \textbf{79.6} & 76.3 & \textbf{79.6} & 76.9 & 70.5 & \textbf{79.4} \\
            10  & \textbf{91.6} & 90.8 & 92.9 & \textbf{94.8} & 96.4 & \textbf{98.1} & \textbf{83.3} & 81.1 & \textbf{83.9} & 83.0 & 77.1 & \textbf{86.2} \\
            \bottomrule
        \end{tabular}
    }
    \label{tab:cnn_classifier_new}
    \vspace{-2mm}
\end{table}

\begin{table}
    \centering
    \caption{DM config ablation on MNIST for $\varepsilon{=}0.2$. See~\Cref{tab:design_choice} for extended results.\looseness=-1}
    \scalebox{1.0}{
    \begin{tabular}{l c c}
        \toprule
        DM config & FID & CNN-Acc (\%) \\
        \midrule 
        \footnotesize{VP~\citep{song2020}} & 197 & 24.2  \\
        \footnotesize{VE~\citep{song2020}} & 171 & 13.9  \\
        \footnotesize{$\rvv$-prediction~\citep{salimans2022progressive}} & \textbf{97.8} & \textbf{84.4} \\
        \footnotesize{EDM~\citep{karras2022elucidating}} & 119 & 49.2 \\
        \bottomrule
    \end{tabular}
    }
        \label{tab:design_choice_eps_0_2_mnist_fid_cnn_acc}
\end{table}

\subsection{Main Results}
\textbf{Class-conditional gray scale image generation.} 
For MNIST and Fashion-MNIST, we train models for three privacy settings: $\varepsilon{=}\{0.2, 1, 10\}$ (\Cref{tab:main_no_public_data}). Informally, the three settings provide high, moderate, and low amounts of privacy, respectively. The DPDMs use the $\rvv$-prediction DM config~\citep{salimans2022progressive} for $\varepsilon{=}0.2$ and the EDM config~\citep{karras2022elucidating} for $\varepsilon{=}\{1, 10\}$; see~\Cref{sec:ablation_studies}.
We use the Churn sampler~\citep{karras2022elucidating}:
the two settings (FID) and (Acc) are based on the same DM, differing only in sampler setting; see~\Cref{tab:sampler_settings_for_best_models_fid} and~\Cref{tab:sampler_settings_for_best_models_acc} for all sampler settings.\looseness=-1

DPDMs outperform all other existing models for all privacy settings and all metrics by large margins (see~\Cref{tab:main_no_public_data}). 
Interestingly, DPDM also outperforms DP-MEPF~\citep{harder2022differentially}, a method which is trained on additional public data, in 22 out of 24 setups. Generated samples for $\varepsilon{=}10$ are shown in~\Cref{fig:all_samples}. Visually, DPDM's samples appear to be of significantly higher quality than the baselines'.\looseness=-1

\textbf{Comparison to DP-SGD-trained classifiers.} Is it better to train a task-specific private classifier with DP-SGD directly, or can a non-private classifier trained on DPDM's synthethized data perform as well on downstream tasks? To answer this question, we train private classifiers with DP-SGD on real (training) data and compare them to our classifiers learnt using DPDM-synthesized data (details in~\Cref{sec:training_dp_cnn_classifiers}). For a fair comparison, we are using the same architectures that we have already been using in our main experiments to quantify downstream classification accuracy (results in \Cref{tab:cnn_classifier_new}; we test on real (test) data).
While direct DP-SGD training on real data outperforms the DPDM downstream classifier for logistic regression in all six setups (in line with empirical findings that it is easier to train classifiers with few parameters than large ones with DP-SGD~\citep{tramer2021differentially}), 
CNN classifiers trained on DPDM's synthetic data generally outperform DP-SGD-trained classifiers.
These results imply a very high utility of the synthetic data generated by DPDMs, demonstrating that DPDMs can potentially be used as an effective, privacy-preserving data sharing medium in practice. In fact, this approach is beneficial over training task-specific models with DP-SGD, because a user can generate as much data from DPDMs as they desire for various downstream applications without further privacy implications. %
To the best of our knowledge, it has not been demonstrated before in the DP generative modeling literature that image data generated by DP generative models can be used to train discriminative models on-par with directly DP-SGD-trained task-specific models.
\looseness=-1

\begin{table}
\centering
\caption{Unconditional CelebA generative performance. G-PATE and DataLens (\textdagger) use $\delta=10^{-5}$ (less privacy) and model images at 64x64 resolution.}
\scalebox{1.0}{
        \begin{tabular}{l c c}
            \toprule
            Method & DP-$\varepsilon$ & FID \\
            \midrule
            DPDM (\emph{ours}) & 1 & 71.8 \\
            \midrule
            DPDM (\emph{ours}) & 10 & \textbf{21.1} \\
            DP-Sinkhorn~\citep{cao2021don} & 10 & 189.5 \\
            DP-MERF~\citep{harder2021dp} & 10 & 274.0 \\
            \midrule 
            \midrule
            G-PATE~\citep{long2019scalable} (\textdagger) & 10 & 305.92 \\
            DataLens~\citep{wang2021datalens} (\textdagger) & 10 & 320.8 \\
            \bottomrule
        \end{tabular}
        }
    \label{tab:celeba_fid}
\end{table}

\begin{wraptable}{r}{0.3\textwidth}
    \caption{Noise multiplicity ablation on MNIST for $\varepsilon{=}1$. See~\Cref{tab:noise_augmentation_ablation}  for extended results.}
    \label{tab:noise_augmentation_ablation_fid_cnn_acc_no_wrapped}
    \centering
    \scalebox{0.95}{
    \begin{tabular}{l c c}
        \toprule
        $K$ & FID & CNN-Acc (\%) \\
        \midrule 
            1  & 76.9 & 91.7 \\
            2  & 60.1 & 93.1 \\
            4  & 57.1 & 92.8 \\
            8  & 44.8 & 94.1 \\
            16 & 36.9 & 94.2 \\
            32 & 34.8 & 94.4 \\
            \bottomrule
        \end{tabular}
    }
    \vspace{-8mm}
\end{wraptable}
\textbf{Unconditional color image generation.} On CelebA, we train models for $\varepsilon{=}\{1, 10\}$ (\Cref{tab:celeba_fid}). The two DPDMs use the EDM config~\citep{karras2022elucidating}
as well as the Churn sampler; see~\Cref{tab:sampler_settings_for_best_models_fid}.
For $\varepsilon{=}10$, DPDM again outperforms existing methods by a significant margin. DPDM's synthesized images (see~\Cref{fig:celeba_comparison}) appear much more diverse and vivid than the baselines' samples.

\subsection{Ablation Studies} \label{sec:ablation_studies}

\textbf{Noise multiplicity.} \Cref{tab:noise_augmentation_ablation_fid_cnn_acc_no_wrapped} shows results for DPDMs trained with different noise multiplicity $K$ (using the $\rvv$-prediction DM config)~\citep{salimans2022progressive}. 
As expected, increasing $K$ leads to a general trend of improving performance; however, the metrics start to plateau at around $K{=}32$.\looseness=-1

\textbf{Diffusion model config.} We train DPDMs with different DM configs (see~\Cref{sec:diffusion_backbones}). VP- and VE-based models~\citep{song2020} perform poorly
for all settings, while for $\varepsilon{=}0.2$
$\rvv$-prediction 
significantly outperforms the EDM config on MNIST (\Cref{tab:design_choice_eps_0_2_mnist_fid_cnn_acc}). On Fashion-MNIST, the advantage is less significant (extended~\Cref{tab:design_choice}).
For $\varepsilon{=}\{1, 10\}$, the EDM config performs better than $\rvv$-prediction.
Note that the denoiser parameterization for these configs is almost identical and their main difference is the noise distribution $p(\sigma)$ (\Cref{fig:viz_dist}). 
As discussed in \Cref{sec:dpsgd_training}, oversampling large noise levels $\sigma$ is expected to be especially important for the large privacy setting (small $\varepsilon$), which is validated by our ablation.\looseness=-1

\begin{table}
    \centering
    \caption{Sampler comparison on MNIST (see~\Cref{tab:ddim_sddim_churn} for results on Fashion-MNIST). We compare the Churn sampler~\citep{karras2022elucidating} to DDIM~\citep{song2021denoising}.\looseness=-1}
    \scalebox{0.94}{
        \begin{tabular}{l c c c c c}
            \toprule
            \multirow{2}{*}{Sampler} & \multirow{2}{*}{DP-$\varepsilon$} & \multirow{2}{*}{FID} & \multicolumn{3}{c}{Acc (\%)} \\
            \cmidrule(l{1em}r{1em}){4-6}
            & & & Log Reg & MLP & CNN \\
            \midrule 
            Churn (FID) & 0.2 & \textbf{61.9} & 65.3 & 65.8 & 71.9 \\
            Churn (Acc) & 0.2 & 104 & 81.0 & 81.7 & \textbf{86.3} \\
            Stochastic DDIM & 0.2 & 97.8 & 80.2 & 81.3 & 84.4 \\
            Deterministic DDIM & 0.2 & 120 & \textbf{81.3} & \textbf{82.1} & 84.8 \\
            \midrule
            Churn (FID) & 1 & \textbf{23.4} & 83.8 & 87.0 & 93.4 \\
            Churn (Acc) & 1 & 35.5 & \textbf{86.7} & 91.6 & \textbf{95.3} \\
            Stochastic DDIM & 1 & 34.2 & 86.2 & 90.1 & 94.9 \\
            Deterministic DDIM & 1 & 50.4 & 85.7 & \textbf{91.8} & 94.9 \\
            \midrule
            Churn (FID) & 10 & \textbf{5.01} & 90.5 & 94.6 & 97.3 \\
            Churn (Acc) & 10 & 6.65 & \textbf{90.8} & 94.8 & \textbf{98.1} \\
            Stochastic DDIM       & 10 & 6.13 & 90.4 & 94.6 & 97.5 \\
            Deterministic DDIM    & 10 & 10.9 & 90.5 & \textbf{95.2} & 97.7 \\
            \bottomrule
        \end{tabular}
    }
        \label{tab:ddim_sddim_churn_mnist_new}
\end{table}

\textbf{Sampling.} \Cref{tab:ddim_sddim_churn_mnist_new} shows results for different samplers: deterministic and stochastic DDIM~\citep{song2021denoising} as well as the Churn sampler
(tuned for high FID scores and downstream accuracy)\TIM{; see~\Cref{sec:diffusion_solvers} for details on the samplers}. Stochastic sampling is crucial to obtain good perceptual quality, as measured by FID (see poor performance of deterministic DDIM), while it is less important for downstream accuracy. We hypothesize that FID better captures image details that require a sufficiently accurate synthesis process. As discussed in~\Cref{sec:diffusion_models,sec:motivation}, stochastic sampling can help with that and therefore is particularly important in DP-SGD-trained DMs. We also observe that the advantage of the Churn sampler compared to stochastic DDIM becomes less significant as $\varepsilon$ increases. Moreover, in particular for $\varepsilon{=}0.2$ the FID-adjusted Churn sampler performs poorly on downstream accuracy. This is arguably because its settings sacrifice sample diversity, which downstream accuracy usually benefits from, in favor of synthesis quality (also see samples in \Cref{sec:extended_qualitative_results}).\looseness=-1

\vspace{-3.5mm}
\section{Conclusions} \label{sec:conclusion}
\vspace{-2.5mm}
We propose \textit{Differentially Private Diffusion Models} (DPDMs), which use DP-SGD to enforce DP guarantees. DMs are strong candidates for DP generative learning due to their robust training objective and intrinsically less complex denoising neural networks. \TC{To reduce the gradient variance during training, we introduce noise multiplicity and find that} DPDMs achieve state-of-the-art performance in common DP image generation benchmarks. Furthermore, downstream classifiers trained with DPDM-generated synthetic data perform on-par with task-specific discriminative models trained with DP-SGD directly. 
\TC{Note that despite their state-of-the-art results, DPDMs are based on a straightforward idea, this is, to carefully combine DMs with DP-SGD (leveraging the novel noise multiplicity). This ``simplicity'' is a crucial advantage, as it makes DPDMs a potentially powerful tool that can be easily adopted by DP practitioners. Based on our promising results, we conclude that DMs are an ideal generative modeling framework for DP generative learning.}
Moreover, we believe that advancing DM-based DP generative modeling is a pressing topic, considering the extremely fast progress of DM-based large-scale photo-realistic image generation systems~\citep{rombach2021highresolution,saharia2022imagen,ramesh2022dalle2,balaji2022eDiffi}. As future directions we envision applying our DPDM approach
during training of such large image generation DMs, as well as applying DPDMs to other types of data. \TIM{Furthermore, it may be interesting to pre-train our DPDMs with public data that is not subject to privacy constraints, similar to \citet{harder2022differentially}, which may boost performance.} \TIM{Also see \Cref{sec:ethics_and_reproducibility} for further discussion on ethics, reproducibility, limitations and more future work.}

\bibliography{iclr2022_conference,neurips_2022,iclr_2021,icml2023_conferece}
\bibliographystyle{tmlr}

\newpage
\tableofcontents
\appendix
\section{Differential Privacy and Proof of Theorem~\ref{thm:dp}}\label{sec:dp_proof}
In this section, we provide a short proof that the gradients released by the Gaussian mechanism in DPDM are DP. By DP, we are specifically refering to the $(\varepsilon, \delta)$-DP as defined in \cref{def:dp}, which approximates $(\varepsilon)$-DP. For completeness, we state the definition of R\'enyi Differential Privacy (RDP) \citep{mironov2017renyi}:
\begin{definition}(R\'enyi Differential Privacy) \label{def:rdp}
    A randomized mechanism  $\mathcal{M}: \mathcal{D} \to \mathcal{R}$ with domain $\mathcal{D}$ and range $\mathcal{R}$ satisfies $(\alpha, \epsilon)$-RDP if for any adjacent $d, d' \in \mathcal{D}$:
    \begin{align}
        D_\alpha(\mathcal{M}(d)|\mathcal{M}(d')) \leq \epsilon,
    \end{align}
    where $D_\alpha$ is the R\'enyi divergence of order $\alpha$. 
\end{definition}
Gaussian mechanism can provide RDP according to the following theorem:
\begin{theorem} (RDP Gaussian mechanism~\citep{mironov2017renyi}) \label{def:gauss_rdp} For query function $f$ with Sensitivity $S = \max_{d,d}||f(d)-f(d^\prime) ||_2$, the mechanism that releases $f(d) + \mathcal{N}(0, \sigma_\mathrm{DP}^2)$ satisfies $\left(\alpha, \alpha S^2/(2\sigma^2)\right)$-RDP.
\end{theorem}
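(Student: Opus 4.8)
The plan is to invoke the \emph{definition} of RDP (\Cref{def:rdp}) directly and reduce the entire statement to one closed-form computation: the R\'enyi divergence between two isotropic Gaussians that differ only in their means. For any adjacent datasets $d, d'$, the outputs of the mechanism are $\mathcal{M}(d) \sim \mathcal{N}(f(d), \sigma_\mathrm{DP}^2 \mI)$ and $\mathcal{M}(d') \sim \mathcal{N}(f(d'), \sigma_\mathrm{DP}^2 \mI)$, i.e.\ the same isotropic covariance with a shifted mean. Hence the quantity to control is $D_\alpha\left( \mathcal{N}(f(d), \sigma_\mathrm{DP}^2 \mI) \,|\, \mathcal{N}(f(d'), \sigma_\mathrm{DP}^2 \mI) \right)$, and the sensitivity $S$ will enter only at the very end through $\|f(d) - f(d')\|_2 \le S$.

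First I would establish the one-dimensional closed form. For $\alpha > 1$, the R\'enyi divergence is $D_\alpha(p|q) = \frac{1}{\alpha-1}\log \int p(x)^\alpha q(x)^{1-\alpha}\,dx$. Writing $p = \mathcal{N}(\mu_1, \sigma^2)$ and $q = \mathcal{N}(\mu_2, \sigma^2)$, the integrand is a Gaussian-type exponential whose exponent is $-\frac{1}{2\sigma^2}\bigl[\alpha(x-\mu_1)^2 + (1-\alpha)(x-\mu_2)^2\bigr]$. The key algebraic step is to complete the square in $x$: the bracketed quadratic rewrites as $(x-m)^2 + \alpha(1-\alpha)(\mu_1-\mu_2)^2$ with $m = \alpha\mu_1 + (1-\alpha)\mu_2$. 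The $(x-m)^2$ part integrates, against the normalization $1/(\sqrt{2\pi}\,\sigma)$, to exactly $1$, leaving $\int p^\alpha q^{1-\alpha}\,dx = \exp\left( \frac{\alpha(\alpha-1)(\mu_1-\mu_2)^2}{2\sigma^2} \right)$. Applying $\frac{1}{\alpha-1}\log(\cdot)$ cancels the $(\alpha-1)$ factor and yields $D_\alpha = \frac{\alpha(\mu_1-\mu_2)^2}{2\sigma^2}$.

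Next I would lift this to $\R^n$. Because the covariance is isotropic, both densities factorize over coordinates, so $\int p^\alpha q^{1-\alpha}$ factorizes into a product and the exponents sum to $\frac{\alpha(\alpha-1)\|\mu_1-\mu_2\|_2^2}{2\sigma^2}$; hence $D_\alpha\left( \mathcal{N}(\mu_1, \sigma^2\mI) \,|\, \mathcal{N}(\mu_2, \sigma^2\mI) \right) = \frac{\alpha\|\mu_1-\mu_2\|_2^2}{2\sigma^2}$. Substituting $\mu_1 = f(d)$, $\mu_2 = f(d')$, $\sigma = \sigma_\mathrm{DP}$, and bounding $\|f(d)-f(d')\|_2 \le S$ uniformly over adjacent $d, d'$ gives $D_\alpha(\mathcal{M}(d)|\mathcal{M}(d')) \le \frac{\alpha S^2}{2\sigma_\mathrm{DP}^2}$, which is precisely the claimed $\left(\alpha, \alpha S^2/(2\sigma_\mathrm{DP}^2)\right)$-RDP guarantee.

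The main obstacle is the core integral: carrying out the completion of the square correctly and verifying that the cross terms collapse to exactly $\alpha(1-\alpha)(\mu_1-\mu_2)^2$ (a short but error-prone expansion), together with confirming integrability so that the Gaussian normalization is legitimate. For $\alpha > 1$ the mixed exponent $\alpha(x-\mu_1)^2 + (1-\alpha)(x-\mu_2)^2$ still has positive leading coefficient (namely $1$) in $x^2$, so the integral converges and the manipulation is valid; the edge case $\alpha = 1$ is handled by continuity, where the limit recovers the KL divergence $\|\mu_1-\mu_2\|_2^2/(2\sigma^2)$ and matches the formula. Everything else is bookkeeping.
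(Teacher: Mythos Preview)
Your argument is correct and is exactly the standard derivation: reduce to the R\'enyi divergence between two isotropic Gaussians with a common covariance, complete the square to obtain $D_\alpha = \alpha\|\mu_1-\mu_2\|_2^2/(2\sigma^2)$, and then bound the mean shift by the sensitivity $S$. There is nothing to fault in the calculus or the handling of the $\alpha>1$ integrability and $\alpha\to 1$ limit.

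Note, however, that the paper does not actually supply its own proof of this theorem. It is stated in the appendix purely as a cited background result from \citet{mironov2017renyi} and is then invoked as a black box in the proof of \Cref{thm:dp} (bounding the sensitivity of $G_{batch}$ by $C/B$ and plugging into this theorem). So there is no ``paper's proof'' to compare against; your proposal simply fills in the standard argument that the paper delegates to the reference, and it matches Mironov's original proof essentially line for line.
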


Note that any $\mathcal{M}$ that satisfies $(\alpha,\epsilon)$-RDP also satisfies $(\epsilon+\frac{\log {1/\delta}}{\alpha-1}, \delta)$-DP. 

We slightly deviate from the notation used in the main text to make the dependency of variables on input data explicit. Recall from the main text that the per-data point loss is computed as an average over $K$ noise samples:
\begin{align} \label{eq:appendix_li}
    \tilde l_i = \frac{1}{K} \sum_{k = 1}^K \lambda(\sigma_{ik}) \|D_\vtheta(\rvx_i + \rvn_{ik}, \sigma_{ik}) - \rvx_i \|_2^2,\, \text{where} \, \{(\sigma_{ik}, \rvn_{ik})\}_{k = 1}^K \sim p(\sigma) \gN\left(\bm{0}, \sigma^2\right).
\end{align}
In each iteration of \Cref{algo:dpdm}, we are given a (random) set of indices $\sB$ of expected size $B$ with no repeated indices, from which we construct a mini-batch $\left\{\rvx_i\right\}_{i \in \sB}$.
In our implementation (which is based on~\citet{yousefpour2021opacus}) of the Gaussian mechanism for gradient sanitization, we compute the gradient of $l_i$ and apply clipping with norm $C$, and then divide the clipped gradients by the expected batch size $B$ to obtain the batched gradient $G_{batch}$:
\begin{align} \label{eqn:gradient_batch}
    G_{batch}(\{\rvx_i\}_{i \in \sB}) = \frac{1}{B} \sum_{i \in \sB} \texttt{clip}_C\left(\nabla_\vtheta l(\rvx_i)\right).
\end{align}
Finally, Gaussian noise $\rvz \sim \gN(\bm{0}, \sigma_\mathrm{DP}^2)$ is added to $G_{batch}$ and released as the response $\tilde{G}_{batch}$:
\begin{align}
    \tilde{G}_{batch}(\{\rvx_i\}_{i \in \sB}) = G_{batch}(\{\rvx_i\}_{i \in \sB}) + \frac{C}{B} \rvz, \quad \rvz \sim \gN(\bm{0}, \sigma_\mathrm{DP}^2 \mI)
\end{align}

Now, we can restate Theorem~\ref{thm:dp} as follows with our modified notation:
\begin{theorem} For noise magnitude $\sigma_\mathrm{DP}$, dataset $d = \{\rvx_i\}_{i = 1}^N$, and set of (non-repeating) indices $\sB$, releasing $\tilde{G}_{batch}(\{\rvx_i\}_{i \in \sB})$ satisfies $\left( \alpha, \alpha / 2 \sigma_\mathrm{DP}^2 \right)$-RDP.
\end{theorem}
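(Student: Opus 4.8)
The plan is to reduce the statement to the R\'enyi Gaussian mechanism (\Cref{def:gauss_rdp}): the released quantity $\tilde G_{batch}(\{\rvx_i\}_{i\in\sB})$ is precisely $G_{batch}(\{\rvx_i\}_{i\in\sB}) + \tfrac CB\rvz$ with $\rvz\sim\gN(\bm 0,\sigma_\mathrm{DP}^2\mI)$, so everything comes down to (i) bounding the $L_2$-sensitivity of the map $d\mapsto G_{batch}(\{\rvx_i\}_{i\in\sB})$ over neighboring datasets, and (ii) the elementary bookkeeping of how the $\tfrac CB$ prefactor on the injected noise interacts with that sensitivity.

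First I would fix $\sB$ together with a realization of the noise-multiplicity draws $\{(\sigma_{ik},\rvn_{ik})\}$, so that each $\tilde l_i$ (\Cref{eq:appendix_li}) and hence $G_{batch}$ becomes a deterministic function of the data. For neighboring $d,d'$ that differ in a single entry, $G_{batch}(\{\rvx_i\}_{i\in\sB})$ changes only through the summand of $\sum_{i\in\sB}\texttt{clip}_C(\nabla_\vtheta\tilde l_i)$ indexed by the differing point: if that point lies outside $\sB$ the two values coincide, and otherwise exactly one summand is added or removed. Since $\texttt{clip}_C(\cdot)$ has $L_2$ norm at most $C$ by definition, that summand has norm at most $C$, and dividing by the fixed expected batch size $B$ gives $\|G_{batch}(\{\rvx_i\}_{i\in\sB}) - G_{batch}(\{\rvx'_i\}_{i\in\sB})\|_2 \le C/B$, uniformly over the noise-multiplicity randomness. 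This last point is exactly where ``noise multiplicity incurs no privacy cost'' becomes rigorous: $\tilde l_i$ still depends only on $\rvx_i$ (and on randomness drawn independently of the data), so it is still a per-example loss whose gradient is still clipped to norm $C$, and replacing $l_i$ by the $K$-sample average $\tilde l_i$ changes none of the sensitivity arithmetic.

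Next I would invoke \Cref{def:gauss_rdp} with sensitivity $S = C/B$. Writing $\tfrac CB\rvz\sim\gN\!\bigl(\bm 0,(\tfrac CB\sigma_\mathrm{DP})^2\mI\bigr)$, the effective Gaussian noise scale is $\tfrac CB\sigma_\mathrm{DP}$, so the bound reads $\bigl(\alpha,\ \alpha S^2/(2(C\sigma_\mathrm{DP}/B)^2)\bigr) = \bigl(\alpha,\ \alpha/(2\sigma_\mathrm{DP}^2)\bigr)$; the $C$ and $B$ cancel, which is the whole point of scaling the injected noise by $C/B$ rather than by $C$. To discharge the conditioning on the noise-multiplicity draws I would note that they are drawn from a fixed, data-independent distribution, identical under $d$ and $d'$, so by quasi-convexity of the R\'enyi divergence (equivalently, by post-processing of the conditional Gaussian-mechanism output together with an independent seed) the conditional bound $\alpha/(2\sigma_\mathrm{DP}^2)$ passes to the unconditional mechanism. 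If desired one can append the standard $(\alpha,\epsilon)$-RDP $\Rightarrow (\epsilon+\log(1/\delta)/(\alpha-1),\delta)$-DP conversion, and recall that the end-to-end training guarantee follows by RDP composition over the $T$ iterations plus subsampled-RDP amplification from the Poisson sampling of $\sB$, as discussed in the main text.

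I do not expect a deep obstacle here — this is a careful instantiation of a known mechanism — but two places deserve attention: (a) the neighboring-relation convention, since the stated constant relies on a single per-example summand being added/removed (the add/remove relation used with Poisson subsampling); a replace-one relation would yield sensitivity $2C/B$ and a factor-four-worse bound; and (b) making the ``average out the noise-multiplicity randomness'' step honest rather than hand-waved, for which quasi-convexity of the R\'enyi divergence (or the data-independent-seed / post-processing viewpoint) is the right tool.
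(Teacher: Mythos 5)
Your proof is correct and follows essentially the same route as the paper's: bound the sensitivity of $G_{batch}$ by $C/B$ using the add/remove neighboring relation and the norm bound from clipping, then apply the R\'enyi Gaussian mechanism with noise scale $(C/B)\sigma_\mathrm{DP}$ so that $C$ and $B$ cancel. Your explicit treatment of the noise-multiplicity randomness (conditioning on the draws and then discharging via quasi-convexity of the R\'enyi divergence) is a point the paper leaves implicit, but it is a refinement of the same argument rather than a different one.
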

\begin{proof}
Without loss of generality, consider two neighboring datasets $d = \{\rvx_i\}_{i = 1}^N$ and $d^\prime = d \cup \rvx^\prime$, $\rvx^\prime \notin d$, and mini-batches $\{\rvx_i\}_{i \in \sB}$ and $\rvx^\prime \cup \{\rvx_i\}_{i \in \sB}$, where the counter-factual set/batch has one additional entry $\rvx^\prime$. We can bound the difference of their gradients in $L_2$-norm as:
\begin{align*}
    & \left \lVert G_{batch}(\{\rvx_i\}_{i \in \sB}) - G_{batch}(\rvx' \cup \{\rvx_i\}_{i \in \sB}) \right \rVert_2 \\
    &= \left\lVert \frac{1}{B} \sum_{i \in \sB} \mathrm{clip}_C\left(\nabla_\vtheta l(\rvx_i)\right) - \left (\frac{1}{B} \mathrm{clip}_C\left(\nabla_\vtheta l(\rvx')\right) + \frac{1}{B} \sum_{i \in \sB} \mathrm{clip}_C\left(\nabla_\vtheta l(\rvx_i)\right) \right)  \right \rVert_2 \\
    &= \left \lVert - \frac{1}{B} \mathrm{clip}_C\left(\nabla_\vtheta l(\rvx')\right) \right \rVert_2\\
    &= \frac{1}{B} \left \lVert \mathrm{clip}_C\left(\nabla_\vtheta l(\rvx')\right)  \right \rVert_2 \leq \frac{C}{B}.
\end{align*}
We thus have \emph{sensitivity} $S(G_{batch}) = \frac{C}{B}$. Furthermore, since $\rvz \sim \gN(\bm{0}, \sigma_\mathrm{DP}^2)$, $ (C/B)\rvz \sim \gN(\bm{0}, (C/B)^2 \sigma_\mathrm{DP}^2)$.
Following standard arguments, releasing $ \tilde{G}_{batch}(\{\rvx_i\}_{i \in \sB}) = G_{batch}(\{\rvx_i\}_{i \in \sB}) +  (C/B) \rvz$ satisfies $\left(\alpha, \alpha / 2 \sigma^2_\mathrm{DP} \right)$-RDP ~\citep{mironov2017renyi}.
\end{proof}

In practice, we construct mini-batches by sampling the training dataset for privacy amplification via Poisson Sampling~\citep{mironov2019r}, and compute the overall privacy cost of training DPDM via RDP composition~\citep{mironov2017renyi}. We use these processes as implemented in Opacus~\citep{yousefpour2021opacus}.

For completeness, we also include the Poisson Sampling algorithm in \Cref{algo:poisson}.

\begin{algorithm}[H]
\caption{Poisson Sampling} %
\begin{algorithmic}\label{algo:poisson}
\small
\STATE {\bfseries Input }: Index range $N$,  subsampling rate $q$%
\STATE {\bfseries Output}: Random batch of indices $\sB$ (of expected size $B$)%
\STATE $\mathrm{c} = \{c_i\}_{i=1}^N {\sim} \texttt{Bernoulli}(q)$
\STATE $\sB = \{ j : j \in \{1, \dots, N\}, \,  \mathrm{c}_j = 1 \} $
\end{algorithmic}
\end{algorithm}

\section{DPGEN Analysis}\label{sec:app_dpgen_failure}

In this section, we provide a detailed analysis of the privacy guarantees provided in DPGEN \citep{chen2022dpgen}. %

As a brief overview, \citet{chen2022dpgen} proposes to learn an energy function $q_\vartheta(\rvx)$ by optimizing the following objective (\cite{chen2022dpgen}, Eq. 7):
\begin{equation*}
    l(\theta; \sigma) = \frac{1}{2}\mathbb{E}_{p(\rvx)} \mathbb{E}_{\tilde{\rvx}\sim \mathcal{N}(\rvx, \sigma^2)}\left[\left|\left| \frac{\tilde{\rvx} - \rvx}{\sigma^2} - \nabla_\rvx \log q_\vartheta (\rvx) \right|\right|^2 \right].
\end{equation*}
In practice, the first expectation is replaced by averaging over examples in a private training set $d=\{x_i : x_i \in Y, i \in 1, \dots, m\}$, and $\frac{\tilde{x}-x}{\sigma^2}$ is replaced by $d_i^r = (\tilde{x}_i - x_i^r)/\sigma_i^2$ for each $i$ in $[1, m]$ (not to be confused with $d$ which denotes the dataset in the DP context), where $x_i^r$ is the query response produced by a data-dependent randomized response mechanism.

We believe that there are three errors in DPGEN that renders the privacy gurantee in DPGEN false. We formally prove the first error in the following section, and state the other two errors which are factual but not mathematical. The three errors are:
\begin{itemize}
    \item The randomized response mechanism employed in DPGEN has a output space that is only supported (has non-zero probability) on combinations of its input \emph{private} dataset. $\epsilon$-differential privacy cannot be achieved as outcomes with non-zero probability\footnote{probability over randomness in the privacy mechanism} can have zero probability when the input dataset is changed by one element. Furthermore, adversaries observing the output can immediately deduce elements of the private dataset.
    \item The $k$-nearest neighbor filtering used by DPGEN to reduce the number of candidates for the randomized response mechanism is a function of the private data. The likelihood of the $k$-selected set varies with the noisy image $\tilde{x}$ (line 20 of algorithm 1 in DPGEN), and is not correctly accounted for in DPGEN.
    \item The objective function used to train the denoising network in DPGEN depends on both the ground-truth denoising direction and a noisy image provided to the denoising network. The noisy image is dependent on the training data, and hence leaks privacy. The privacy cost incurred by using this noisy image is not accounted for in DPGEN.
\end{itemize}

To prove the first error, we begin with re-iterating the formal definition of differential privacy (DP):
\begin{definition}($\epsilon$-Differential Privacy)\label{def:edp}
A randomized mechanism $\mathcal{M}: \mathcal{D} \to \mathcal{I}$ with domain $\mathcal{D}$ and image $\mathcal{I}$ satisfies $(\varepsilon)$-DP if for any two adjacent inputs $d, d' \in \mathcal{D}$ differing by at most one entry, and for any subset of outputs $S \subseteq \mathcal{I}$ it holds that
\begin{equation}
    \mathbf{Pr}\left[\mathcal{M}(d) \in S \right] \leq e^{\varepsilon} \mathbf{Pr}\left[\mathcal{M}(d') \in S \right].
\end{equation}
\end{definition}

The randomized response (RR) mechanism is a fundamental privacy mechanism in differential privacy. A key assumption required in the RR mechanism is that the choices of random response are not dependent on private information, such that when a respondent draws their response randomly from the possible choices, no private information is given. More formally, we give the following definition for randomized response over multiple choices\footnote{This mechanism is analogous to the coin flipping mechanism, where the participant first flip a biased coin to determine whether they'll answer truthfully or lie with probability of lying $\frac{k}{e^\epsilon + k - 1}$, and if they were to lie, they then roll a fair $k$ dice to determine the response.}:
\begin{definition}\label{def:rr}
Given a fixed response set $Y$ of size $k$.  Let $d=\{x_i : x_i \in Y, i \in 1, \dots, m\}$ be an input dataset. Define ``randomized response" mechanism $\mathcal{RR}$ as:
\begin{equation}
    \mathcal{RR}(d) = \{G(x_i)\}_{i \in [1,m]}
\end{equation}
where, 
\begin{equation}
    G(x_i) = \begin{cases} x_i, \, \text{with probability} \; \frac{e^\epsilon}{e^\epsilon + k - 1} \\ x_i' \in Y \setminus x_i , \, \text{with probability} \; \frac{1}{e^\epsilon + k - 1} \end{cases}.
\end{equation}
\end{definition}

A classical result is that the mechanism $\mathcal{RR}$ satisfies $\epsilon$-DP \citep{dwork2014algorithmic}.

DPGEN considers datasets of the form $d=\{x_i : x_i \in \mathbb{R}^n, i \in 1, \dots, m\}$.
It claims to guarantee differential privacy by applying a stochastic function $H$ to each element of the dataset defined as follows (Eq. 8 of \cite{chen2022dpgen}):

\begin{equation*}
    \Pr[H(\tilde{x}_i) = w] = \begin{cases} \frac{e^\epsilon}{e^\epsilon + k - 1},\, w=x_i \\ \frac{1}{e^\epsilon + k - 1},\, w=x_i' \in \mathrm{X} \setminus x_i \end{cases},
\end{equation*}
where $\mathrm{X} = \{x_j : max(\tilde{x}_i - x_j)/\sigma_j \leq \beta, x_j \in d\}$ (max is over the dimensions of $\tilde{x}_i - x_j$), $|\mathrm{X}| = k \geq 2$, and $\tilde{x}_i = x_i + z_i$, $z_i \sim \mathcal{N}(0, \sigma^2 I)$. We first note that $H$ is not only a function of $\tilde{x}_i$ but also $\mathrm{X} \cup x_i$, since its image is determined by $\mathrm{X} \cup x_i$. That is, changes in $X$ will alter the possible outputs of $H$, independently from the value of $\tilde{x}_i$. We make this dependency explicit in our formulation here-forth. This distinction is important as it determines the set of possible outcomes that we need to consider for in the privacy analysis. The authors also noted that $z_i$ is added for training with the denoising objective, not for privacy, so this added Gaussian noise is not essential to the privacy analysis. Furthermore, since $k$ (or equivalently $\beta$) is a hyperparameter that can be tuned, we consider the simpler case where $k=m$, i.e. $\mathrm{X} = d$, as done in the appendix (Eq. 9) by the authors. Thereby we define the privacy mechanism utilized in DPGEN as follows:
\begin{definition}\label{def:arr}
Let $d=\{x_i : x_i \in \mathbb{R}^n, i \in 1, \dots, m\}$ be an input dataset. Define ``data dependent randomized response" $\mathcal{M}$ as:
\begin{equation}
    \mathcal{M}(d) = \{H(x_i, d)\}_{i \in [1,m]}
\end{equation}
where, 
\begin{equation}
    H(x_i, d) = \begin{cases} x_i, \, \text{with probability} \; \frac{e^\epsilon}{e^\epsilon + m - 1} \\ x_i' \in d \setminus x_i , \, \text{with probability} \; \frac{1}{e^\epsilon + m - 1} \end{cases}.
\end{equation}
\end{definition}

Since the image of $H(x_i, d)$ is $d$, $\mathcal{M}(d)$ is only supported on $d^m$.\footnote{We mean dataset-exponentiation in the sense of repeated cartesian products between sets, i.e. $d^2 = d \otimes d$} In other words, the image of $\mathcal{M}$ is data dependent, and any outcome $O$ (which are sets of $\mathbb{R}^n$ tensors, of cardinality $m$) that include elements which are not in $d$ would have a probability of zero to be the outcome of $\mathcal{M}(d)$, i.e. if there exists $z \in O$ and $z \notin d$, then $\Pr[{\mathcal{M}(d)=O}] = 0$.

To construct our counter-example, we start with considering two neighboring datasets: the training data $\mathrm{d}=\{x_i : x_i \in \mathbb{R}^n, i \in 1, \dots, m\}$, and a counter-factual dataset $\mathrm{d'}=\{x_1':  x_1' \in \mathbb{R}^n, x_i : x_i \in \mathbb{R}^n, i \in 2, \dots, m\}$, differing in their first element ($x_1 \neq x_1'$). Importantly, since differential privacy requires that the likelihoods of outputs to be similar for all valid pairs of neighboring datasets, we are free to assume that elements of $\mathrm{d}$ are unique, i.e. no two rows of $\mathrm{d}$ are identical.

Another requirement of differential privacy is that the likelihood of any subsets of outputs must be similar, hence we are free to choose any valid response for the counter-example. Thus, letting $O$ denote the outcome of $\mathcal{M}(\mathrm{d})$, we choose $O = d = \{x_1, \dots, x_m\}$. Clearly, by Definition 0.3, this is a plausible outcome of $\mathcal{M}(\mathrm{d})$ as it is in the support $\mathrm{d}^m$. 
However, $O$ is not in the support of $\mathcal{M}(\mathrm{d'})$ since the first element $x_1$ is not in the image of $H(\cdot \, , \mathrm{d'})$; that is $\Pr[H(x, \mathrm{d'})=x_1] = 0$ for all $x \in \mathrm{d'}$.
Privacy protection is violated since any adversary observing $O$ can immediately deduce the participation of $x_1$ in the data release as opposed to any counterfactual data $x_1'$. 

More formally, consider response set $T = \{O\} \subset \mathrm{d}^m$, and $\mathrm{d}^m$ is the image of $\mathcal{M}(\mathrm{d})$, we have

\begin{align}
    \Pr[\mathcal{M}(d)\in T] &= \Pr[\mathcal{M}(d)=O] \\
    &= \Pr[H(x_1)=x_1] \prod_{i = 2}^m \Pr[H(x_i)=x_i] \quad \text{(independent dice rolls)} \\
    &= \frac{e^\epsilon}{e^\epsilon + m - 1}\prod_{i = 2}^m \Pr[H(x_i)=x_i] \quad \text{(apply \cref{def:arr})}  \\
    &> 0\prod_{i = 2}^m \Pr[H(x_i)=x_i] \\
    &= \Pr[H(x_1')=x_1] \prod_{i = 2}^m \Pr[H(x_i)=x_i] \\
    &= \Pr[\mathcal{M}(d')=O] = \Pr[\mathcal{M}(d')\in T].
\end{align}
We can observe that $ \Pr[\mathcal{M}(d')\in T] = 0$, as shown in line 9. Clearly, this result violates $\epsilon$-DP for all $\epsilon$, which requires $\Pr[\mathcal{M}(d)\in T] \leq e^\epsilon \Pr[\mathcal{M}(d')\in T]$.

In essence, by using private data to form the response set, we make the image of the privacy mechanism data-dependent. This in turn leaks privacy, since an adversary can immediately rule-out all counter-factual datasets that do not include every element of the response $O$, as these counter-factuals now have likelihood 0. To fix this privacy leak, one could determine a response set a-priori, and use the $\mathcal{RR}$ mechanism in \Cref{def:rr} to privately release data. This modification may not be feasible in practice, since constructing a response set of finite size ($k$) suitable for images is non-trivial. Hence, we believe that it would require fundamental modifications to DPGEN to achieve differential privacy.

Regarding error 2, we point out that in the paragraph following Eq. 8 in DPGEN, $X$ is defined as the set of $k$ points in $\mathrm{d}$ that are closest to $\tilde{x}_i$ when weighted by $\sigma_j$. This means that the membership of $X$ is dependent on the value of $\tilde{x}_i$. Thus, any counter-factual input $x_i'$ and $\tilde{x}'_i$ with a different set of $k$ nearest neighbors could have many possible outcomes with 0 likelihood under the true input. In essence, this is a more extreme form of data-dependent randomized response where the response set is dependent on both $\mathrm{d}$ and $x_i$.

Regarding error 3, the loss objective in DPGEN (Eq. 7 of DPGEN, $l = \frac{1}{2} E_{p(x)}E_{\tilde{x} \sim N(x, \sigma^2)}\left[||\frac{\tilde{x} - x}{\sigma^2} - \nabla_x \log q_\theta (\tilde{x}) ||^2 \right]$) includes the term $\nabla_x \log q_\theta(\tilde{x})$, and $\tilde{x}$ is also a function of the private data that is yet to be accounted for at all in the privacy analysis of DPGEN. Hence, one would need to further modify the learning algorithm in DPGEN, such that the inputs to the score model are either processed through an additional privacy mechanism, or sampled randomly without dependence on private data. 

Regarding justifying the premise that DPGEN implements the data-dependent randomized response mechanism, we have verified that the privacy mechanism implemented in the repository of DPGEN (\url{https://github.com/chiamuyu/DPGEN}\footnote{In particular, we refer to the code at commit: 1f684b9b8898bef010838c6a29c030c07d4a5f87.}) is indeed data-dependent: 

In line 30 of \texttt{losses/dsm.py}: 
\begin{verbatim}
    sample_ix = random.choices(range(k), weights=weight)[0]
\end{verbatim}
randomly selects an index in the range of $[0,k-1]$, which is then used in line 46, 
\begin{verbatim}
    sample_buff.append(samples[sample_ix]),
\end{verbatim}
to index the private training data and assigned to the output of
\begin{verbatim}
    sample_buff.
\end{verbatim}
Values of this variable are then accessed on line 85 to calculate the $\frac{\tilde{x} - x^r}{\sigma^2}$ (as $x^r$) term in the objective function (\cite{chen2022dpgen}, Eq. 7).
\section{Model and Implementation Details} \label{sec:model_and_implementation_details}
\subsection{Diffusion Model Configs} \label{sec:diffusion_backbones}
\definecolor{olive}{rgb}{0.5, 0.5, 0.0}
\definecolor{maroon}{rgb}{0.69, 0.19, 0.38}
\definecolor{celestialblue}{rgb}{0.29, 0.59, 0.82}
\definecolor{darkgreen}{rgb}{0.0, 0.6, 0.0}
\definecolor{grey}{rgb}{0.5,0.5,0.5}
\definecolor{darkblue}{rgb}{0.19, 0.19, 0.62}
\definecolor{silver}{rgb}{0.7,0.7,0.7}

\newcommand{\TODO}[1]{{\color{red}TODO: #1}}
\newcommand{\Todo}[1]{{\color{red}TODO: #1}}
\newcommand{\TBD}[1]{{\color{red}#1}}
\newcommand{\WEAK}[1]{\color{silver}{#1}}
\newcommand{\FINAL}[1]{#1}

\def\clap#1{\hbox to 0pt{\hss #1\hss}}%
\def\initials#1{\protect\clap{\smash{\raisebox{1.4ex}{\tiny{\textsf{\textit{#1}}}}}}}%
\newcommand{\EDIT}[4][]{\strut{\color{#3}{\hspace{0pt}\initials{#2}{\color{red}\sout{#1}}{#4}}}}
\newcommand{\TK}[2][]{\protect\EDIT[#1]{TK}{olive}{#2}}
\newcommand{\TA}[2][]{\protect\EDIT[#1]{TA}{maroon}{#2}}
\newcommand{\JL}[2][]{\protect\EDIT[#1]{JL}{celestialblue}{#2}}
\newcommand{\SL}[2][]{\protect\EDIT[#1]{SL}{darkgreen}{#2}}
\newcommand{\MA}[2][]{\protect\EDIT[#1]{MA}{darkblue}{#2}}

\newcommand{\norm}[1]{\left\lVert#1\right\rVert}
\newcommand{\abs}[1]{\lvert#1\rvert}
\newcommand{\real}{\mathbb{R}}
\newcommand{\integer}{\mathbb{Z}}
\newcommand{\low}[1]{\raisebox{0pt}[0pt][0pt]{#1}}
\newcommand{\lowsqrt}[1]{\low{$\sqrt{#1}$}}

\newcommand{\xx}{\boldsymbol{x}}
\newcommand{\XX}{\boldsymbol{X}}
\newcommand{\xxt}{\boldsymbol{\tilde x}}
\newcommand{\xxh}{\boldsymbol{\hat x}}
\newcommand{\zz}{\boldsymbol{z}}

\newcommand{\conv}{\ast}
\newcommand{\mult}{\odot}

\newcommand{\config}[1]{config~{\sc #1}}

\newcommand\undefcolumntype[1]{\expandafter\let\csname NC@find@#1\endcsname\relax}
\newcommand\forcenewcolumntype[1]{\undefcolumntype{#1}\newcolumntype{#1}}

\newcommand{\s}{\hphantom{0}}

\newcommand{\plusplus}{\raisebox{0.15ex}[0pt][0pt]{\scalebox{0.9}{++}}}
\newcommand{\ddpm}{DDPM}
\newcommand{\ddpmpp}{DDPM\plusplus}
\newcommand{\ncsnpp}{NCSN\plusplus}

\newcommand{\stext}[1]{\text{\raisebox{0pt}[0pt][0pt]{#1}}}
\newcommand{\smin}{\sigma_\stext{min}}
\newcommand{\smax}{\sigma_\stext{max}}
\newcommand{\sdyn}{\sigma_\stext{d}}
\newcommand{\tmin}{\odetime_\stext{min}}
\newcommand{\tmax}{\odetime_\stext{max}}
\newcommand{\bmin}{\beta_\text{min}}
\newcommand{\bmax}{\beta_\text{max}}
\newcommand{\bdyn}{\beta_\text{d}}
\newcommand{\sdata}{\sigma_\text{data}}
\newcommand{\cskip}{c_\text{skip}}
\newcommand{\cout}{c_\text{out}}
\newcommand{\cin}{c_\text{in}}
\newcommand{\cnoise}{c_\text{noise}}

\newcommand{\origT}[1]{u_{#1}}
\newcommand{\origTsup}[2]{u_{#1}^{#2}}
\newcommand{\origN}{M}

\newcommand{\Schurn}{S_\text{churn}}
\newcommand{\Stmin}{S_\text{tmin}}
\newcommand{\Stmax}{S_\text{tmax}}
\newcommand{\Snoise}{S_\text{noise}}
\newcommand{\StminStmax}{S_\text{tmin,tmax}}
\newcommand{\StminStmaxSnoise}{S_\text{tmin,tmax,noise}}

\newcommand{\confhdr}[1]{\makebox[1em]{\textsc{#1}}}

\newcommand{\vparagraph}[1]{\vspace*{-1mm}\paragraph{#1}}

\newcommand{\pd}{q}
\newcommand{\condu}{\kappa}
\newcommand{\freq}{\boldsymbol{\nu}}
\newcommand{\fp}{r}
\newcommand{\ff}{\boldsymbol{f}}
\newcommand{\gb}{\boldsymbol{g}}
\newcommand{\Fargs}{\scalebox{0.85}{$\cin(\sigma) (\signal {+} \noise); \cnoise(\sigma)$}}

\pgfplotsset{xtick style={draw=none}}
\pgfplotsset{ytick style={draw=none}}
\pgfplotsset{major grid style={gray!40}}
\pgfplotsset{every axis plot/.style={thick, mark size=1.5pt}}
\pgfplotsset{legend image code/.code={\draw[mark repeat=2, mark phase=2] plot coordinates {(0cm, 0cm) (0.2cm, 0cm) (0.4cm, 0cm)};}} %

\definecolor{C0}{rgb}{0.121569, 0.466667, 0.705882}
\definecolor{C1}{rgb}{1.000000, 0.498039, 0.054902}
\definecolor{C2}{rgb}{0.172549, 0.627451, 0.172549}
\definecolor{C3}{rgb}{0.839216, 0.152941, 0.156863}
\definecolor{C4}{rgb}{0.580392, 0.403922, 0.741176}
\definecolor{C5}{rgb}{0.549020, 0.337255, 0.294118}
\definecolor{C6}{rgb}{0.890196, 0.466667, 0.760784}
\definecolor{C7}{rgb}{0.498039, 0.498039, 0.498039}
\definecolor{C8}{rgb}{0.737255, 0.741176, 0.133333}
\definecolor{C9}{rgb}{0.090196, 0.745098, 0.811765}

\newcommand{\fillbetween}[3][]{\addplot+[name path=A, draw=none, mark=none, forget plot] #2; \addplot+[name path=B, draw=none, mark=none, forget plot] #3; \addplot[#1] fill between[of=A and B]}

\newcommand{\cs}{}
\newcommand{\hh}{0mm}
\newcommand{\hhh}{0mm}
\newcommand{\hhhh}{0mm}
\newcommand{\vvv}{0mm}
\newcommand{\vvvv}{0mm}

\newcommand{\vlabel}[3]{\makebox[0mm][l]{\rotatebox{90}{\makebox[#2][c]{#3}}}\hspace{#1}}
\newcommand{\hrlabel}[1]{\hfill\makebox[0mm]{#1}\hfill}
\newcommand{\vrlabel}[1]{\vfill\makebox[0mm]{#1}\vfill}

\newcommand{\tickYtop}[1]{\raisebox{0ex}[1ex][1ex]{#1}}
\newcommand{\tickYtopD}[1]{\raisebox{-1.5ex}[0ex][0ex]{#1}}
\newcommand{\tickFID}{\tickYtopD{FID}}
\newcommand{\tickLoss}{\tickYtopD{loss}}
\newcommand{\tickTau}{\tickYtopD{$\lVert\lte\rVert$}}
\newcommand{\tickNFE}[1]{\smash{NFE$=$}{$#1$}\hspace*{1em}}
\newcommand{\tickSchurn}[1]{$\mathllap{\smash{\Schurn{=}}}{#1}$}
\newcommand{\tickSchurnB}[1]{\smash{$\Schurn{=}$}{$#1$}\hspace*{2em}}
\newcommand{\tickRho}[1]{$\mathllap{\smash{\rho{=}}}{#1}$}
\newcommand{\tickSigma}[1]{$\mathllap{\smash{\sigma{=}}}{#1}$}

\newcommand{\atphantom}{\vphantom{${}^2$}}
\newcommand{\AProcedure}[2]{\Procedure{\smash{#1}}{\smash{#2}}}
\newcommand{\AComment}[1]{\Comment{\smash{#1}}}
\newcommand{\AState}[1]{\State{\smash{#1}}}
\newcommand{\AFor}[1]{\For{\smash{#1}}}
\newcommand{\AIf}[1]{\If{\smash{#1}}}
\newcommand{\DState}[1]{\State{#1 \vphantom{$\displaystyle\Bigg)$}}}
\newcommand{\MState}[1]{\State\raisebox{0mm}[3.2ex][1.8ex]{#1}}

\newcommand{\eqq}[2]{\pbox[t][5ex]{\linewidth}{{#1}\\{#2}}}
\newcommand{\eqqq}[3]{{\pbox[t][8ex]{\linewidth}{{#1}\\{#2}\\{#3}}}}
\newcommand{\ccbox}[1]{\makebox[3.6em][l]{#1}}
\newcommand{\csbox}[1]{\makebox[2.0em][l]{#1}}
\newcommand{\cnbox}[2]{{#1}\hspace*{1em}\scalebox{0.9}{(note: #2)}}
\newcommand{\cpboxa}[1]{\makebox[1.2em][l]{#1}}
\newcommand{\cpboxb}[1]{\makebox[2.0em][l]{#1}}
\newcommand{\cpboxc}[1]{\makebox[1.4em][l]{#1}}
\newcommand{\cpboxd}[1]{\makebox[2em][l]{#1}}
\newcommand{\rroot}[1]{\,{\vphantom{X}}^\rho\!\!\!\!\sqrt{#1}}
\newcommand{\pphantom}{\vphantom{$= 1 / \sqrt{\sigma^2 + 1} \sdata^2 / \left(\sigma^2 + \sdata^2 \right)$}}
\newcommand{\lphantom}{\vphantom{$= \frac{1}{4} / \sqrt{\sigma^2 + 1} \sdata^2 / \left(\sigma^2 + \sdata^2 \right)$}}
\newcommand{\lfphantom}{\vphantom{$= 1 / \sqrt{\sigma^2 + 1} \sdata^2 / \left(\sigma^2 + \sdata^2 \right)_i$}}
\newcommand{\llphantom}{\lphantom\vphantom{$\underset{i}{\argmin}$}}
\newcommand{\titlerowww}[1]{\makebox[0mm][l]{{\bf #1}}\\}
\newcommand{\titlerowwb}[1]{{\bf #1}\pphantom}

\newcommand{\tabSpecifics}{
\tabulinesep=0.00ex%
\tabulinestyle{0.17mm}%
\begin{table}[t]%
\centering%
\caption{\label{tab:specifics}%
Four popular DM configs from the literature.
}
\vspace{1.5mm}%

\resizebox{\textwidth}{!}{%
\begin{tabu}{@{}lllll@{\hspace*{-1mm}}}
\tabucline{-}
& {\bf VP~\citep{song2020}}
& {\bf VE~\citep{song2020}}
& {\bf $\rvv$-prediction ~\citep{salimans2022progressive}}
& {\bf EDM~\citep{karras2022elucidating}}
\\
\hline\vspace*{-3.5mm}\\
\titlerowww{Network and preconditioning}
Skip scaling\hfill$\cskip(\sigma)$\lphantom
& \csbox{$1$}
& \csbox{$1$}
& \csbox{$1 / \sqrt{1 + \sigma^2}$}
& \csbox{$\sdata^2 / \left(\sigma^2 + \sdata^2 \right)$}
\vspace*{0.7mm}\\
Output scaling \hfill$\cout(\sigma)$\lphantom
& \csbox{$-\sigma$}
& \csbox{$\sigma$}
& \csbox{$-\sigma / \sqrt{1 + \sigma^2}$}
& \csbox{$\sigma \cdot \sdata / \sqrt{\sdata^2 + \sigma^2}$}
\vspace*{0.7mm}\\
Input scaling\hfill$\cin(\sigma)$\lphantom
& \csbox{$1 / \sqrt{\sigma^2 + 1}$}
& \csbox{$1$}
& \csbox{$1 / \sqrt{1 + \sigma^2}$}
& \csbox{$1 / \sqrt{\sigma^2 + \sdata^2}$}
\vspace*{0.7mm}\\
Noise cond.\hfill$\cnoise(\sigma)$\llphantom
& \csbox{$(\origN - 1) ~t$}
& \csbox{$\ln(\frac{1}{2} \sigma)$}
& \csbox{$t$}
& \csbox{$\frac{1}{4} \ln(\sigma)$}
\vspace*{-1.0mm}\\
\hline\vspace*{-3.5mm}\\
\titlerowww{Training}
Noise distribution\lphantom
& {$t \sim \mathcal{U}(\epsilon_\text{t}, 1)$}
& {$\ln(\sigma)\!\sim\!\mathcal{U}(\ln(\smin),$}
& {$t \sim \mathcal{U}(\epsilon_\mathrm{min}, \epsilon_\mathrm{max})$}
& {$\ln(\sigma) \sim \mathcal{N}(P_\stext{mean}^{}, P_\stext{std}^2)$}
\\
&& {\hspace{4.3em}$\ln(\smax))$} &&
\vspace*{-1.0mm}\\
Loss weighting\hfill$\lambda(\sigma)$\llphantom\lphantom
& {$1 / \sigma^2$}
& {$1 / \sigma^2$}
& {$\left( \sigma^2\!+\! 1 \right) / \sigma^2$ \; (``SNR+1'' weighting)}
& {$\left( \sigma^2\!+\!\sdata^2 \right) / (\sigma \cdot \sdata)^2$}
\vspace*{-1.5mm}\\
\tabucline{-}\\
\vspace*{-7mm}\\
\titlerowwb{Parameters}
& \cpboxa{\footnotesize{$\bdyn$}}{\footnotesize{$= 19.9,  \bmin=0.1$}}
& \cpboxb{\footnotesize{$\smin$}}{\footnotesize{$= 0.002$}}
& \cpboxc{\footnotesize{$\epsilon_\mathrm{min}$}}{\footnotesize{\; $= \tfrac{2}{\pi} \arccos \tfrac{1}{\sqrt{1 + e^{-13}}}$}}
& \cpboxd{\footnotesize{$P_\text{mean}$}}{\footnotesize{$= -1.2$, $P_\text{std} = 1.2$}}
\\
\pphantom
& \cpboxa{\footnotesize{$\epsilon_\text{t}$}}{\footnotesize{$= 10^{-5},M = 1000$}}
& \cpboxb{\footnotesize{$\smax$}}{\footnotesize{$= 80$}}
& \cpboxc{\footnotesize{$\epsilon_\mathrm{max}$}}{\footnotesize{\; $= \tfrac{2}{\pi} \arccos \tfrac{1}{\sqrt{1 + e^{9}}}$}}
& \cpboxb{\footnotesize{$\sdata$}}{\footnotesize{$= \sqrt{\tfrac{1}{3}}$}}
\\
\pphantom
& \cpboxa{\footnotesize{$\sigma(t)$}}{\footnotesize{\; $= \sqrt{e^{\frac{1}{2}\bdyn t^2 + \bmin t}\!-\!1}$}}
&
& \cpboxc{\footnotesize{$\sigma(t)$}}{\footnotesize{\; $=\sqrt{\cos^{-2}(\pi t / 2) - 1}$}}
&
\\
\tabucline{-}
\end{tabu}}%
\end{table}%
}
\tabSpecifics
As discussed in~\Cref{sec:background}, previous works proposed various denoiser models $D_\vtheta$, noise distributions $p(\sigma)$, and weighting functions $\lambda(\sigma)$. We refer to the triplet $(D_\vtheta, p, \lambda)$ as DM config. In this work, we consider four such configs: \emph{variance preserving} (VP)~\citep{song2020}, \emph{variance exploding} (VE)~\citep{song2020}, $\rvv$-prediction~\citep{salimans2022progressive}, and  EDM~\citet{karras2022elucidating}. The triplet for each of these configs can be found in~\Cref{tab:specifics}. Note, that we use the parameterization of the denoiser model $D_\vtheta$ from~\citep{karras2022elucidating}
\begin{align}
    D_\vtheta(\rvx; \sigma) = c_\mathrm{skip}(\sigma) \rvx + c_\mathrm{out}(\sigma) F_\vtheta(c_\mathrm{in}(\sigma) \rvx; c_\mathrm{noise}(\sigma)),
\end{align}
where $F_\vtheta$ is the raw neural network. To accommodate for our particular sampler setting (we require to learn the denoiser model for $\sigma \in [0.002, 80]$; see~\Cref{sec:diffusion_solvers}) we slightly modified the parameters of VE and $\rvv$-prediction. For VE, we changed $\sigma_\mathrm{min}$ and $\sigma_\mathrm{max}$ from 0.02 to 0.002 and from 100 to 80, respectively. For $\rvv$-prediction, we changed $\epsilon_\mathrm{min}$ and $\epsilon_\mathrm{max}$ from $\tfrac{2}{\pi} \arccos \tfrac{1}{\sqrt{1 + e^{-20}}}$ to $\tfrac{2}{\pi} \arccos \tfrac{1}{\sqrt{1 + e^{-13}}}$ and $\tfrac{2}{\pi} \arccos \tfrac{1}{\sqrt{1 + e^{20}}}$ to $\tfrac{2}{\pi} \arccos \tfrac{1}{\sqrt{1 + e^{9}}}$, respectively. Furthermore, we cannot base our EDM models on the true (training) data standard deviation $\sigma_\mathrm{data}$ as releasing this information would result in a privacy cost. Instead, we set $\sigma_\mathrm{data}$ to the standard deviation of a uniform distribution between $-1$ and $1$, assuming no prior information on the modeled image data.

\subsubsection{Noise Level Visualization}
In the following, we provide details on how exactly the noise distributions of the four configs are visualized in~\Cref{fig:viz_dist}. The reason we want to plot these noise distributions is to understand how the different configs assign weight to different noise levels $\sigma$ during training through sampling some $\sigma$'s more and others less. However, to be able to make a meaningful conclusion, we also need to take into account the loss weighting $\lambda(\sigma)$. 

Therefore, we consider the effective ``importance-weighted'' distributions $p(\sigma)\frac{\lambda(\sigma)}{ \lambda_\mathrm{EDM}(\sigma)}$, where we use the loss weighting from the EDM config as reference weighting.

The $\frac{\lambda(\sigma)}{ \lambda_\mathrm{EDM}(\sigma)}$ weightings for VP, VE, $\rvv$-prediction, and EDM are then, $\sigma_\mathrm{data}^2 / (\sigma^2 + \sigma_\mathrm{data}^2)$, $\sigma_\mathrm{data}^2 / (\sigma^2 + \sigma_\mathrm{data}^2)$, $\sigma_\mathrm{data}^2 (\sigma^2 + 1) / (\sigma^2 + \sigma_\mathrm{data}^2)$, and 1, respectively. \Cref{fig:viz_dist} then visualizes the ``importance-weighted'' distributions in log-$\sigma$ space, following~\citet{karras2022elucidating} (that way, the final visualized log-$\sigma$ distribution of EDM remains a normal distribution $\gN(P_\mathrm{mean}, P_\mathrm{std}^2)$).

\subsection{Model Architecture} \label{sec:model_architecture}
We focus on image synthesis and implement the neural network backbone of DPDMs using the DDPM++ architecture~\citep{song2020}. For class-conditional generation, we add a learned class-embedding to the $\sigma$-embedding as is common practice~\citep{dhariwal2021diffusion}. All model hyperparameters and training details can be found in~\Cref{tab:model_hyperparameters_and_training_details}.
\begin{table}
    \centering
    \caption{Model hyperparameters and training details.}
    \scalebox{0.96}{
    \begin{tabular}{l c c}
        \toprule
         Hyperparameter & MNIST \& Fashion-MNIST & CelebA \\
         \midrule
         \textbf{Model} & \\
         Data dimensionality (in pixels) & 28 & 32 \\
         Residual blocks per resolution & 2 & 2 \\
         Attention resolution(s) & 7 & 8,16 \\
         Base channels & 32 & 32 \\
         Channel multipliers & 1,2,2 & 1,2,2 \\
         EMA rate & 0.999 & 0.999 \\
         \# of parameters & 1.75M & 1.80M \\
         Base architecture & DDPM++~\citep{song2020} & DDPM++~\citep{song2020} \\
         \midrule
         \textbf{Training} & \\
         \# of epochs & 300 & 300 \\
         Optimizer & Adam~\citep{kingma2015adam} & Adam~\citep{kingma2015adam}  \\
         Learning rate & $3\cdot 10^{-4}$ & $3\cdot 10^{-4}$  \\
         Batch size & 4096 & 2048 \\
         Dropout & 0 & 0 \\
         Clipping constant $C$ & 1 & 1 \\
         DP-$\delta$ & $10^{-5}$ & $10^{-6}$ \\
         \bottomrule
    \end{tabular}
    }
    \label{tab:model_hyperparameters_and_training_details}
\end{table}
\subsection{Sampling from Diffusion Models} \label{sec:diffusion_solvers}
Let us recall the differential equations we can use to generate samples from DMs:
\begin{align}
    \text{ODE:} \; d\rvx &= -\dot \sigma(t) \sigma(t) \nabla_\rvx \log p(\rvx; \sigma(t)) \, dt, \\
    \text{SDE:} \; d\rvx &= - \dot \sigma(t) \sigma(t) \nabla_\rvx \log p(\rvx; \sigma(t)) \, dt - \beta(t) \sigma^2(t) \nabla_\rvx \log p(\rvx; \sigma(t)) \, dt + \sqrt{2 \beta(t)} \sigma(t)\, d\omega_t.
\end{align}
Before choosing a numerical sampler, we first need to define a sampling schedule. In this work, we follow~\citet{karras2022elucidating} and use the schedule 
\begin{align}
    \sigma_i = \left(\sigma_\mathrm{max}^{1/\rho} + \frac{i}{M-1} (\sigma_\mathrm{min}^{1/\rho} - \sigma_\mathrm{max}^{1/\rho}) \right)^\rho, i \in \{0, \dots, M-1\},
\end{align}
with $\rho{=}7.0$, $\sigma_\mathrm{max}{=}80$ and $\sigma_\mathrm{min}{=}0.002$. We consider two solvers: the (stochastic \TIM{($\eta=1$)}/deterministic \TIM{($\eta=0$)}) DDIM solver~\citep{song2021denoising} as well as the stochastic Churn solver introduced in~\citep{karras2022elucidating}, for pseudocode see~\Cref{alg:ddim} and~\Cref{alg:churn}, respectively. Both implementations can readily be combined with classifier-free guidance, which is described in~\Cref{sec:diffusion_guidance}, in which case the denoiser $D_\vtheta(\rvx; \sigma)$ may be replaced by $D^w_\vtheta(\rvx; \sigma, \rvy)$, where the guidance scale $w$ is a hyperparameter. Note that the Churn sampler has four additional hyperparameters which should be tuned empirically~\citep{karras2022elucidating}. If not stated otherwise, we set $M{=}1000$ for the Churn sampler and the stochastic DDIM sampler, and $M{=}50$ for the deterministic DDIM sampler.
\begin{algorithm}[H]
\small
\caption{DDIM sampler~\citep{song2021denoising}
}
\begin{algorithmic} \label{alg:ddim}

\STATE {\bfseries Input:} Denoiser $D_\vtheta(\rvx; \sigma)$, Schedule $\{\sigma_i\}_{i \in \{0, \dots, M-1\}}$
\STATE {\bfseries Output:} Sample $\rvx_M$
\STATE Sample $\rvx_{0} \sim \gN\left(\bm{0}, \sigma_0^2\mI\right)$
\FOR{$n=0$ {\bfseries to} $M-2$}
\STATE Evaluate denoiser $\rvd_n = D_\vtheta(\rvx_i, \sigma_i)$
\IF{\TIM{Stochastic DDIM}}
    \STATE $\rvx_{n+1} = \rvx_n + 2 \frac{\sigma_{n+1} - \sigma_n}{\sigma_n} (\rvx_n - \rvd_n) + \sqrt{2 (\sigma_n - \sigma_{n+1}) \sigma_n} \rvz_n, \quad \rvz_n \sim \gN(\bm{0}, \mI)$
\ELSIF{\TIM{Deterministic DDIM}}
    \STATE $\rvx_{n+1} = \rvx_n + \frac{\sigma_{n+1} - \sigma_n}{\sigma_n} (\rvx_n - \rvd_n)$
\ENDIF
\ENDFOR
\STATE Return $\rvx_M = D(\rvx_{N-1}, \sigma_{M-1})$
\end{algorithmic}
\end{algorithm}
\begin{algorithm}[H]
\small
\caption{Churn sampler~\citep{karras2022elucidating}
}
\begin{algorithmic} \label{alg:churn}

\STATE {\bfseries Input:} Denoiser $D_\vtheta(\rvx; \sigma)$, Schedule $\{\sigma_i\}_{i \in \{0, \dots, M-1\}}$, $S_\mathrm{noise}$, $S_\mathrm{churn}$, $S_\mathrm{min}$, $S_\mathrm{max}$
\STATE {\bfseries Output:} Sample $\rvx_M$
\STATE Set $\sigma_{M} = 0$
\STATE Sample $\rvx_{0} \sim \gN\left(\bm{0}, \sigma_0^2\mI\right)$
\FOR{$n=0$ {\bfseries to} $M-1$}
\IF{$\sigma_i \in [S_\mathrm{min}, S_\mathrm{max}]$}
    \STATE $\gamma_i = \min(\tfrac{S_\mathrm{churn}}{M}, \sqrt{2} - 1)$
\ELSE
    \STATE $\gamma_i = 0$
\ENDIF
\STATE Increase noise level $\widetilde \sigma_n = (1+\gamma_n) \sigma_n$
\STATE Sample $\rvz_n \sim  \gN\left(\bm{0}, S_\mathrm{noise}^2\mI\right)$ and set $\widetilde \rvx_n = \rvx_n + \sqrt{\widetilde \sigma_n^2 - \sigma_n^2} \rvz_n$
\STATE Evaluate denoiser $\rvd_n = D_\vtheta(\widetilde \rvx_n, \widetilde \sigma_n)$ and set $\rvf_n = \frac{\widetilde \rvx_n - \rvd_n}{\widetilde \sigma_n}$ 
\STATE $\rvx_{n+1} = \widetilde \rvx_M + (\sigma_{n+1} - \widetilde \sigma_n) \rvf_n$
\IF{$\sigma_{n+1} \neq 0$}
    \STATE Evaluate denoiser $\rvd_n^\prime = D_\vtheta(\rvx_{n + 1}, \sigma_{n+1})$ and set $\rvf_n^\prime = \frac{\rvx_{n + 1}- \rvd_n^\prime}{\sigma_{n+1}}$
    \STATE Apply second order correction: $\rvx_{n+1} = \widetilde \rvx_n + \tfrac{1}{2} (\sigma_{n+1} - \widetilde \sigma_n) (\rvf_n + \rvf_n^\prime)$
\ENDIF
\ENDFOR
\STATE Return $\rvx_M$
\end{algorithmic}
\end{algorithm}

\subsubsection{Guidance} \label{sec:diffusion_guidance}

Classifier guidance~\citep{song2020, dhariwal2021diffusion} is a technique to guide the diffusion sampling process towards a particular conditioning signal $\rvy$ using gradients, with respect to $\rvx$, of a pre-trained, noise-conditional classifier $p(\rvy|\rvx, \sigma)$. Classifier-free guidance~\citep{ho2021classifierfree}, in contrast, avoids training additional classifiers by mixing denoising predictions of an unconditional and a conditional model, according to a \emph{guidance scale} $w$, by replacing $D_\vtheta(\rvx; \sigma)$ in 
the score parameterization $s_\vtheta = (D_\vtheta(\rvx; \sigma) - \rvx) / \sigma^2$ with
\begin{align}
    D^w_\vtheta(\rvx; \sigma, \rvy) = (1-w) D_\vtheta(\rvx; \sigma) + w D_\vtheta(\rvx; \sigma, \rvy).
\end{align}
$D_\vtheta(\rvx; \sigma)$ and $D_\vtheta(\rvx; \sigma, \rvy)$ can be trained jointly; to train $D_\vtheta(\rvx; \sigma)$ the conditioning signal $\rvy$ is discarded at random and replaced by a \emph{null token}~\citep{ho2021classifierfree}. Increased guidance scales $w$ tend to drive samples deeper into the model's modes defined by $\rvy$ at the cost of sample diversity.
\subsection{Hyperparameters of Differentially Private Diffusion Models} \label{sec:dpdm_hyperparameters}
Tuning hyperparameters for DP models generally induces a privacy cost which should be accounted for~\citep{papernot2022hyperparameter}. Similar to existing works~\citep{de2022unlocking}, we neglect the (small) privacy cost associated with hyperparameter tuning. Nonetheless, in this section we want to point out that our hyperparameters show consistent trends across different settings. As a result, we believe our models need little to no hyperparameter tuning in similar settings to the ones considered in this work.

\textbf{Model.} We use the DDPM++~\citep{song2020} architecture for all models in this work. Across all three datasets (MNIST, Fashion-MNIST, and CelebA) we found the EDM~\citep{karras2022elucidating} config to perform best for $\varepsilon{=}\{1, 10\}$. On MNIST and Fashion-MNIST, we use the $\rvv$-prediction config for $\varepsilon=0.2$ (not applicable to CelebA).

\textbf{DP-SGD training.} In all settings, we use 300 epochs and clipping constant $C{=}1$. We use batch size $B{=}4096$ for MNIST and Fashion-MNIST and decrease the batch size of CelebA to $B{=}2048$ for the sole purpose of fitting the entire batch into GPU memory. The DP noise $\sigma_\mathrm{DP}$ values for each setup can be found in~\Cref{tab:dp_noise}

\begin{table}
    \centering
    \caption{DP noise $\sigma_\mathrm{DP}$ used for all our experiments.}
    \label{tab:dp_noise}
    \begin{tabular}{c c c c}
        \toprule
        $\varepsilon$ & MNIST & Fashion-MNIST & CelebA \\
        \midrule
        0.2 & 82.5 & 82.5 & N/A \\
        1 & 18.28125 & 18.28125 & 8.82812\\
        10 & 2.48779 & 2.48779 & 1.30371 \\
        \bottomrule
    \end{tabular}
\end{table}

\textbf{DM Sampling.} We experiment with different DM solvers in this work. We found the DDIM sampler~\citep{song2021denoising} (in particular the stochastic version), which does not have any hyperparameters (without guidance), to perform well across all settings. Using the Churn sampler~\citep{karras2022elucidating}, we could improve perceptual quality (measured in FID), however, out of the five (four without guidance) hyperparameters, we only found two (one without guidance) to improve results significantly. We show results for all samplers in~\Cref{sec:extended_qualitative_results}.
\section{Variance Reduction via Noise Multiplicity} \label{sec:variance_reduction_via_noise_multiplicity}
As discussed in~\Cref{sec:dpsgd_training}, we introduce \emph{noise multiplicity} to reduce gradient variance. 

\subsection{Proof of~\Cref{th:noise_mult}}
\begin{theorem*}
    \textit{The variance of the DM objective (\Cref{eq:noise_mult}) decreases with increased noise multiplicity $K$ as $1/K$.}
\end{theorem*}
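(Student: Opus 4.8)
The plan is to recognize $\tilde l_i$ as an empirical average of $K$ i.i.d.\ terms and then invoke the elementary fact that the variance of a $K$-sample mean of i.i.d.\ random variables is $1/K$ times the per-sample variance. First I would fix a data point $\rvx_i$ and the current parameters $\vtheta$, and define the per-noise-sample quantity
\begin{align}
    X_k \;:=\; \lambda(\sigma_{ik}) \, \|D_\vtheta(\rvx_i + \rvn_{ik}, \sigma_{ik}) - \rvx_i\|_2^2, \qquad k = 1, \dots, K,
\end{align}
so that $\tilde l_i = \tfrac{1}{K} \sum_{k=1}^K X_k$. Since the pairs $(\sigma_{ik}, \rvn_{ik})$ are drawn i.i.d.\ from $p(\sigma)\,\gN(\bm{0}, \sigma^2)$ and each $X_k$ is the same fixed measurable function of $(\sigma_{ik},\rvn_{ik})$, the scalars $X_1,\dots,X_K$ are i.i.d.; I write $v := \Var[X_1]$ for their common variance, which is finite under mild regularity of $\lambda$ and $D_\vtheta$ together with the fact that Gaussians have moments of all orders. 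Note also that $\E[\tilde l_i] = \E[X_1]$ does not depend on $K$.

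The second step is the one-line computation: because independence makes all cross-covariances vanish,
\begin{align}
    \Var[\tilde l_i] \;=\; \frac{1}{K^2}\sum_{k=1}^K \Var[X_k] \;=\; \frac{1}{K^2}\, K v \;=\; \frac{v}{K},
\end{align}
which is exactly the claimed $1/K$ decay of the objective's variance. If one instead wishes to account for the randomness of the data sample $\rvx_i$ as well, I would apply the law of total variance conditioning on $\rvx_i$: the within-example contribution equals $\tfrac{1}{K}\,\E_{\rvx_i}[\Var(X_1\mid\rvx_i)]$, while the between-example contribution $\Var_{\rvx_i}(\E[X_1\mid\rvx_i])$ is $K$-independent, so the only part of the variance that responds to the noise multiplicity still shrinks as $1/K$.

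There is no real obstacle here; the single point that warrants care is making explicit the randomness over which the variance is taken — the $K$ noise draws for a fixed example, with parameters held fixed — and verifying that the per-example term $X_1$ has finite second moment, which is what licenses the variance-of-the-sample-mean identity. Everything else is the textbook Monte-Carlo averaging argument, which is precisely why performing the averaging \emph{before} gradient sanitization (clipping and DP noise injection) yields lower-variance gradient estimates at no additional privacy cost.
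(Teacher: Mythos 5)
Your proof is correct and follows essentially the same route as the paper's: both recognize $\tilde l_i$ as a $K$-sample Monte Carlo average of i.i.d.\ terms and invoke the standard fact that the variance of such an average is $1/K$ times the per-sample variance. Your additional remarks on finite second moments and the law of total variance over $\rvx_i$ are welcome explicit touches, but they do not change the argument.
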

\begin{proof}
    The DM objective in~\Cref{eq:noise_mult} is a Monte Carlo estimator of the true intractable $L_2$-loss in~\Cref{eq:diffusion_objective}, using one data sample $\rvx_i \sim \pdata$ and $K$ noise-level-noise tuples $\{(\sigma_{ik}, \rvn_{ik})\}_{k =1}^K \sim p(\sigma) \gN\left(\bm{0}, \sigma^2\right)$. Replacing expectations with Monte Carlo estimates is a common practice to ensure numerical tractability. For a generic function $r$ over distribution $p(\rvk)$, we have $\E_{p(\rvk)}[r(\rvk)] \approx \frac{1}{K} \sum_{i=1}^K r(\rvk_i)$, where $\{\rvk_i\}_{i=1}^K \sim p(\rvk)$ (Monte Carlo estimator for expectation of function $r$ with respect to distribution $p$). The Monte Carlo estimate is a noisy unbiased estimator of the expectation $\E_{p(\rvk)}[r(\rvk)]$ with variance $\frac{1}{K} \Var_p[r]$, where $\Var_p[r]$ is the variance of $r$ itself. This is a well-known fact; see for example Chapter 2 of the excellent book by~\citet{owenMCbook}. %
    This proves that the variance of the DM objective in~\Cref{eq:noise_mult} decreases with increased noise multiplicity $K$ as $1/K$.
    \end{proof}

\subsection{Variance Reduction Experiment}
In this section, we empirically show how the reduced variance of the DM objective from noise multiplicity leads to reduced gradient variance during training. In particular, we set $\rvx_i$ to a randomly sampled MNIST image and set the denoiser $D_\vtheta$ to our trained model on MNIST. We then compute gradients for different noise multiplicities $K$. We resample the noise values 1k times (for each $K$) to estimate the variance of the gradient for each parameter. In~\Cref{fig:variance_reduction}, we show the histogram over gradient variance as well as the average gradient variance (averaged over all parameters in the model). Note that the variance of each gradient is a random variable itself (which is estimated using 1k Monte Carlo samples). We find that an increased $K$ leads to significantly reduced training parameter gradient variance. 

\begin{figure}
    \centering
    \includegraphics[scale=0.8]{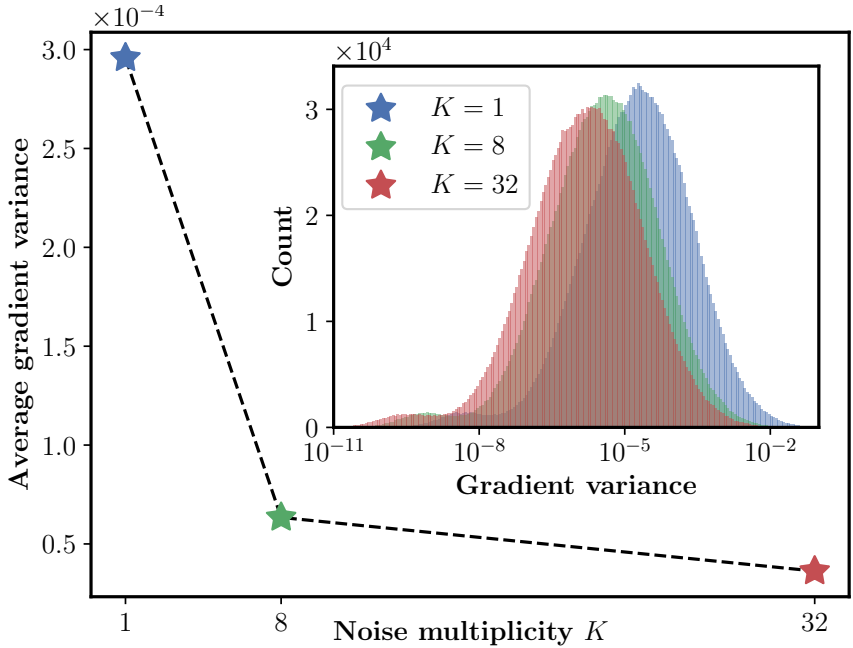}
    \caption{Variance reduction via noise multiplicity. Increasing $K$ in \emph{noise multiplicity} leads to significant variance reduction of parameter gradient estimates during training (note logarithmic axis in inset). This is an enlarged version of~\Cref{fig:variance_reduction_small}.}
    \label{fig:variance_reduction}
\end{figure}

\subsection{Computational Cost of Noise Multiplicity} \label{sec:noise_multiplicity_cost}
In terms of computational cost, noise multiplicity is expensive and likely not useful for non-DP DMs. The computational cost increases linearly with $K$, as the denoiser needs to run $K$ times. \TIM{Furthermore, in theory, noise multiplicity increases the memory footprint by at least $\gO(K)$; however, in common DP frameworks, such as Opacus~\citep{yousefpour2021opacus}, which we use, the peak memory requirement is $\mathcal{O}(K^2)$ compared to non-private training. Recent methods such as \emph{ghost clipping}~\citep{bu2022scalable} require less memory, but are currently not widely implemented.}

That being said, in DP generative modeling, and DP machine learning more generally, computational cost is hardly ever the bottleneck; the main bottleneck is the privacy restriction. The privacy restriction implies only a finite number of training iterations and we need to use that budget of training iterations in the most efficient way (this is, using training gradients that suffer from as little noise as possible). Our numerical experiments clearly show that noise multiplicity is a technique to shift the privacy-utility trade-off, effectively getting better utility at the same privacy budget, using additional computational cost.

\subsection{On the Difference between Noise Multiplicity and Augmentation Multiplicity} \label{sec:noise_multiplicity_vs_data_aug}

\TIM{Augmentation multiplicity~\citep{de2022unlocking} is a technique where multiple augmentations per image are used to train classifiers with DP-SGD. Image augmentations have also been shown to be potentially helpful in data-limited (image) generative modeling, for example, for autoregressive models~\citep{jun2020distribution} and DMs~\citep{karras2022elucidating}. In stark contrast to discriminative modeling where the data distribution $\pdata$ can simply be replaced by the \emph{augmented data distribution} and the neural backbone can be left as is, in generative modeling both the loss function and the neural backbone need to be adapted. For example, for DMs~\citep{karras2022elucidating}, the standard DM loss (\Cref{eq:diffusion_objective}) is formally replaced by
\begin{align}
    \E_{\tilde \rvx \sim \pdata(\tilde \rvx), c\sim p_\mathrm{aug}(c), \rvx \sim p_\mathrm{augdata}(\rvx \mid \tilde \rvx, c), (\sigma, \rvn) \sim p(\sigma, \rvn)} \left[\lambda_\sigma \|D_\vtheta(\rvx + \rvn, \sigma, c) - \rvx \|_2^2 \right],
\end{align}
where $p_\mathrm{aug}(c)$ is the distribution over augmentation choices $c$ (for example, cropping at certain coordinates or other image transformations or perturbations),
and $p_\mathrm{augdata}(\rvx \mid \tilde \rvx, c)$
is the 
distribution over augmented images $\rvx$ given the original dataset images $\tilde \rvx$ and the augmentation $c$. Importantly, note that the neural backbone $D_\vtheta$ also needs to be conditioned on the augmentation choice $c$ as, at inference time, we generally only want to generate ``clean images'' with $c(\rvx) = \rvx$ (no augmentation). \\
\\
While noise multiplicity provably reduces the variance of the standard diffusion loss (see \Cref{th:noise_mult}), augmentation multiplicity, that is, averaging over multiple augmentations for a given clean image $\rvx$, only provably reduces the variance of the augmented diffusion loss, which is by definition more noisy due to the additional expectations.
Furthermore, it is not obvious how minimizing the augmented diffusion loss relates to minimizing the true diffusion loss. In contrast to noise multiplicity, augmentation multiplicity \emph{does not} provably reduce the variance of the original diffusion loss; rather, it is a data augmentation technique for enriching training data. \\
\\
Pointing out the orthogonality of the two ideas again, note that noise multiplicity is still applicable for the above modified augmented diffusion loss objective. Furthermore, we would like to point out that noise multiplicity is applicable for DPDMs in any domain, beyond images; in contrast, data augmentations need to be handcrafted and may not readily available in all fields.}

\section{Toy Experiments} \label{sec:toy_experiments}
\begin{figure}
    \centering
    \begin{subfigure}[b]{0.32\textwidth}
        \centering
        \includegraphics[scale=1.3]{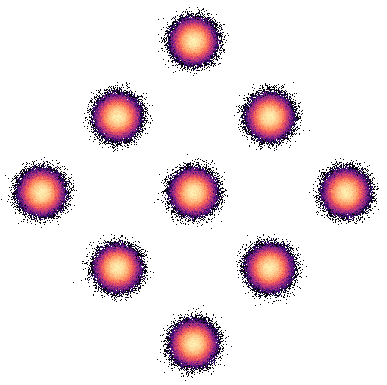}
        \caption{Data $p_\mathrm{data}$}
        \label{fig:large_gmm}
    \end{subfigure}
    \begin{subfigure}[b]{0.32\textwidth}
        \centering
        \includegraphics[scale=1.3]{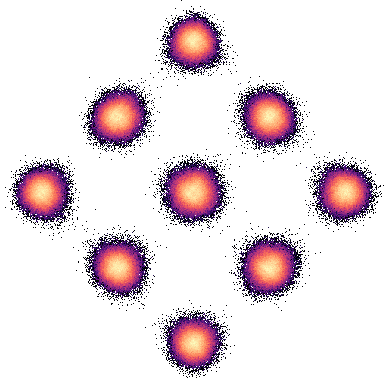}
        \caption{Samples from DM.}
        \label{fig:large_diffusion_resnet}
    \end{subfigure}
    \begin{subfigure}[b]{0.32\textwidth}
        \centering
        \includegraphics[scale=1.3]{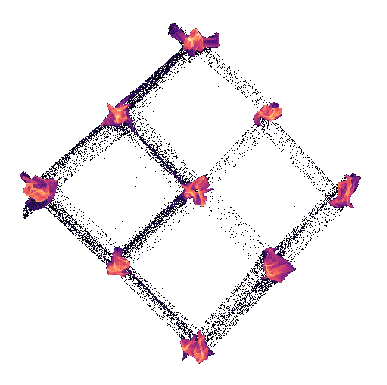}
        \caption{Samples from GAN.}
        \label{fig:large_gan_resnet}
    \end{subfigure}
    \caption{Mixture of Gaussians: data distribution and (1M) samples from a DM as well as a GAN. Our visualization is based on the log-histogram, which shows single data points as black dots.}
    \label{fig:gmm_and_dm_large}
\end{figure}
In this section, we describe the details of the toy experiment from paragraph \textbf{(ii) Sequential denoising} in~\Cref{sec:motivation}. For this experiment, we consider a two-dimensional simple Gaussian mixture model of the form
\begin{align}
    p_\mathrm{data}(\rvx) = \sum_{k=1}^9 \frac{1}{9} p^{(k)}(\rvx),
\end{align}
where $p^{(k)}(\rvx) = \gN(\rvx; \vmu_k; \sigma_0^2)$ and 
\begin{alignat*}{5}
    \vmu_1 &= \begin{pmatrix} -a \\ 0 \end{pmatrix},\quad &&\vmu_2 &&= \begin{pmatrix} -a/2 \\ a/2 \end{pmatrix},\quad &&\vmu_3 &&= \begin{pmatrix} 0 \\ a \end{pmatrix}, \\
    \vmu_4 &= \begin{pmatrix} -a/2 \\ -a/2 \end{pmatrix},\quad &&\vmu_5 &&= \begin{pmatrix} 0 \\ 0 \end{pmatrix},\quad &&\vmu_6 &&= \begin{pmatrix} a/2 \\ a/2 \end{pmatrix},\\
    \vmu_7 &= \begin{pmatrix} 0 \\ -a \end{pmatrix},\quad &&\vmu_8 &&= \begin{pmatrix} a/2 \\ -a/2 \end{pmatrix},\quad &&\vmu_9 &&= \begin{pmatrix} a \\ 0 \end{pmatrix},
\end{alignat*}
where $\sigma_0 = 1/25$ and $a=1/\sqrt{2}$. The data distribution is visualized in~\Cref{fig:large_gmm}.

\textbf{Fitting.} Initially, we fitted a DM as well as a GAN to the mixture of Gaussians. The neural networks of the DM and the GAN generator use similar ResNet architectures with 267k and 264k (1.1\% smaller) parameters, respectively (see~\Cref{sec:toy_training} for training details). The fitted distributions are visualized in~\Cref{fig:gmm_and_dm_large}. In this experiment, we use deterministic DDIM (\Cref{alg:ddim})~\citep{song2021denoising}, a numerical solver for the Probability Flow ODE (\Cref{eq:probability_flow_ode})~\citep{song2020}, with 100 neural function evaluations (DDIM-100) as the end-to-end multi-step synthesis process for the DM. Even though our visualization shows that the DM clearly fits the distribution better (\Cref{fig:gmm_and_dm_large}), the GAN does not do bad either. Note that our visualization is based on the log-histogram of the sampling distributions, and therefore puts significant emphasis on single data point outliers.

We provide a second method to assess the fitting: In particular, we measure the percentage of points (out of 1M samples) that are within a $h$-standard deviation vicinity of any of the nine modes. A point $\rvx$ is said to be within a $h$-standard deviation vicinity of the mode $\vmu_k$ if $\|\rvx - \vmu_k\| < h \sigma_0$. We present results for this metric in~\Cref{tab:fitting} for $h{=}\{1, 2, 3, 4, 5, 6\}$. Note that any mode is at least $12.5$ standard deviations separated to the next mode, and therefore no point can be in the $h$-standard deviation vicinity of more than two modes for $h\leq6$.

The results in~\Cref{tab:fitting} indicate that the GAN is slightly too sharp, that is, it puts too many points within the $1$- and $2$-standard deviation vicinity of modes. Moreover, for larger $h$, the result in~\Cref{tab:fitting} suggests that the samples in~\Cref{fig:large_gan_resnet} that appear to ``connect'' the GAN's modes are heavily overemphasized---these samples actually represent less than 1\% of the total samples; 99.3\% of samples are within a 4-standard deviation vicinity of a mode while modes are at least 12.5 standard deviations separated.

\begin{table}
    \centering
    \caption{$h$-standard deviation vicinity metric as defined in the paragraph $\textbf{Fitting}$ of~\Cref{sec:toy_experiments}.}%
    \label{tab:fitting}
    \begin{tabular}{l c c c }
    \toprule
         $h$ & Data & DM & GAN   \\
         \midrule
         1 & 39.4 & 37.2 & 56.8 \\
         2 & 86.5 & 83.3 & 95.3 \\
         3 & 98.9 & 97.7 & 98.9 \\
         4 & 100  & 99.8 & 99.3 \\
         5 & 100  & 100  & 99.6 \\
         6 & 100  & 100  & 99.9 \\
         \bottomrule
    \end{tabular}
\end{table}

\textbf{Complexity.} Now that we have ensured that both the GAN as well as the DM fit the target distribution reasonably well, we can measure the complexity of the DM denoiser $D$, the generator defined by the GAN, as well as the end-to-end multi-step synthesis process (DDIM-100) of the DM. In particular, we measure the complexity of these functions using the Frobenius norm of the Jacobian~\citep{dockhorn2022scorebased}.
In particular, we define
\begin{align}
    \gJ_F(\sigma) = \E_{\rvx \sim p(\rvx, \sigma)} \| \nabla_\rvx D_\vtheta(\rvx, \sigma) \|_F^2.
\end{align}
Note that the convolution of a mixture of Gaussian with i.i.d. Gaussian noise is simply the sum of the convolution of the mixture components, i.e.,
\begin{align}
    p(\rvx; \sigma) &= \left(\pdata \ast \gN\left(\bm{0}, \sigma^2\right)\right)(\rvx) \\
    &=\sum_{k=1}^9 \frac{1}{9} \gN(\rvx; \vmu_k; \sigma_0^2 + \sigma^2).
\end{align}
We then compare %
$\gJ_F(\sigma)$ with the complexity of the GAN generator ($S_1$) and the end-to-end synthesis process of the DM ($S_2$). In particular, we define
\begin{align}
    \gJ_F = \E_{\rvx \sim \gN\left(\bm{0}, \mI \right)} \| \nabla_\rvx S_i(\rvx) \|_F^2, \quad i \in \{1, 2\}.
\end{align}
We want to clarify that for $S_2$ we do not have to backpropagate through an ODE but rather through its discretization, i.e., deterministic DDIM with 100 function evaluations (\Cref{alg:ddim}), since that is how we define the end-to-end multi-step synthesis process of the DM in this experiment. Furthermore, we chose the latent space of the GAN to be two-dimensional such that $\nabla_\rvx S_i(\rvx) \in \R^{2\times 2}$ for both the GAN and the DM; this ensures a fair comparison.
The final complexities are visualized in~\Cref{fig:complexity_input_resnet_small_not_wrapped}.

\subsection{Training Details} \label{sec:toy_training}
\textbf{DM training.} Training the diffusion model is very simple. We use the EDM config and train for 50k iterations (with batch size $B{=}256$) using Adam with learning rate $3 \cdot 10^{-4}$. We use an EMA rate of 0.999.

\textbf{GAN training.} Training GANs on two-dimensional mixture of Gaussians is notoriously difficult (see, for example, Sec. 5.1 in~\citep{ganema}). We experimented with several setups and found the following to perform well: We train for 50k iterations (with batch size $B{=}256$) using Adam with learning rate $3 \cdot 10^{-4}$ and ($\beta_1{=}0.0$, $\beta_2=0.9$) for both the generator and the discriminator. Following~\citet{ganema}, we use EMA (rate of 0.999 as in the DM). We found it crucial to make the discriminator bigger than the generator; in particular, we use twice as many hidden layers in the discriminator's ResNet. Furthermore, we use \texttt{ReLU} and \texttt{LeakyReLU} for the generator and the discriminator, respectively.
\section{Image Experiments} \label{sec:image_experiments}
\subsection{Evaluation Metrics, Baselines, and Datasets} \label{sec:metrics_baselines_datasets}
\textbf{Metrics.} We measure sample quality via Fréchet Inception Distance (FID)~\citep{heusel2017gans}. We follow the DP generation literature and use 60k generated samples. The particular Inception-v3 model used for FID computation is taken from~\citet{karras2021alias}\footnote{\scalebox{.8}{https://api.ngc.nvidia.com/v2/models/nvidia/research/stylegan3/versions/1/files/metrics/inception-2015-12-05.pkl}}. On MNIST and Fashion-MNIST, we follow the standard procedure of repeating the channel dimension three times before feeding images into the Inception-v3 model. 

On MNIST and Fashion-MNIST, we additionally assess the utility of generated data by training classifiers on synthesized samples and compute class prediction accuracy on real data. Similar to previous works, we consider three classifiers: logistic regression (Log Reg), MLP, and CNN classifiers. The model architectures are taken from the \href{https://github.com/nv-tlabs/DP-Sinkhorn_code}{\texttt{DP-Sinkhorn}} repository~\citep{cao2021don}.

For downstream classifier training, we follow the DP generation literature and use 60k synthesized samples. We follow~\citet{cao2021don} and split the 60k samples into a training set (90\%) and a validation set (remaining 10\%). We train all models for 50 epochs, using Adam with learning rate $3 \cdot 10^{-4}$. We regularly save checkpoints during training and use the checkpoint that achieves the best accuracy on the validation split for final evaluation.
Final evaluation is performed on real, non-synthetic data. We train all models for 50 epochs, using Adam with learning rate $3 \cdot 10^{-4}$. 

\TIM{Note that we chose to only use 60k synthesized samples to follow prior work and therefore be able to compare to baselines in a fair manner. That being said, during this project we did explore training classifiers with more samples but did not find any significant improvements in downstream accuracy. We hypothesize that 60k samples are enough to accurately represent the underlying learned distribution by the DPDM and to train good classifiers on MNIST/FashionMNIST. We believe that a more detailed study on how many samples are needed to get a certain accuracy is an interesting avenue for future work.}

\textbf{Baselines.} We run baseline experiments for \href{https://anonymous.4open.science/r/pearl-518E/README.md}{PEARL}~\citep{liew2022pearl}. In particular, we train models for $\varepsilon{=}\{0.2, 1, 10\}$ on MNIST and Fashion-MNIST. We confirmed that our models match the performance reported in their paper. In fact, our models perform slightly better (in terms of the LeNet-FID metric~\citet{liew2022pearl} uses). We then follow the same evaluation setup (see \textbf{Metrics} above) as for our DPDMs. Most importantly, we use the standard Inception network-based FID calculation, similarly as most works in the (DP) image generative modeling literature.

\textbf{Datasets.} We use three datasets in our main experiments: MNIST~\citep{lecun2010mnist}, Fashion-MNIST~\citep{xiao2017fashion} and CelebA~\citep{liu2015deep}. \TC{Furthermore, we provide initial results on CIFAR-10~\citep{krizhevsky2009learning} and ImageNet~\citep{deng2009imagenet} (as well as CelebA on a higher resolution); see~\Cref{sec:add_experiments}.}

\TC{We would like to point out that these dataset may contain multiple images per identity (e.g. person, animal, etc.), whereas our method, as well as all other baselines in this work, considers the per-image privacy guarantee. For an identity with $k$ images in the dataset, a model with $(\varepsilon, \delta)$ per-image DP affords $(k \varepsilon, k e^{(k-1)\varepsilon} \delta)$-DP to the individual according to the Group Privacy theorem~\citep{dwork2014algorithmic}. We leave a more rigorous study of DPDMs with Group Privacy to future research and note that these datasets currently simply serve as benchmarks in the community. Nonetheless, we believe that it is important to point out that these datasets do not necessarily serve as a realistic test bed for per-image DP generative models in privacy critical applications.}

\subsection{Computational Resources} \label{sec:computational_resources}
For all experiments, we use an in-house GPU cluster of V100 NVIDIA GPUs. On eight GPUs, models on MNIST and Fashion-MNIST trained for roughly one day and models on CelebA for roughly four days. We tried to maximize performance by using a large number of epochs, which results in a good privacy-utility trade-off, as well as high noise multiplicity; this results in relatively high training time (when compared to existing DP generative models). Using a smaller noise multiplicity $K$ decreases computation, although generally at the cost of model performance; see also~\Cref{sec:noise_multiplicity_cost}.
\subsection{Training DP-SGD Classifiers} \label{sec:training_dp_cnn_classifiers}
We train classifiers on MNIST and Fashion-MNIST using DP-SGD directly. We follow the setup used for training DPDMs, in particular, batchsize $B=4096$, 300 epochs and clipping constant $C=1$. Recently, \citet{de2022unlocking} found EMA to be helpful in training image classifiers: we follow this suggestion and use an EMA rate of 0.999 (same rate as used for training DPDMs).
\subsection{Extended Quantitative Results} \label{sec:extended_quantitative_results}
In this section, we show additional quantitative results not presented in the main paper. In particular, we present extended results for all ablation experiments. 
\subsubsection{Noise Multiplicity} \label{sec:noise_augmentation_ablation}
In the main paper, we present noise multiplicity ablation results on MNIST with $\varepsilon{=}1$ (\Cref{tab:noise_augmentation_ablation_fid_cnn_acc_no_wrapped}). All results for MNIST and Fashion-MNIST on all three privacy settings ($\varepsilon{=}\{0.2, 1, 10\}$) can be found in~\Cref{tab:noise_augmentation_ablation}.
\begin{table}
    \caption{Noise multiplicity ablation on MNIST and Fashion-MNIST.}
    \label{tab:noise_augmentation_ablation}
    \centering
    \scalebox{1.0}{
        \begin{tabular}{l c c c c c c c c}
            \toprule
            \multirow{3}{*}{$K$} & \multicolumn{4}{c}{MNIST} & \multicolumn{4}{c}{Fashion-MNIST} \\
            \cmidrule(l{1em}r{1em}){2-5} \cmidrule(l{1em}r{1em}){6-9}
            & \multirow{2}{*}{FID} & \multicolumn{3}{c}{Acc (\%)} & \multirow{2}{*}{FID} & \multicolumn{3}{c}{Acc (\%)} \\
            \cmidrule(l{1em}r{1em}){3-5} \cmidrule(l{1em}r{1em}){7-9}
            & & Log Reg & MLP & CNN & & Log Reg & MLP & CNN \\
            \midrule
            1  & 76.9 & 84.2 & 87.5 & 91.7 & 72.5 & 76.0 & 76.3 & 75.9 \\
            2  & 60.1 & 84.8 & 88.3 & 93.1 & 61.4 & 76.7 & 77.0 & 77.4 \\
            4  & 57.1 & 85.2 & 88.0 & 92.8 & 61.1 & 76.7 & 77.2 & 77.0 \\
            8  & 44.8 & 86.2 & 89.2 & 94.1 & 58.2 & 75.2 & 76.3 & 77.4 \\
            16 & 36.9 & 86.0 & 89.8 & 94.2 & 58.5 & 77.0 & 77.4 & 78.8 \\
            32 & 34.8 & 86.8 & 90.1 & 94.4 & 57.7 & 76.4 & 77.0 & 77.1 \\
            \bottomrule
        \end{tabular}
    }
\end{table}
\subsubsection{Diffusion Model Config} \label{sec:diffusion_model_backbone_ablation}
In the main paper, we present DM config ablation results on MNIST with $\varepsilon{=}0.2$ (\Cref{tab:noise_augmentation_ablation_fid_cnn_acc_no_wrapped}). All results for MNIST and Fashion-MNIST on all three privacy settings ($\varepsilon{=}\{0.2, 1, 10\}$) can be found in~\Cref{tab:design_choice}.
\begin{table}
    \caption{DM config ablation.}
    \label{tab:design_choice}
    \centering
    \scalebox{0.82}{
        \begin{tabular}{l c c c c c c c c c}
            \toprule
            \multirow{3}{*}{Method} & \multirow{3}{*}{DP-$\varepsilon$} & \multicolumn{4}{c}{MNIST} & \multicolumn{4}{c}{Fashion-MNIST} \\
            \cmidrule(l{1em}r{1em}){3-6} \cmidrule(l{1em}r{1em}){7-10}
            & & \multirow{2}{*}{FID} & \multicolumn{3}{c}{Acc (\%)} & \multirow{2}{*}{FID} & \multicolumn{3}{c}{Acc (\%)} \\
            \cmidrule(l{1em}r{1em}){4-6} \cmidrule(l{1em}r{1em}){8-10}
            & & & Log Reg & MLP & CNN & & Log Reg & MLP & CNN \\
            \midrule 
            VP~\citep{song2020} & 0.2 & 197 & 23.1 & 25.5 & 24.2 & 146 & 49.7 & 51.6 & 51.7 \\
            VE~\citep{song2020} & 0.2 & 171 & 17.9 & 15.4 & 13.9 & 178 & 22.2 & 27.9 & 49.4 \\
            V-prediction~\citep{salimans2022progressive} & 0.2 & 97.8 & 80.2 & 81.3 & 84.4 & 115 & 71.3 & 70.9 & 71.8 \\
            EDM~\citep{karras2022elucidating} & 0.2 & 119 & 62.4 & 67.3 & 49.2 & 93.5 & 64.7 & 65.9 & 66.6\\
            \midrule
            VP~\citep{song2020} & 1 & 82.2 & 59.4 & 69.3 & 72.6 & 73.4 & 68.3 & 70.4 & 72.7 \\
            VE~\citep{song2020} & 1 & 165 & 17.9 & 20.5 & 26.0 & 156 & 30.7 & 36.0 & 49.8 \\
            V-prediction~\citep{salimans2022progressive} & 1 & 34.8 & 86.8 & 90.1 & 94.4 & 57.7 & 76.4 & 77.0 & 77.1 \\
            EDM~\citep{karras2022elucidating} & 1 & 34.2 & 86.2 & 90.1 & 94.9 & 47.1 & 77.4 & 78.0 & 79.4\\
            \midrule
            VP~\citep{song2020} & 10 & 12.3 & 88.8 & 94.1 & 97.0 & 22.3 & 81.2 & 81.6 & 84.5 \\
            VE~\citep{song2020} & 10 & 88.6 & 48.0 & 56.9 & 63.8 & 83.2 & 69.0 & 70.4 & 75.4 \\
            V-prediction~\citep{salimans2022progressive} & 10 & 7.65 & 90.4 & 94.4 & 97.7 & 23.1 & 82.0 & 83.7 & 85.5\\
            EDM~\citep{karras2022elucidating} & 10 & 6.13 & 90.4 & 94.6 & 97.5 & 17.4 & 82.6 & 84.1 & 86.2 \\
            \bottomrule
        \end{tabular}
    }
\end{table}

\subsubsection{Diffusion Sampler Grid Search and Ablation} \label{sec:diffusion_sampler}
\textbf{Churn sampler grid search.} We run a small grid search for the hyperparameters of the Churn sampler (together with the guidance weight $w$ for classifier-free guidance). For MNIST and Fashion-MNIST on $\varepsilon{=}0.2$ we run a two-stage grid search. Using $S_\mathrm{min}{=}0.05$, $S_\mathrm{max}{=}50$, and $S_\mathrm{noise}{=}1$, which we found to be sensible starting values, we ran an initial grid search over $w{=}\{0, 0.125, 0.25, 0.5, 1.0, 2.0\}$ and $S_\mathrm{churn}{=}\{0, 5, 10, 25, 50, 100, 150, 200\}$, which we found to be the two most critical hyperparameters of the Churn sampler. Afterwards, we ran a second grid search over $S_\mathrm{noise}{=}\{1, 1.005\}$, $S_\mathrm{min}{=}\{0.01, 0.02, 0.05, 0.1, 0.2\}$, and $S_\mathrm{max}{=}\{10, 50, 80\}$ using the best $(w, S_\mathrm{churn})$ setting for each of the two models. For MNIST and Fashion-MNIST on $\varepsilon{=}\{1, 10\}$, we ran a single full grid search over $w{=}\{0, 0.25, 0.5, 1.0, 2.0\}$, $S_\mathrm{churn}{=}\{10, 25, 50, 100\}$, and $S_\mathrm{min}{=}\{0.025, 0.05, 0.1, 0.2\}$ while setting $S_\mathrm{noise}{=}1$. For CelebA, on both $\varepsilon{=}1$ and $\varepsilon{=}10$, we also ran a single full grid search over $S_\mathrm{churn}{=}\{50, 100, 150, 200\}$, and $S_\mathrm{min}{=}\{0.005, 0.05\}$ while setting $S_\mathrm{noise}{=}1$. The best settings for FID metric and downstream CNN accuracy can be found in~\Cref{tab:sampler_settings_for_best_models_fid} and~\Cref{tab:sampler_settings_for_best_models_acc}, respectively.

Throughout all experiments we found two consistent trends that are listed in the following:
\begin{itemize}
    \item If optimizing for FID, set $S_\mathrm{churn}$ relatively high and $S_\mathrm{min}$ relatively small. Increase $S_\mathrm{churn}$ and decrease $S_\mathrm{min}$ as $\varepsilon$ is decreased.
    \item If optimizing for downstream accuracy, set $S_\mathrm{churn}$ relatively small and $S_\mathrm{min}$ relatively high.
\end{itemize}

\textbf{Sampling ablation.} In the main paper, we present a sampler ablation for MNIST (\Cref{tab:ddim_sddim_churn_mnist_new}). Results for Fashion-MNIST (as well as) MNIST can be found in~\Cref{tab:ddim_sddim_churn}.

\begin{table}
    \caption{Diffusion sampler comparison. We compare the Churn sampler~\citep{karras2022elucidating} to stochastic and determistic DDIM~\citep{song2021denoising}.}
    \label{tab:ddim_sddim_churn}
    \centering
    \scalebox{0.9}{
        \begin{tabular}{l c c c c c c c c c}
            \toprule
            \multirow{3}{*}{Sampler} & \multirow{3}{*}{DP-$\varepsilon$} & \multicolumn{4}{c}{MNIST} & \multicolumn{4}{c}{Fashion-MNIST} \\
            \cmidrule(l{1em}r{1em}){3-6} \cmidrule(l{1em}r{1em}){7-10}
            & & \multirow{2}{*}{FID} & \multicolumn{3}{c}{Acc (\%)} & \multirow{2}{*}{FID} & \multicolumn{3}{c}{Acc (\%)} \\
            \cmidrule(l{1em}r{1em}){4-6} \cmidrule(l{1em}r{1em}){8-10}
            & & & Log Reg & MLP & CNN & & Log Reg & MLP & CNN \\
            \midrule 
            Churn (FID) & 0.2 & \textbf{61.9} & 65.3 & 65.8 & 71.9 & \textbf{78.4} & 53.6 & 55.3 & 57.0 \\
            Churn (Acc) & 0.2 & 104 & 81.0 & 81.7 & \textbf{86.3} & 128 & 70.4 & 71.3 & \textbf{72.3} \\
            Stochastic DDIM       & 0.2 & 97.8 & 80.2 & 81.3 & 84.4 & 115 & 71.3 & 70.9 & 71.8 \\
            Deterministic DDIM    & 0.2 & 120 & \textbf{81.3} & \textbf{82.1} & 84.8 & 132 & \textbf{71.5} & \textbf{71.6} & 71.8 \\
            \midrule
            Churn (FID) & 1 & \textbf{23.4} & 83.8 & 87.0 & 93.4 & \textbf{37.8} & 71.5 & 71.7 & 73.6 \\
            Churn (Acc) & 1 & 35.5 & \textbf{86.7} & 91.6 & \textbf{95.3} & 51.4 & 76.3 & 76.9 & \textbf{79.4} \\
            Stochastic DDIM       & 1 & 34.2 & 86.2 & 90.1 & 94.9 & 47.1 & 77.4 & 78.0 & \textbf{79.4} \\
            Deterministic DDIM    & 1 & 50.4 & 85.7 & \textbf{91.8} & 94.9 & 60.6 & \textbf{77.5} & \textbf{78.2} & 78.9 \\
            \midrule
            Churn (FID) & 10 & \textbf{5.01} & 90.5 & 94.6 & 97.3 & 18.6 & 80.4 & 81.1 & 84.9 \\
            Churn (Acc) & 10 & 6.65 & \textbf{90.8} & 94.8 & \textbf{98.1} & 19.1 & 81.1 & 83.0 & \textbf{86.2} \\
            Stochastic DDIM       & 10 & 6.13 & 90.4 & 94.6 & 97.5 & \textbf{17.4} & \textbf{82.6} & \textbf{84.1} & \textbf{86.2} \\
            Deterministic DDIM    & 10 & 10.9 & 90.5 & \textbf{95.2} & 97.7 & 19.7 & 81.9 & 83.9 & \textbf{86.2} \\
            \bottomrule
        \end{tabular}
    }
\end{table}

\begin{table}
    \caption{Best Churn sampler settings for FID metric.}
    \label{tab:sampler_settings_for_best_models_fid}
    \centering
    \begin{tabular}{l c c c c c c c c}
        \toprule
        \multirow{2}{*}{Parameter} & \multicolumn{3}{c}{MNIST} & \multicolumn{3}{c}{Fashion-MNIST} & \multicolumn{2}{c}{CelebA} \\
        \cmidrule(l{1em}r{1em}){2-4} \cmidrule(l{1em}r{1em}){5-7} \cmidrule(l{1em}r{1em}){8-9}
        & $\varepsilon{=}0.2$ & $\varepsilon{=}1$ & $\varepsilon{=}10$ & $\varepsilon{=}0.2$ & $\varepsilon{=}1$ & $\varepsilon{=}10$ & $\varepsilon{=}1$ & $\varepsilon{=}10$ \\
        \midrule 
        $ w $              & 1    & 0    & 0.25 & 2    & 1     & 0.25   & N/A   & N/A  \\
        $S_\mathrm{churn}$ & 200  & 100  & 50   & 150  & 50    & 25     & 200   & 50 \\
        $S_\mathrm{min}$   & 0.01 & 0.05 & 0.05 & 0.02 & 0.025 & 0.2    & 0.005 & 0.005 \\
        $S_\mathrm{max}$   & 50   & 50   & 50   & 10   & 50    & 50     & 50    & 50  \\
        $S_\mathrm{noise}$ & 1    & 1    & 1    & 1    &  1    & 1      & 1     & 1 \\
        \bottomrule
    \end{tabular}
\end{table}
\begin{table}
    \caption{Best Churn sampler settings for downstream CNN accuracy.}
    \label{tab:sampler_settings_for_best_models_acc}
    \centering
    \begin{tabular}{l c c c c c c}
        \toprule
        \multirow{2}{*}{Parameter} & \multicolumn{3}{c}{MNIST} & \multicolumn{3}{c}{Fashion-MNIST} \\
        \cmidrule(l{1em}r{1em}){2-4} \cmidrule(l{1em}r{1em}){5-7}
        & $\varepsilon{=}0.2$ & $\varepsilon{=}1$ & $\varepsilon{=}10$ & $\varepsilon{=}0.2$ & $\varepsilon{=}1$ & $\varepsilon{=}10$ \\
        \midrule 
        $ w $              & 0.125 & 0    & 0     & 0.125 & 0     & 0    \\
        $S_\mathrm{churn}$ & 10    & 10   & 10    & 5     & 10    & 10    \\
        $S_\mathrm{min}$   & 0.2   & 0.1  & 0.025 & 0.02  & 0.025 & 0.1  \\
        $S_\mathrm{max}$   & 10    & 50   & 50    & 80    & 50    & 50  \\
        $S_\mathrm{noise}$ & 1.005 & 1    & 1     & 1.005 & 1     & 1   \\
        \bottomrule
    \end{tabular}
\end{table}

\subsubsection{Distribution Matching Analysis}
We perform a distribution matching analysis on MNIST using the CNN classifier, that is, computing per-class downstream accuracies at different privacy levels. In~\Cref{tab:distribution_matching_analysis}, we can see that with increased privacy (lower $\varepsilon$) classification performance degrades roughly similar for most digits, implying that our DPDMs learn a well-balanced distribution and cover all modes of the data distribution faithfully even under strong privacy. The only result that stands out to us is that class 1 appears significantly easier than all other classes for $\varepsilon {=} 0.2$. However, that may potentially be due to the class 1 in MNIST being a ``line'' which may be easy to classify by a CNN even if training data is noisy.
\begin{table}
    \centering
    \caption{Distribution matching analysis for MNIST using downstream CNN accuracy.}
    \begin{tabular}{l c c c c c c c c c c}
    \toprule
    Class &0 &1 &2 &3 &4 &5 &6 &7 &8 &9 \\
    \midrule
    $\varepsilon{=}10$ & 	98.5	&98.7	&98.7	&98.4	&99.1	&99.0	&98.2	&97.7	&98.1	&96.1\\
    $\varepsilon{=}1$ & 95.4	&98.9	&96.7	&96.1	&96.1	&97.3	&96.6	&91.3	&92.3	&93.2\\
    $\varepsilon{=}0.2$ & 81.9	&96.0	&80.2	&82.3	&87.3	&85.2	&90.7	&86.9	&83.2	 &83.5\\
    \bottomrule
    \end{tabular}
    \label{tab:distribution_matching_analysis}
\end{table}

\subsection{Extended Qualitative Results} \label{sec:extended_qualitative_results}
In this section, we show additional generated samples by our DPDMs. On MNIST, see~\Cref{fig:mnist_eps_10}, ~\Cref{fig:mnist_eps_1}, and ~\Cref{fig:mnist_eps_0_2} for $\varepsilon{=}10$, $\varepsilon{=}1$, and $\varepsilon{=}0.2$, respectively. On Fashion-MNIST, see~\Cref{fig:fmnist_eps_10}, ~\Cref{fig:fmnist_eps_1}, and ~\Cref{fig:fmnist_eps_0_2} for $\varepsilon{=}10$, $\varepsilon{=}1$, and $\varepsilon{=}0.2$, respectively. On CelebA, see~\Cref{fig:celeba_eps_10_app} and~\Cref{fig:celeba_eps_1_app} for $\varepsilon{=}10$ and $\varepsilon{=}1$, respectively. \TC{For a visual comparison of our CelebA samples to other works in the literature, see~\Cref{fig:celeba_comparison}.}
\begin{figure}
    \centering
    \begin{subfigure}[b]{0.47\textwidth}
        \centering
        \includegraphics[scale=0.4]{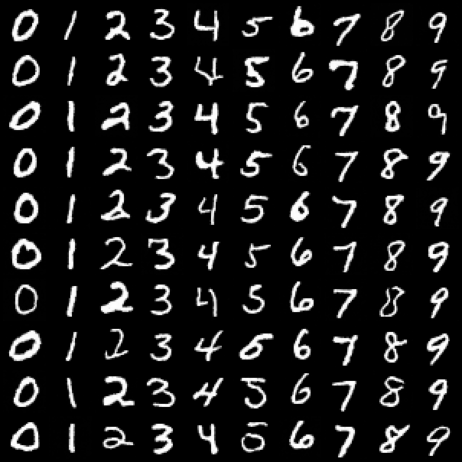}
    \end{subfigure}
    \hskip -5em
    \begin{subfigure}[b]{0.47\textwidth}
        \centering
        \includegraphics[scale=0.4]{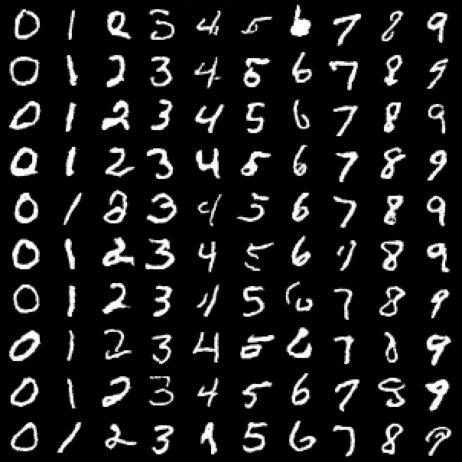}
    \end{subfigure}
    \par\vskip 0.5em
    \begin{subfigure}[b]{0.47\textwidth}
        \centering
        \includegraphics[scale=0.4]{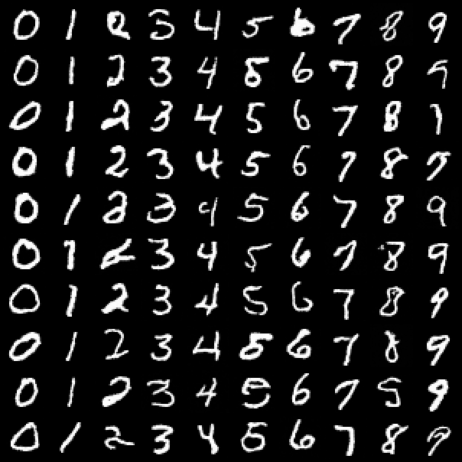}
    \end{subfigure}
    \hskip -5em
    \begin{subfigure}[b]{0.47\textwidth}
        \centering
        \includegraphics[scale=0.4]{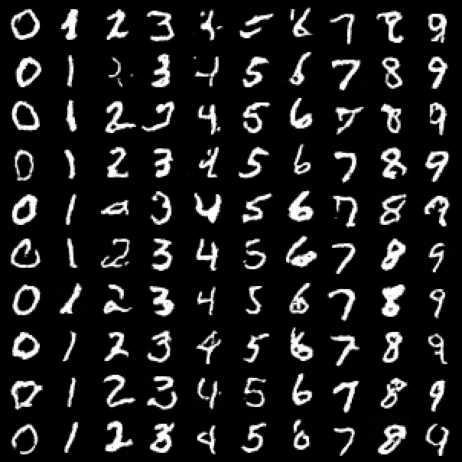}
    \end{subfigure}
    \caption{Additional images generated by DPDM on MNIST for $\varepsilon{=}10$ using Churn (FID) (\emph{top left}), Churn (Acc) (\emph{top right}), stochastic DDIM (\emph{bottom left}), and deterministic DDIM (\emph{bottom right}).}
    \label{fig:mnist_eps_10}
\end{figure}
\begin{figure}
    \centering
    \begin{subfigure}[b]{0.47\textwidth}
        \centering
        \includegraphics[scale=0.4]{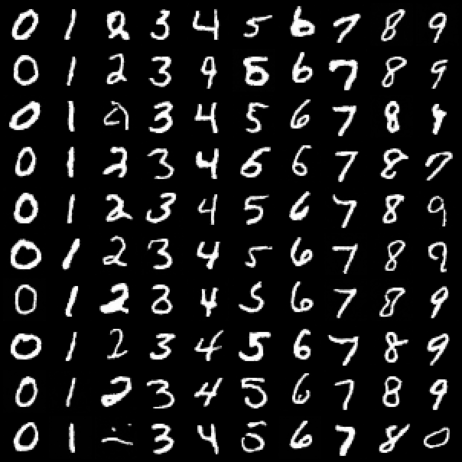}
    \end{subfigure}
    \hskip -5em
    \begin{subfigure}[b]{0.47\textwidth}
        \centering
        \includegraphics[scale=0.4]{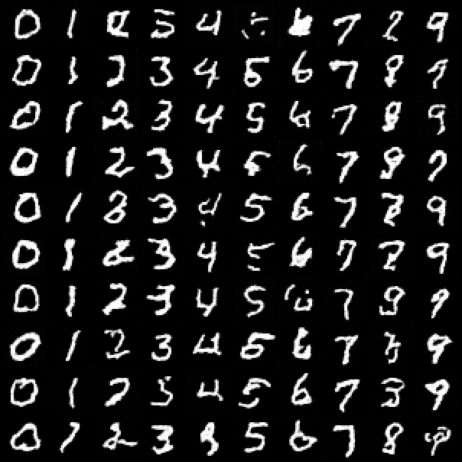}
    \end{subfigure}
    \par\vskip 0.5em
    \begin{subfigure}[b]{0.47\textwidth}
        \centering
        \includegraphics[scale=0.4]{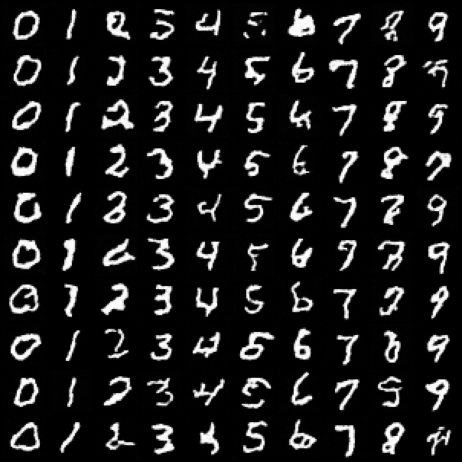}
    \end{subfigure}
    \hskip -5em
    \begin{subfigure}[b]{0.47\textwidth}
        \centering
        \includegraphics[scale=0.4]{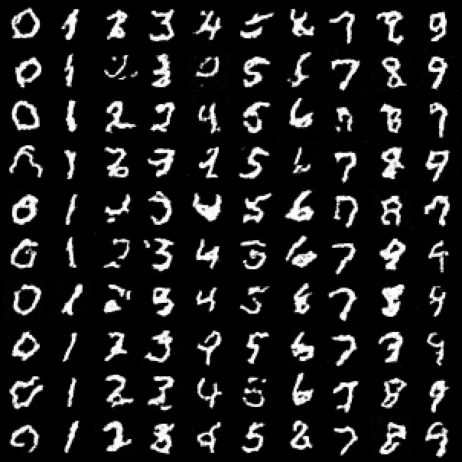}
    \end{subfigure}
    \caption{Additional images generated by DPDM on MNIST for $\varepsilon{=}1$ using Churn (FID) (\emph{top left}), Churn (Acc) (\emph{top right}), stochastic DDIM (\emph{bottom left}), and deterministic DDIM (\emph{bottom right}).}
    \label{fig:mnist_eps_1}
\end{figure}
\begin{figure}
    \centering
    \begin{subfigure}[b]{0.47\textwidth}
        \centering
        \includegraphics[scale=0.4]{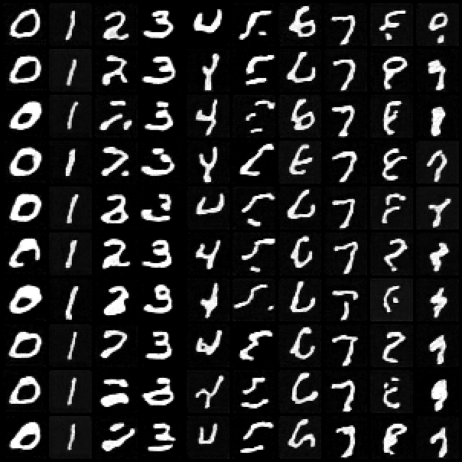}
    \end{subfigure}
    \hskip -5em
    \begin{subfigure}[b]{0.47\textwidth}
        \centering
        \includegraphics[scale=0.4]{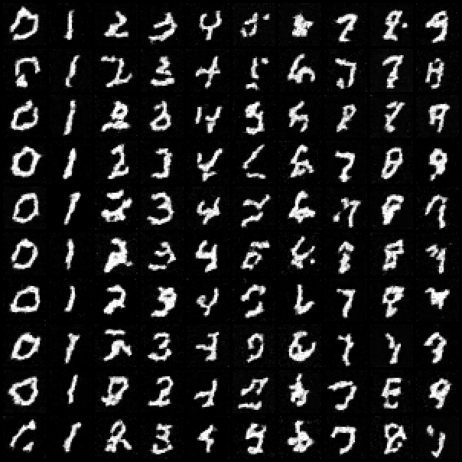}
    \end{subfigure}
    \par\vskip 0.5em
    \begin{subfigure}[b]{0.47\textwidth}
        \centering
        \includegraphics[scale=0.4]{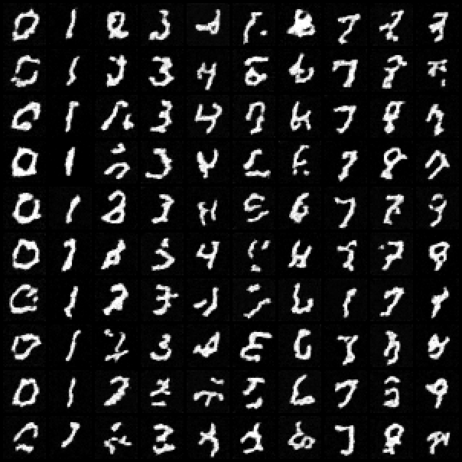}
    \end{subfigure}
    \hskip -5em
    \begin{subfigure}[b]{0.47\textwidth}
        \centering
        \includegraphics[scale=0.4]{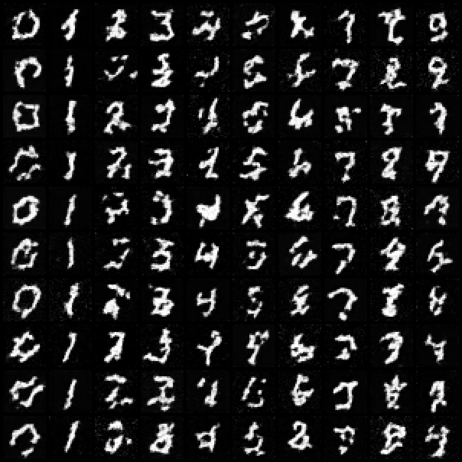}
    \end{subfigure}
    \caption{Additional images generated by DPDM on MNIST for $\varepsilon{=}0.2$ using Churn (FID) (\emph{top left}), Churn (Acc) (\emph{top right}), stochastic DDIM (\emph{bottom left}), and deterministic DDIM (\emph{bottom right}).}
    \label{fig:mnist_eps_0_2}
\end{figure}
\begin{figure}
    \centering
    \begin{subfigure}[b]{0.47\textwidth}
        \centering
        \includegraphics[scale=0.4]{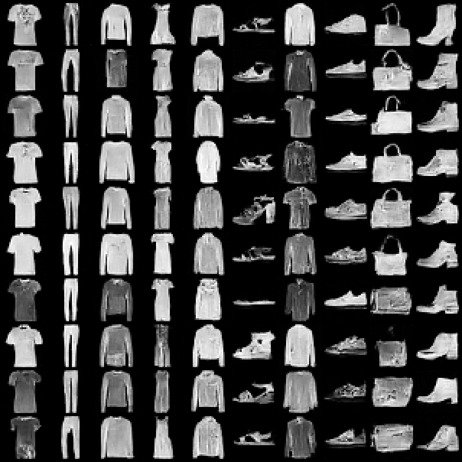}
    \end{subfigure}
    \hskip -5em
    \begin{subfigure}[b]{0.47\textwidth}
        \centering
        \includegraphics[scale=0.4]{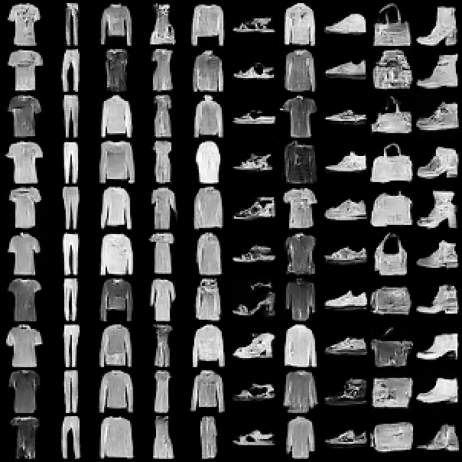}
    \end{subfigure}
    \par\vskip 0.5em
    \begin{subfigure}[b]{0.47\textwidth}
        \centering
        \includegraphics[scale=0.4]{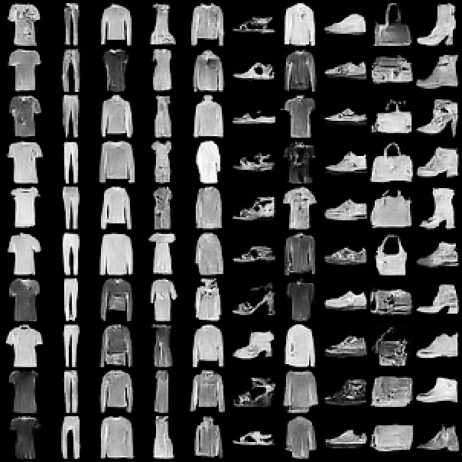}
    \end{subfigure}
    \hskip -5em
    \begin{subfigure}[b]{0.47\textwidth}
        \centering
        \includegraphics[scale=0.4]{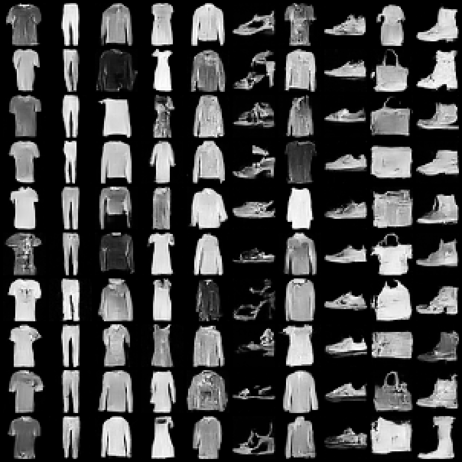}
    \end{subfigure}
    \caption{Additional images generated by DPDM on Fashion-MNIST for $\varepsilon{=}10$ using Churn (FID) (\emph{top left}), Churn (Acc) (\emph{top right}), stochastic DDIM (\emph{bottom left}), and deterministic DDIM (\emph{bottom right}).}
    \label{fig:fmnist_eps_10}
\end{figure}
\begin{figure}
    \centering
    \begin{subfigure}[b]{0.47\textwidth}
        \centering
        \includegraphics[scale=0.4]{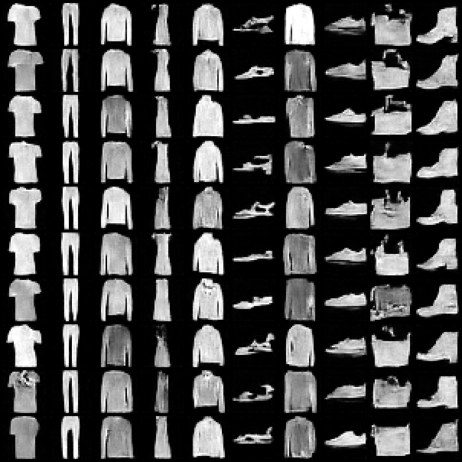}
    \end{subfigure}
    \hskip -5em
    \begin{subfigure}[b]{0.47\textwidth}
        \centering
        \includegraphics[scale=0.4]{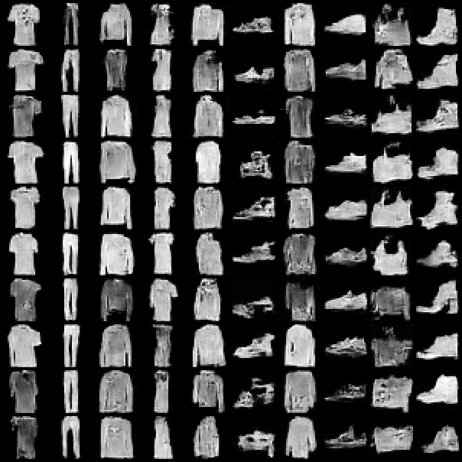}
    \end{subfigure}
    \par\vskip 0.5em
    \begin{subfigure}[b]{0.47\textwidth}
        \centering
        \includegraphics[scale=0.4]{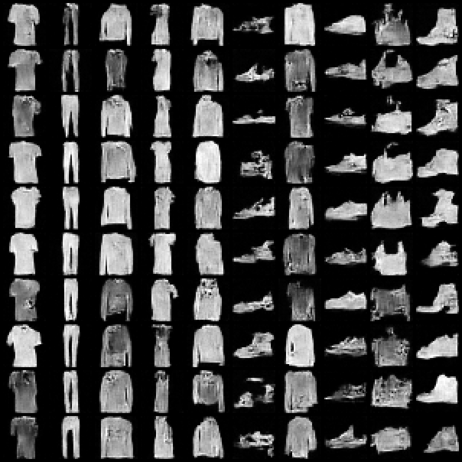}
    \end{subfigure}
    \hskip -5em
    \begin{subfigure}[b]{0.47\textwidth}
        \centering
        \includegraphics[scale=0.4]{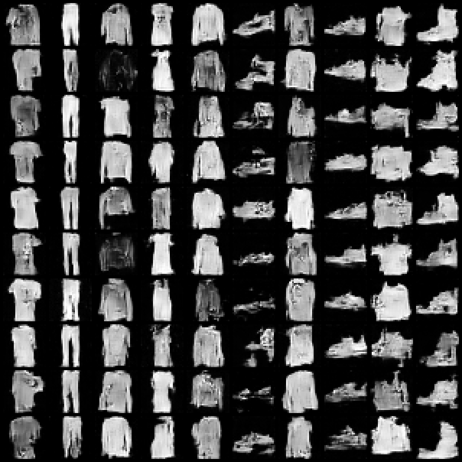}
    \end{subfigure}
    \caption{Additional images generated by DPDM on Fashion-MNIST for $\varepsilon{=}1$ using Churn (FID) (\emph{top left}), Churn (Acc) (\emph{top right}), stochastic DDIM (\emph{bottom left}), and deterministic DDIM (\emph{bottom right}).}
    \label{fig:fmnist_eps_1}
\end{figure}
\begin{figure}
    \centering
    \begin{subfigure}[b]{0.47\textwidth}
        \centering
        \includegraphics[scale=0.4]{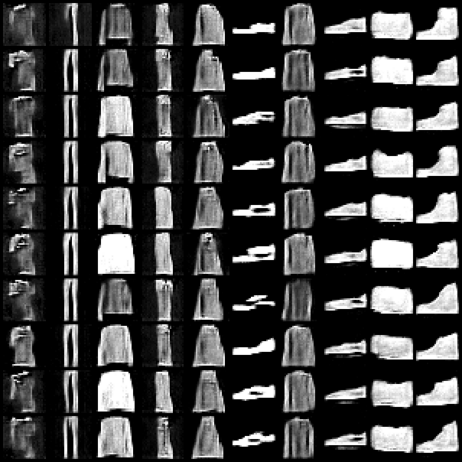}
    \end{subfigure}
    \hskip -5em
    \begin{subfigure}[b]{0.47\textwidth}
        \centering
        \includegraphics[scale=0.4]{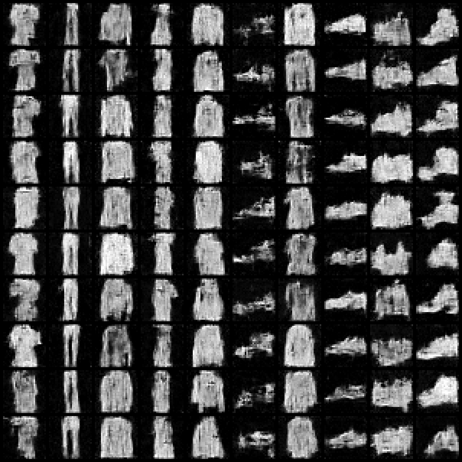}
    \end{subfigure}
    \par\vskip 0.5em
    \begin{subfigure}[b]{0.47\textwidth}
        \centering
        \includegraphics[scale=0.4]{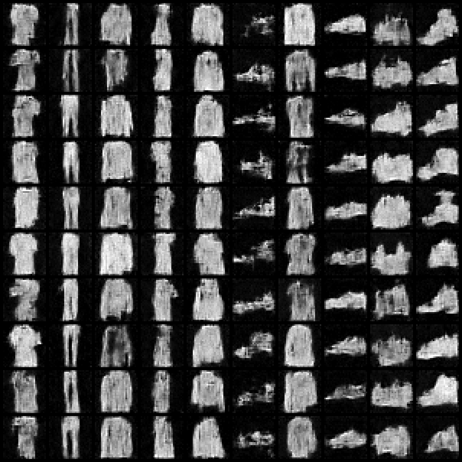}
    \end{subfigure}
    \hskip -5em
    \begin{subfigure}[b]{0.47\textwidth}
        \centering
        \includegraphics[scale=0.4]{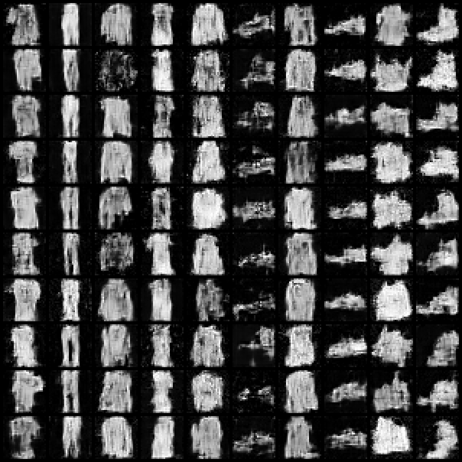}
    \end{subfigure}
    \caption{Additional images generated by DPDM on Fashion-MNIST for $\varepsilon{=}0.2$ using Churn (FID) (\emph{top left}), Churn (Acc) (\emph{top right}), stochastic DDIM (\emph{bottom left}), and deterministic DDIM (\emph{bottom right}).}
    \label{fig:fmnist_eps_0_2}
\end{figure}
\begin{figure}
    \centering
    \begin{subfigure}[b]{\textwidth}
        \centering
        \includegraphics[scale=0.57]{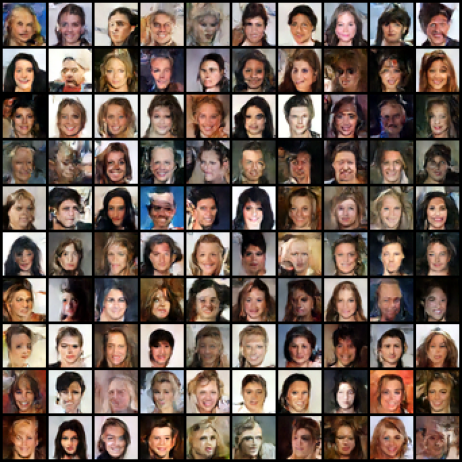}
    \end{subfigure}
    \par\vskip 1em
    \begin{subfigure}[b]{\textwidth}
        \centering
        \includegraphics[scale=0.57]{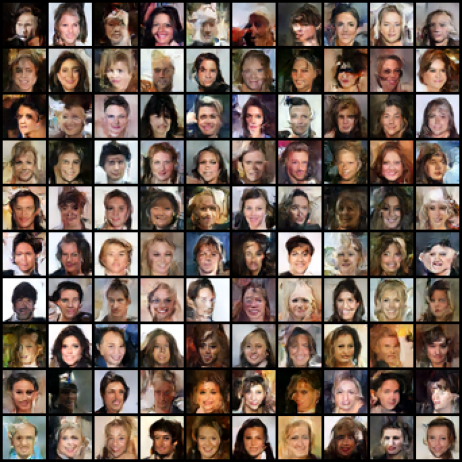}
    \end{subfigure}
    \par\vskip 1em
        \begin{subfigure}[b]{\textwidth}
        \centering
        \includegraphics[scale=0.57]{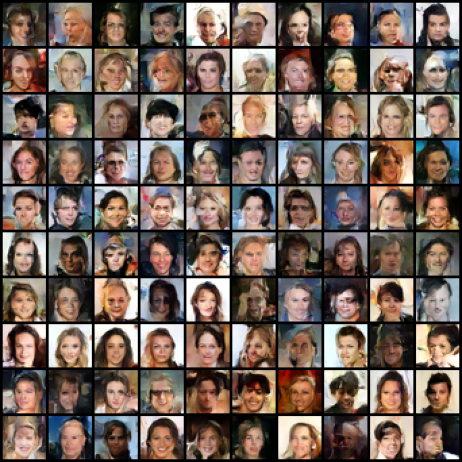}
    \end{subfigure}
    \caption{Additional images generated by DPDM on CelebA for $\varepsilon{=}10$ using Churn (\emph{top}), stochastic DDIM (\emph{middle}), and deterministic DDIM (\emph{bottom}).}
    \label{fig:celeba_eps_10_app}
\end{figure}
\begin{figure}
    \centering
    \begin{subfigure}[b]{\textwidth}
        \centering
        \includegraphics[scale=0.57]{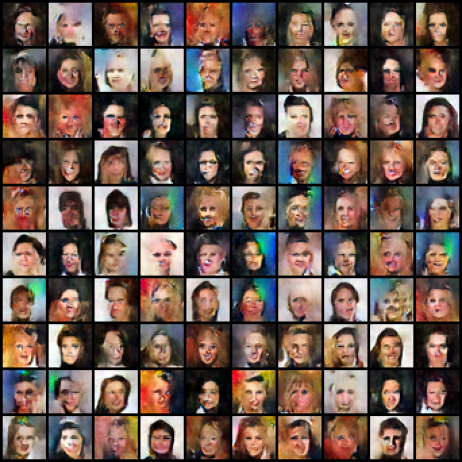}
    \end{subfigure}
    \par\vskip 1em
    \begin{subfigure}[b]{\textwidth}
        \centering
        \includegraphics[scale=0.57]{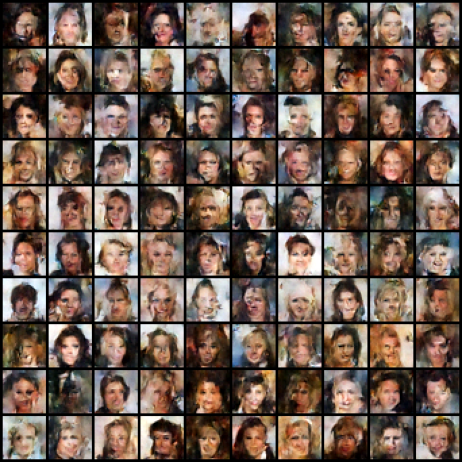}
    \end{subfigure}
    \par\vskip 1em
        \begin{subfigure}[b]{\textwidth}
        \centering
        \includegraphics[scale=0.57]{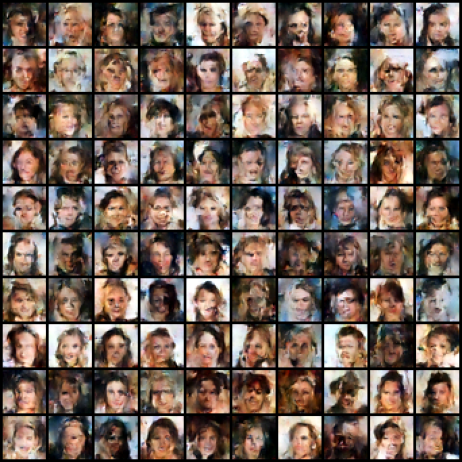}
    \end{subfigure}
    \caption{Additional images generated by DPDM on CelebA for $\varepsilon{=}1$ using Churn (\emph{top}), stochastic DDIM (\emph{middle}), and deterministic DDIM (\emph{bottom}).}
    \label{fig:celeba_eps_1_app}
\end{figure}
\begin{figure}
    \centering
    \begin{minipage}[c]{0.55\textwidth}
    \includegraphics[width=\textwidth]{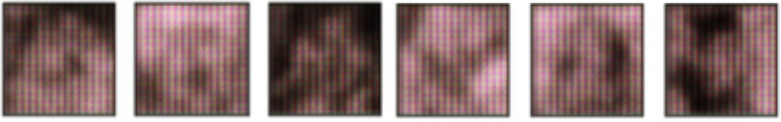}
    \includegraphics[trim=0 23 162 0,clip,width=0.495\textwidth]{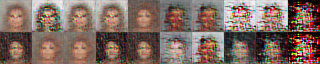}\hspace{-1pt}
    \includegraphics[trim=162 23 0 0,clip,width=0.495\textwidth]{images/dp_sinkhorn/celeba/samples_ep20.png}
    \includegraphics[trim=5 560 3820 5,clip,width=0.495\textwidth]{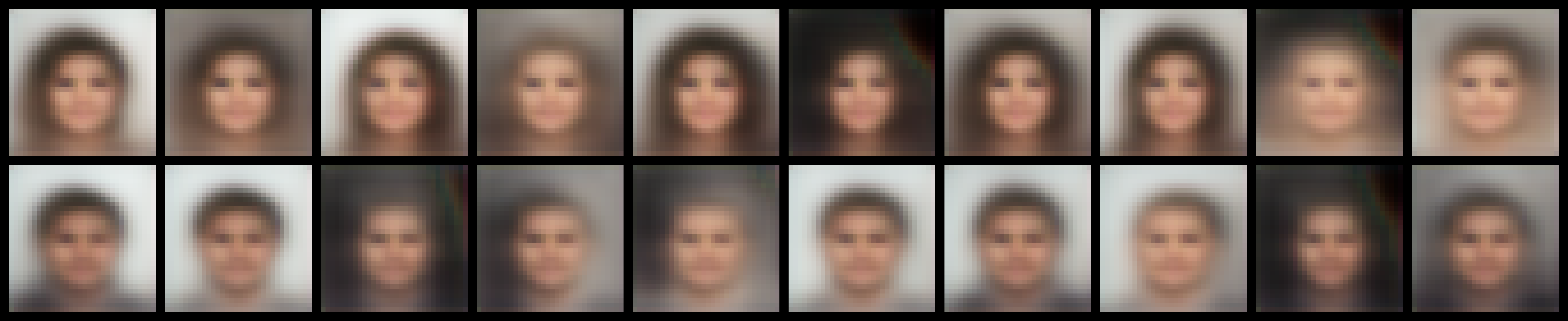}\hspace{-1pt}
    \includegraphics[trim=5 5 3820 560,clip,width=0.495\textwidth]{images/dp_sinkhorn/celeba/image_3.png}
    \rule[4pt]{\textwidth}{1pt}
    \includegraphics[trim=0 0 133 0,clip,width=\textwidth]{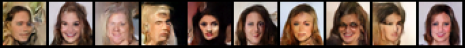}
    \end{minipage}
    \caption{CelebA images generated by \textbf{DataLens} (\emph{1st row}), \textbf{DP-MEPF} (\emph{2nd row}), \textbf{DP-Sinkhorn} (\emph{3rd row}), and our \textbf{DPDM} (\emph{4th row}) for DP-$\varepsilon{=}10$.}
        \label{fig:celeba_comparison}
\end{figure}

\section{\TC{Additional Experiments on More Challenging Problems}} \label{sec:add_experiments}
\subsection{Diverse Datasets}
We provide results for additional experiments on challenging diverse datasets, namely, CIFAR-10~\citep{krizhevsky2009learning} and ImageNet~\citep{deng2009imagenet} (resolution 32x32), both in the class-conditional setting similar to our other experiments on MNIST and Fashion-MNIST. To the best of our knowledge, we are the first to attempt pure DP image generation on ImagenNet. 

For both experiments, we use the same neural network architecture as for CelebA (32x32) in the main paper; see model hyperparameters in~\Cref{tab:model_hyperparameters_and_training_details}. On CIFAR-10, we train for 500 epochs using noise multiplicity $K=32$ under the privacy setting $(\varepsilon=10, \delta=10^{-5})$. In ImageNet, we train for 100 epochs using noise multiplicity $K=8$ under the privacy setting $(\varepsilon=10, \delta=7 \cdot 10^{-7})$; training for longer (or using larger $K$) was not possible on ImageNet due to its sheer size. We achieve FIDs of 97.7 and 61.3 for CIFAR-10 and ImageNet, respectively. No previous works reported FID scores on these datasets and for these privacy settings, but we hope that our scores can serve as reference points for future work. In~\Cref{fig:more_at_32}, we show samples for both datasets from our DPDMs and visually compare to an existing DP generative modeling work on CIFAR-10, DP-MERF~\citep{harder2021dp}. Our DPDMs cannot learn clear objects; however, overall image/pixel statistics seem to be captured correctly. In contrast, the DP-MERF baseline collapses entirely. We are not aware of any other works tackling these tasks. Hence, we believe that DPDMs represent a major step forward.
\begin{figure}
     \centering
     \begin{subfigure}[b]{0.4\textwidth}
         \centering
         \includegraphics[width=\textwidth]{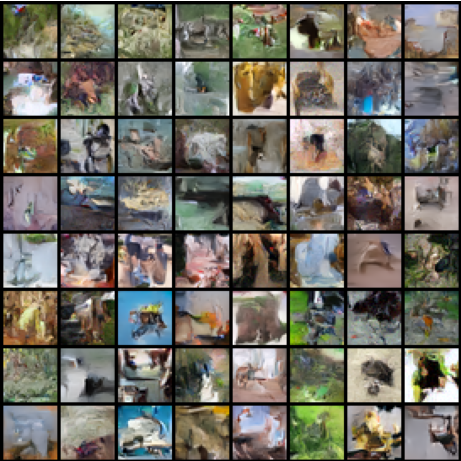}
         \caption{DPDM (\emph{ours}) (ImageNet)}
     \end{subfigure}
     \hfill
     \begin{subfigure}[b]{0.4\textwidth}
         \centering
         \includegraphics[width=\textwidth]{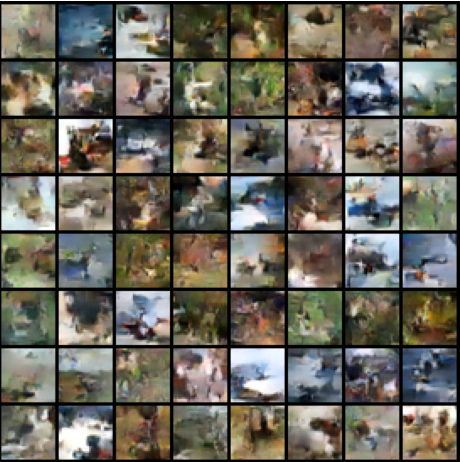}
         \caption{DPDM (\emph{ours}) (CIFAR-10)}
     \end{subfigure}
     \hfill
     \begin{subfigure}[b]{0.10\textwidth}
         \centering
         \includegraphics[width=\textwidth]{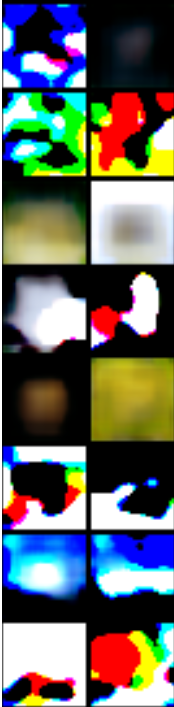}
         \caption{}
     \end{subfigure}
    \caption{Additional experiments on challenging diverse datasets. Samples from our DPDM on ImageNet and CIFAR-10, as well as CIFAR-10 samples from DP-MERF~\citep{harder2021dp} in (c).}
    \label{fig:more_at_32}
\end{figure}
\subsection{Higher Resolution}
We provide results for additional experiments on CelebA at higher resolution (64x64). To accommodate the higher resolution, we added an additional upsampling/downsampling layer to the U-Net, which results in roughly a 11\% increase in the number of parameters, from 1.80M to 2.00M parameters. The only row that changed in the CelebA model hyperparameter table (\Cref{tab:model_hyperparameters_and_training_details}) is the one about the channel multipliers. It is adapted from (1,2,2) to (1,2,2,2). We train for 300 epochs using $K=8$ under the privacy setting $(\varepsilon=10, \delta=10^{-6})$. We achieve an FID of 78.3 (again, for reference; no previous works reported quantitative results on this task). In~\Cref{fig:more_at_64}, we show samples and visually compare to existing DP generative modeling work on CelebA at 64x64 resolution. Although the faces generated by our DPDM are somewhat distorted, the model overall is able to clearly generate face-like structures. In contrast, DataLens generates incoherent very low quality outputs. No other existing works tried generating 64x64 CelebA images with rigorous DP guarantees, to the best of our knowledge. Also this experiment implies that DPDMs can be considered a major step forward for DP generative modeling. 
\begin{figure}
     \centering
     \begin{subfigure}[b]{0.8\textwidth}
         \centering
         \includegraphics{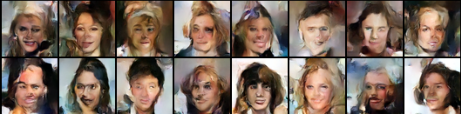}
         \caption{DPDM (\emph{ours})}
     \end{subfigure}
     \hfill
     \begin{subfigure}[b]{0.8\textwidth}
         \centering
         \includegraphics[width=\textwidth]{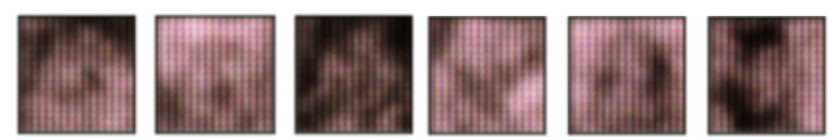}
         \caption{DataLens~\citep{wang2021datalens}}
     \end{subfigure}
     \caption{Additional experiments on CelebA at higher resolution (64x64). Samples from our method and DataLens~\citep{wang2021datalens}.}
    \label{fig:more_at_64}
\end{figure}

\section{Ethics, Reproducibility, Limitations and Future Work} \label{sec:ethics_and_reproducibility}
\vspace{-1mm}
\TIM{Our work improves the state-of-the-art in differentially private generative modeling and we validate our proposed DPDMs on image synthesis benchmarks. Generative modeling of images has promising applications, for example for digital content creation and artistic expression~\citep{Bailey2020thetools}, but it can in principle also be used for malicious purposes~\citep{vaccari2020deepfakes,mirsky2021deepfakesurvey,nguyen2021deep}. However, differentially private image generation methods, including our DPDM, are currently not able to produce photo-realistic content, which makes such abuse unlikely. 

As discussed in~\Cref{sec:introduction}, a severe issue in modern generative models is that they can easily overfit to the data distribution, thereby closely reproducing training samples and leaking privacy of the training data. Our DPDMs aim to rigorously address such problems via the well-established DP framework and fundamentally protect the privacy of the training data and prevent overfitting to individual data samples. This is especially important when training generative models on diverse and privacy-sensitive data. Therefore, DPDMs can potentially act as an effective medium for data sharing without needing to worry about data privacy, which we hope will benefit the broader machine learning community.
Note, however, that although DPDM provides privacy protection in generative learning, information about individuals cannot be eliminated entirely, as no useful model can be learned under DP-$(\varepsilon{=}0, \delta{=}0)$. This should be communicated clearly to dataset participants.

An important question for future work is scaling DPDMs to \textit{(a)} larger datasets and \textit{(b)} more complicated, potentially higher-resolution image datasets. Regarding \textit{(a)}, recall that an increased number of data points leads to less noise injection during DP-SGD training (while keeping all other parameters fixed). Therefore, we believe that DPDMs should scale well with respect to larger datasets; see, for example, our results on ImageNet, a large dataset, which are to some degree better than the results on CIFAR-10, a relatively small dataset (\Cref{sec:add_experiments}). Regarding  \textit{(b)}, however, scaling to more complicated, potentially higher-resolution datasets is challenging if the number of data points is kept fixed. Higher-resolution data requires larger neural networks, but this comes with more parameters, which can be problematic during DP training (see \Cref{sec:dpsgd_training}). Using parameter-efficient architectures for diffusion models may be promising; for instance, see concurrent work by~\cite{jabri2023scalable}. Generally, we believe that scaling both \textit{(a)} and \textit{(b)} are interesting avenues for future work.

To aid reproducibility of the results and methods presented in our paper, we made source code to reproduce all quantitative and qualitative results of the paper publicly available, including detailed instructions. Moreover, all training details and hyperparameters are already described in detail in this appendix, in particular in~\Cref{sec:model_and_implementation_details}.}

\end{document}